\documentclass{article}
\usepackage[margin=1in]{geometry}
\usepackage{authblk}

\usepackage{hyperref}
\usepackage{url}

\usepackage{microtype}
\usepackage{graphicx}
\usepackage{float}
\usepackage{booktabs} % for professional tables
\usepackage{bbding}

\usepackage{hyperref}
\usepackage{array}
\newcolumntype{P}[1]{>{\centering\arraybackslash}p{#1}}

% For theorems and such
\usepackage{amsmath}
\usepackage{amssymb}
\usepackage{mathtools}
\usepackage{amsthm}
\usepackage[authoryear]{natbib}
\usepackage{enumerate}

\usepackage{caption}
\usepackage{subcaption}
\usepackage{booktabs} % commands to create good-looking tables
\usepackage{tikz} % nice language for creating drawings and diagrams

\usetikzlibrary{shapes.geometric, arrows}

\tikzstyle{startstop} = [rectangle, rounded corners, 
minimum width=1cm, 
minimum height=0.4cm,
text centered, 
draw=black ]

\tikzstyle{io} = [trapezium, 
trapezium stretches=true, % A later addition
trapezium left angle=70, 
trapezium right angle=110, 
minimum width=3.0cm, 
minimum height=1cm, text centered, 
draw=black]

\tikzstyle{process} = [rectangle, 
minimum width=3.0cm, 
minimum height=1cm, 
text centered, 
text width=3.0cm, 
draw=black, 
fill]

\tikzstyle{decision} = [diamond, 
minimum width=3.0cm, 
minimum height=1cm, 
text centered, 
draw=black, 
fill=green!30]
\tikzstyle{arrow} = [thick,->,>=stealth]

\usepackage[american]{babel}
\usepackage{amsmath}
\usepackage{amsthm}
\usepackage{amssymb} % for \longrightarrow
\usepackage[T1]{fontenc} % for \symbol{92} 
\usepackage[capitalise]{cleveref}
\usepackage{ifthen}

\usepackage[ ]{algorithmic}
\usepackage[vlined,ruled, ]{algorithm2e}

% Tikz settings optimized for causal graphs.
% Just copy-paste this part
\usetikzlibrary{shapes,decorations,arrows,calc,arrows.meta,fit,positioning}
\tikzset{
    -Latex,auto,node distance =1 cm and 1 cm,semithick,
    state/.style ={ellipse, draw, minimum width = 0.7 cm},
    point/.style = {circle, draw, inner sep=0.04cm,fill,node contents={}},
    bidirected/.style={Latex-Latex,dashed},
    el/.style = {inner sep=2pt, align=left, sloped}
}

\newtheorem{theorem}{Theorem}
\newtheorem{corollary}{Corollary}

\newtheorem{lemma}{Lemma}
\newtheorem{remark}{Remark}
\newtheorem{definition}{Definition}
\newtheorem{assumption}{Assumption}
% limits underneath
 % Jan Hlavacek
   % Jan Hlavacek
   % rbp

 % AlfC

          % ASdeL
   % ASdeL

% limits on side
 % Jan Hlavacek
   % Jan Hlavacek
 % ASdeL

%% Please use the following commands to enforce consistency and accuracy

\newcommand{\norm}[1]{\left\lVert #1\right\rVert}
\newcommand{\kl}[2]{\mathrm{KL}(#1 \parallel #2 )}
\newcommand{\KL}[2]{\mathrm{KL}\left(#1 \parallel #2 \right)}
\newcommand{\mcS}{\mathcal{S}}

\newcommand{\abs}[1]{\left|#1\right|}
\newcommand{\until}[1]{\{1,\dots, #1\}}
\newcommand{\tuple}[1]{(#1)}

\newcommand{\supscr}[2]{#1^{\textup{#2}}}
\newcommand{\setdef}[2]{\{#1 \; | \; #2\}}
\newcommand{\seqdef}[2]{\{#1\}_{#2}}
\newcommand{\bigsetdef}[2]{\big\{#1 \; | \; #2\big\}}
\newcommand{\Bigsetdef}[2]{\Big\{#1 \; \Big| \; #2\Big\}}

\newcommand{\tupdef}[2]{(#1)_{#2}}

\newcommand{\union}{\operatorname{\cup}}
\newcommand{\intersection}{\ensuremath{\operatorname{\cap}}}
\newcommand{\indicator}[1]{\mathbf{1}\left\{#1\right\}}

\newcommand{\subject}{\text{subject to}}
\def\checkmark{\tikz\fill[scale=0.4](0,.35) -- (.25,0) -- (1,.7) -- (.25,.15) -- cycle;} 
\DeclareMathOperator*{\argmax}{arg\,max}
\DeclareMathOperator*{\argmin}{arg\,min}
\DeclareMathOperator*{\maximize}{maximize}

\def\prob{\mathbb{P}}
\def\expt{\mathbb{E}}
\def\real{\mathbb{R}}

\def\natural{\mathbb{N}}
\usepackage{tikz}

% if you use cleveref..

%%%%%%%%%%%%%%%%%%%%%%%%%%%%%%%%
% THEOREMS
%%%%%%%%%%%%%%%%%%%%%%%%%%%%%%%%

% Todonotes is useful during development; simply uncomment the next line
%    and comment out the line below the next line to turn off comments
%\usepackage[disable,textsize=tiny]{todonotes}
\usepackage[textsize=tiny]{todonotes}

\usepackage[utf8]{inputenc} % allow utf-8 input
\usepackage[T1]{fontenc}    % use 8-bit T1 fonts
\usepackage{hyperref}       % hyperlinks
\usepackage{url}            % simple URL typesetting
\usepackage{booktabs}       % professional-quality tables
\usepackage{amsfonts}       % blackboard math symbols
\usepackage{nicefrac}       % compact symbols for 1/2, etc.
\usepackage{microtype}      % microtypography
\usepackage{lipsum}
\usepackage{fancyhdr}       % header
\usepackage{graphicx}       % graphics
\graphicspath{{media/}}     % organize your images and other figures under media/ folder

%Header
\pagestyle{fancy}
\thispagestyle{empty}
\rhead{ \textit{ }} 
% mine

\usepackage{caption}
\usepackage{subcaption}
\usepackage{comment}

\usepackage{amsmath} 
\usepackage{xspace}

% \newcommand{\rm}{\mathrm}

%%%%%%%%%%%%%%%%%%%%%%%%%%%%%%%%
% THEOREMS
% For theorems and such
\usepackage{amsmath}
\usepackage{amssymb}
\usepackage{mathtools}
\usepackage{amsthm}
%%%%%%%%%%%%%%%%%%%%%%%%%%%%%%%%
\theoremstyle{plain}

\usepackage{lipsum,lineno}
\usepackage{tabularray}

\usepackage[utf8]{inputenc} % allow utf-8 input
\usepackage[T1]{fontenc}    % use 8-bit T1 fonts
\usepackage{hyperref}       % hyperlinks
\usepackage{url}            % simple URL typesetting
\usepackage{booktabs}       % professional-quality tables
\usepackage{amsfonts}       % blackboard math symbols
\usepackage{nicefrac}       % compact symbols for 1/2, etc.
\usepackage{microtype}      % microtypography
% \usepackage{xcolor}         % colors

% For theorems and such
\usepackage{amsmath}
\usepackage{amssymb}
\usepackage{mathtools}
\usepackage{amsthm}

% if you use cleveref..

%%%%%%%%%%%%%%%%%%%%%%%%%%%%%%%%
% ALGORITHMS
%%%%%%%%%%%%%%%%%%%%%%%%%%%%%%%%

\usepackage{xargs}                      % Use more than one optional 
\usepackage{xcolor}

\usepackage{tikz}
\usetikzlibrary{positioning, calc, shapes.geometric, shapes, shapes.multipart, arrows.meta, arrows, decorations.markings, external, trees}

\usepackage{scalefnt}
\usetikzlibrary{shapes.misc}
\usetikzlibrary{positioning, calc, shapes.geometric, shapes, shapes.multipart, arrows.meta, arrows, decorations.markings, external, trees, fit}
\tikzset{
	-Latex,auto,node distance =1 cm and 1 cm,semithick,
	state/.style ={ellipse, draw, minimum width = 0.7 cm},
	point/.style = {circle, draw, inner sep=0.04cm,fill,node contents={}},
	% every rectangle node/.style = {rectangle, draw,thick,minimum width=2cm,minimum height=2cm},
	nnv/.style={
		rectangle, draw,thick,minimum width=0.7cm,minimum height=1.5cm
	},
        nnh/.style={
            rectangle, draw,thick,minimum width=1.5cm,minimum height=1.0cm
          },
	outer/.style={draw=gray,dashed,thick, inner sep=3pt
	},
	XOR/.style={draw,circle,append after command={
			[shorten >=\pgflinewidth, shorten <=\pgflinewidth,]
			(\tikzlastnode.north) edge (\tikzlastnode.south)
			(\tikzlastnode.east) edge (\tikzlastnode.west)
		}
	},
	bidirected/.style={Latex-Latex,dashed},
	el/.style = {inner sep=2pt, align=left, sloped},
	cross/.style={cross out, draw=black, minimum size=2*(#1-\pgflinewidth), inner sep=0pt, outer sep=0pt},
	%default radius will be 1pt. 
	cross/.default={1pt}
}

%%%%%%%%%%%Citation style
\usepackage{natbib}
\usepackage{microtype}
% Mine ends

\title{
Adaptive Online Experimental Design for Causal Discovery 

}

\author{%
  Muhammad Qasim Elahi$^{*1}$,  Lai Wei$^{*\dagger2}$, Murat Kocaoglu$^1$,  Mahsa Ghasemi$^1$\\
  
    School of Electrical and Computer Engineering, Purdue University$^1$\\
      Life Sciences Institute, University of Michigan$^2$\\
  % West Lafayette, Indiana, USA \\
  \texttt{elahi0@purdue.edu, weilaitim@gmail.com}\\
  \texttt{mkocaoglu@purdue.edu, mahsa@purdue.edu} \\
}

\usepackage{fancyhdr}
\pagestyle{fancy}
\fancyhead{} % clear all header fields
\fancyhead[LO,CE]{Adaptive Online Experimental Design for Causal Discovery}

\begin{document}
\def\thefootnote{*}\footnotetext{These authors contributed equally to this work.}\def\thefootnote{\arabic{footnote}}
\def\thefootnote{$\dagger$}\footnotetext{This work started or (was partially conducted) while the author was at Purdue University.}\def\thefootnote{\arabic{footnote}}
\maketitle

\begin{abstract}
Causal discovery aims to uncover cause-and-effect relationships encoded in causal graphs by leveraging observational, interventional data, or their combination. The majority of existing causal discovery methods are developed assuming infinite interventional data. We focus on data interventional efficiency and formalize causal discovery from the perspective of online learning, inspired by pure exploration in bandit problems. A graph separating system, consisting of interventions that cut every edge of the graph at least once, is sufficient for learning causal graphs when infinite interventional data is available, even in the worst case. We propose a track-and-stop causal discovery algorithm that adaptively selects interventions from the graph separating system via allocation matching and learns the causal graph based on sampling history. Given any desired confidence value, the algorithm determines a termination condition and runs until it is met. We analyze the algorithm to establish a problem-dependent upper bound on the expected number of required interventional samples. Our proposed algorithm outperforms existing methods in simulations across various randomly generated causal graphs. It achieves higher accuracy, measured by the structural hamming distance (SHD) between the learned causal graph and the ground truth, with significantly fewer samples.

\end{abstract}

\section{Introduction}
Causal discovery is a fundamental problem encountered across various scientific and engineering disciplines \citep{pearl2009causality, spirtes2000causation, peters2017elements}.  Observational data is generally inadequate for establishing causal relationships and interventional data, obtained by deliberately perturbing the system, becomes necessary. Consequently, contemporary approaches propose leveraging both observational and interventional data for causal discovery \citep{hauser2014two, greenewald2019sample}. A well-established model for depicting causal relationships is based on directed acyclic graphs (DAGs). A directed edge between two variables indicates a direct causal effect, while a directed path indicates an indirect causal effect \citep{spirtes2000causation}.

The causal graph is typically identifiable only up to its Markov equivalence class (MEC) \citep{verma2022equivalence} using observational data. The Markov equivalence class is a set of DAGs that encode the same set of conditional independencies. There is a growing focus on developing algorithms for the design of interventions, specifically aimed at learning causal graphs \citep{hu2014randomized,shanmugam2015learning, ghassami2017optimal}. These algorithms rely on the availability of an infinite amount of interventional data, whose collection in real-world settings is often more challenging and expensive than gathering observational data. In numerous medical contexts, abundant observational clinical data is readily available \citep{subramani1999causal}, whereas conducting randomized controlled trials can be costly or sometimes present ethical challenges. In this work, we consider a scenario where access is limited to only a finite number of interventional samples. Similar to \cite{hu2014randomized,shanmugam2015learning, ghassami2017optimal}, we assume causal sufficiency, meaning that all variables are observed, and no latent or hidden variables are involved.

\begin{table*}[t]
  \centering
  \begin{tabular}{P{5.5cm} P{3.5cm} P{3.5cm} P{3.5cm} }
    \hline
    Reference &  Adaptive/Non-adaptive  & Graph structure constraints&Interventional sample efficiency   \\ \hline
    \citealp{hauser2012characterization} & Non-adaptive & None & \XSolid \\
     \citealp{shanmugam2015learning} & Non-adaptive &   None & \XSolid \\
     \citealp{kocaoglu2017cost} & Non-adaptive & None &  \XSolid \\
     \citealp{greenewald2019sample} & Adaptive &  Trees only & \checkmark \\
     \citealp{squires2020active} & Adaptive & None &  \XSolid \\
    \citealp{choo2023adaptivity} & Adaptive &  None &  \XSolid \\
    
     Track \& Stop Causal Discovery (Ours)  &   Adaptive  & None & \checkmark \\
     \hline
  \end{tabular}
  \caption{A comparison of existing causal discovery techniques with our proposed algorithm}\label{table1_cmp}
  \vspace{-1.7em}
\end{table*}

The PC algorithm \citep{spirtes2000causation} utilizes conditional independence tests in combination with Meek orientation rules \citep{meek1995causal} to recover the causal structure and learn all identifiable causal relations from the data.  The graph separating system, which is a set of interventions that cuts every edge of the graph at least once is sufficient for learning causal graphs when in-
finite interventional data is available, even in the
worst case. \citep{shanmugam2015learning, kocaoglu2017cost}. Bayesian causal discovery is a valuable tool for efficiently learning causal models from limited interventional data, but it encounters challenges when it comes to computing probabilities over the combinatorial space of DAGs \citep{heckerman1997bayesian,annadani2023bayesdag, toth2022active}.  Dealing with the search complexity for DAGs without relying on specific parametric remains a  challenge.

A comparison between our proposed algorithm and existing methods is presented in Table \ref{table1_cmp}. Causal discovery algorithms can be broadly classified into two categories: adaptive and non-adaptive. In the offline setting, interventions are predetermined before algorithm execution. The DAG is learned using corresponding interventional distributions, which require an infinite number of samples \cite{hauser2012characterization,shanmugam2015learning,kocaoglu2017cost}. Contrastingly, existing online discovery  algorithms apply interventions sequentially, with adaptively chosen targets at each step, still necessitating access to an interventional distribution, i.e., an infinite number of interventional samples \cite{squires2020active,choo2023adaptivity}. Although the algorithm by \citeauthor{greenewald2019sample} works with finite interventional data, it is applicable only when the underlying causal structure is a tree. Our proposed tracking and stopping algorithm provides a sample-efficient alternative for general causal graphs.

We approach causal discovery from an online learning standpoint, emphasizing knowledge acquisition and incremental decision-making. Inspired by the pure exploration problem in multi-armed bandit~\citep{kaufmann2016complexity, degenne2019non}, we view the possible interventions as the action space. We propose a discovery algorithm that adaptively selects interventions from the graph separating system via an allocation matching approach similar to one employed in \citealp{wei2024approximate}. Our objective is to uncover the true DAG with a predefined level of confidence while minimizing the number of interventional samples required. The main contributions of our work are listed below:

\vspace{-0.7pt}
\begin{itemize}
\itemsep -0.2em
    \item We  study the causal discovery problem with fixed confidence and proposed a track-and-stop causal discovery algorithm that can adaptively select informative interventions according to the sampling history.
\itemsep -0.1em    
    \item We analyze the algorithm to show it can detect the true DAG with any given confidence level and provide an upper bound on the expected number of required interventional samples.
\itemsep -0.2em    
    \item We conduct a series of experiments using random DAGs and the SACHS Bayesian network from bnlibrary~\citep{scutari2009learning} to compare our algorithm with other baselines. The results show that our algorithm outperforms the baselines, requiring fewer samples.
\end{itemize}

\section{Problem Formulation}

A causal graph $\mathcal{D} = (\mathbf{V}, \mathbf{E})$ is a DAG with the vertex set $\mathbf{V}$ corresponding to a set of random variables. If there is a directed edge $(X, Y) \in \mathbf{E}$ from variable $X$ to variable $Y$, denoted as $X \rightarrow Y$, it means that $X$ is a direct cause or an immediate parent of $Y$. The parent set of a variable $Y$ is denoted by $\mathsf{Pa}(Y)$. The induced graph $\mathcal{D}_\mathbf{X}$ has a vertex set $\mathbf{X}$ and the edge set contains all edges with both endpoints in $\mathbf{X}$. The cut at a set of vertices $\mathbf{X}$, including both oriented and unoriented edges denoted by $E[\mathbf{X}, \mathbf{V} \setminus \mathbf{X}]$, is the set of edges between $\mathbf{X}$ and $\mathbf{V} \setminus \mathbf{X}$. Based on the Markov assumption, the joint distribution can be factorized as $P(\mathbf{v}) = \prod_{i=1}^{n} P(v_i|\mathsf{pa}(X_i))$. 
 A causal graph implies specific conditional independence (CI) relationships among variables through $d$-separation statements. A collection of DAGs is considered Markov equivalent when they exhibit the same set of CI relations. 
\vspace{-1.3pt}
\begin{definition}[Faithfulness \citep{zhang2012strong}]\label{def: faith}
In the population distribution, no conditional independence relations exist other than those implied by the $d$-separation statements in the true causal DAG.
\end{definition}
\vspace{-1.3pt}

 Faithfulness is a commonly used assumption for causal discovery. With the faithfulness assumption, the DAGs in Markov equivalence must share the same skeleton with some edges oriented differently. In order to orient remaining edges, we need access to interventional samples. An intervention on a subset of variables $\mathbf{S} \subseteq \mathbf{V}$, denoted by the do-operator $do(\mathbf{S} = \mathbf{s})$, involves setting each $S_j \in \mathbf{S}$ to $s_j$. Let $\mathcal{D}_{\overline{\mathbf{S}}}$ denote the corresponding interventional causal graph with incoming edges to nodes in $\mathbf{S}$ removed. Using the truncated factorization formula over $\mathcal{D}_{\overline{\mathbf{S}}}$ , if $\mathbf{v}$ is consistent with the realization $\mathbf{s}$, we have:
 \begin{equation}
     P_{\mathbf{s}}(\mathbf{v}):= P(\mathbf{v} \mid do(\mathbf{S} = \mathbf{s})) = \prod_{V_i \notin \mathbf{s}} P(v_i|\mathsf{pa}(V_i))
     \label{trunc_fact}
 \end{equation}
 For a DAG $\mathcal{D}$, we denote the interventional and observational distributions as $P^{\mathcal{D}}_{\mathbf{s}}(\mathbf{v})$ and $P^{\mathcal{D}}(\mathbf{v})$ respectively. In many scenarios, abundant observational data allow for an accurate approximation of the ground truth observational distribution. Therefore, we make the following assumption:

\begin{assumption}
We assume that each variable \( V \in \mathbf{V} \) is discrete and that the observational distribution is available and faithful to the true causal graph.
    \label{assum1}
\end{assumption}

\textbf{Causal Discovery with Fixed Confidence:} Under assumption \ref{assum1}, the causal DAG can be learned up to the MEC with the PC algorithm~\cite{spirtes2000causation}. To orient remaining edges, we need interventional data. We consider a fixed confidence setting, where the learner is given a confidence level $\delta \in (0, 1)$ and is required to output the true DAG with probability at least $1 - \delta$. This problem setup is inspired by the pure exploration problem in multi-armed bandits \citep{kaufmann2016complexity}. It requires the learner to adaptively select informative interventions to reveal the underlying causal structure. With a set of interventional targets $\mathbf{S}$, let the action space be $\mathcal{I} = \bigcup_{\mathbf{S} \in \mathcal{S}} \omega(\mathbf{S})$, where each $\omega(\mathbf{S})$ includes a finite number of interventions $\mathbf{S}$ or its finite number of realizations. The learner sequentially selects intervention $\mathbf{s}_t \in \mathcal{I}$ and observes a sample from the interventional distribution $\mathbf{v}_t \sim P_{\mathbf{s}_t}(\mathbf{v})$. A policy $\pi$ is a sequence $\seqdef{\pi_t}{t \in \natural}$, where each $\pi_t$ determines the probability distribution of taking intervention $\mathbf{s}_t \in \mathcal{I}$ given intervention and observation history $\pi_t(\mathbf{s}_t \mid \mathbf{s}_1, \mathbf{v}_1, \ldots , \mathbf{s}_{t-1}, \mathbf{v}_{t-1})$. 

In a fixed confidence level setting, the number of required interventional samples to output a DAG with a confidence level is unknown beforehand. For a given $\delta \in (0, 1)$, the learner is required to select a stopping time $\tau_{\delta}$ adapted to filtration $ \seqdef{\mathcal{F}_t}{t\in \natural_{>0}}$ where $\mathcal{F}_t = \sigma(\mathbf{s}_1, \mathbf{v}_1, \ldots, \mathbf{s}_{t-1}, \mathbf{v}_{t-1})$. At $\tau_{\delta}$, the learner selects a causal graph based on select rule $\psi(\mathbf{s}_1, \mathbf{v}_1, \ldots, \mathbf{s}_{\tau_{\delta}-1}, \mathbf{v}_{\tau_{\delta}-1})$. The stopping time $\tau_{\delta}$ represents the time when the learner halts and reaches confidence level $\delta$ about a selected causal graph $\psi$. Putting the policy, stopping time, and selection rule together, the triple $\tuple{\pi,\tau_{\delta}, \psi}$ is called a causal discovery algorithm. The objective is to design an algorithm that takes as few interventional samples as possible.

\section{Preliminaries}
In this section, we introduce some foundational concepts and definitions about partially directed graphs, which can be used to encode the MEC and will be employed to develop theoretical results in the paper.

\textbf{Partially Directed Graphs (PDAGs):} A partially directed acyclic graph (PDAG) is a partially directed graph that is free from directed cycles. \citep{perkovic2020identifying}.  A PDAG with some additional edges oriented by the combination of side information and propagation using meek rules is classified as a maximally oriented PDAG (MPDAG).  The unshielded colliders are the variables with two or more parents, where no pair of parents are adjacent. A DAG $\mathcal{D}$ can be represented by an MPDAG or a CPDAG when they share the same set of oriented edges, adjacencies and unshielded colliders. The set of all DAGs represented by the MPDAG $\mathcal{M}$ or a CPDAG $\mathcal{C}$ is denoted by $[\mathcal{M}]$ or $[\mathcal{C}]$, respectively. Both the CPDAGs and MPDAGs take the form of a chain graph with chordal chain components, in which cycles of four or more vertices always contain an additional edge, called a chord \citep{andersson1997characterization}.

 \textbf{Partial Causal Ordering (PCO) in PDAGs :} A path between vertices $X$ and $Y$ is termed a causal path when all edges in the path are directed toward $Y$. A path of the form $P := <V_1 = X, V_2, ..., V_n = Y>$ is categorized as a possibly causal path when it does not contain any edge in the form of $V_i \leftarrow V_j$, where $i < j$. A proper path from $\mathbf{X}$ to $\mathbf{Y}$ is one where only the first node belongs to $\mathbf{X}$ while the remaining nodes do not. If there is a causal path from vertex $x$ to vertex $y$, it implies that $x$ is an ancestor of $y$, i.e., $x \in \mathsf{An}(y)$. Likewise, if there is a possibly causal path from vertex $x$ to vertex $y$, it implies that $x$ is a possible ancestor of $y$, i.e., $x \in \mathsf{PoAn}(y)$. The $\mathsf{An}(\mathbf{X}, \mathcal{M})$ and $\mathsf{PoAn}(\mathbf{X}, \mathcal{M})$ for a set of nodes $\mathbf{X}$ in $\mathcal{M}$ is  the union of over all vertices in $\mathbf{X}$. We adhere to the convention that each node is considered a descendant, ancestor, and possible ancestor of itself.

\begin{definition}
[Partial Causal Ordering] A total ordering of a subset of vertices $\mathbf{X} \subseteq \mathbf{V}$ is a causal ordering of $\mathbf{X}$ in a DAG $\mathcal{D}(\mathbf{V},\mathbf{E})$ if $\;\forall\; X_i,X_j\in \mathbf{X}$ such that $X_i < X_j$ there exists an edge $X_i \rightarrow X_j \in \mathbf{E}$. In the context of an MPDAG, where unoriented edges are present, we can define the Partial Causal Ordering (PCO) of a subset $\mathbf{X} \subseteq \mathbf{V}$ in $\mathcal{M}(\mathbf{V},\mathbf{E})$ as a total ordering of pairwise disjoint subsets $(\mathbf{X}_1, \mathbf{X}_2, ..., \mathbf{X}_m)$ such that $\bigcup_{i=1}^{m} \mathbf{X}_i = \mathbf{X}$. The PCO  must fulfill the following requirement: if $\mathbf{X}_i < \mathbf{X}_j$ and there is an edge between $X_i \in \mathbf{X}_i$ and
$X_j \in \mathbf{X}_j$
in $\mathcal{M}$, then edge $X_i \rightarrow X_j $ is present in  $\mathcal{M}$.
\end{definition}

\section{Algorithm Initialization}
In our problem setup, we assume access to the observational distribution and the corresponding CPDAG \(\mathcal{G}\). We proceed by constructing a graph-separating system for \(\mathcal{G}\) and enumerating possible causal effects.

\subsection{Constructing Graph Separating System}

\begin{definition}[Graph Separating System]
Given a graph \(G = (\mathbf{V}, \mathbf{E})\), a set of different subsets of the vertex set \(V\), \(\mathcal{S} = \{\mathbf{S}_1, \mathbf{S}_2, \ldots, \mathbf{S}_m\}\) is a graph separating system when, for every edge \(\{a, b\} \in E\), there exists a set \(S_i \in \mathcal{S}\) such that either \(a \in S_i\) and \(b \notin S_i\) or \(a \notin S_i\) and \(b \in S_i\).
\end{definition}

In a setting where infinite interventional data is available, the interventional distributions from targets in the graph separating system for unoriented edges in CPDAG $\mathcal{C}$ are necessary and sufficient to learn the true DAG \cite{kocaoglu2017cost, shanmugam2015learning}. Graph coloring, which can be used as a method to generate a separating system by assigning distinct colors to adjacent vertices, is computationally challenging for general graphs. However, for perfect graphs like chordal graphs, efficient polynomial-time algorithms can color the graph using the minimum number of colors \cite{kral2004coloring}. For a set of $n$ variables, a separating system of the form $\mathcal{S} = \{S_1, S_2, ..., S_m\}$, such that $|S_a| \leq k, \; \forall a \in [m]$, is called a $(n, k)$-separating system \citep{katona1966separating, wegener1979separating}. We describe the procedure to generate $(n, k)$-separating system in the supplementary material.

\subsection{Enumerating Causal Effects}
While causal effects are generally not identifiable from CPDAGs or MPDAGs, we can still enumerate all the possible interventional distributions using the causal effect identification formula for MPDAGs \cite{perkovic2020identifying}.

\begin{lemma}
[Causal Identification Formula for MPDAG \citep{perkovic2020identifying}] Consider an MPDAG $\mathcal{M}(\mathbf{V},\mathbf{E})$ and two disjoint sets of variables $\mathbf{X},\mathbf{Y} \subseteq \mathbf{V}$. The interventional distribution $P_{\mathbf{x}}(\mathbf{y})$ is identifiable from any observational distribution consistent with $\mathcal{M}$ if there exists no possibly proper causal path from $\mathbf{X}$ to $\mathbf{Y}$ in $\mathcal{M}$ that starts with an undirected edge and is given below: 
 \begin{equation}
 P_{\mathbf{x}}(\mathbf{y}) =\sum_{\mathbf{b}} \prod_{i=1}^{m} P(\mathbf{b}_i|\mathsf{Pa}(\mathbf{b}_i,\mathcal{M})).
 \end{equation} 
The assignment for $\mathsf{Pa}(\mathbf{b}_i,\mathcal{M})$ must be in consistence with $do(\mathbf{X})$. Also 
$(\mathbf{B}_1,\ldots,\mathbf{B}_m)$ is a partial causal ordering of $\mathsf{An}(\mathbf{Y},\mathcal{M_{\mathbf{V} \setminus \mathbf{X} }})$ in $\mathcal{M}$ and $\mathbf{b} = $ $\mathsf{An}(\mathbf{Y},\mathcal{M_{\mathbf{V} \setminus \mathbf{X} }}) \setminus \mathbf{Y}$. 
\label{iden_thm}
\end{lemma}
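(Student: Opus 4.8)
The plan is to treat ``identifiable from any observational distribution consistent with $\mathcal{M}$'' as two separate requirements: (i) that the right-hand side is a functional of the observational law alone, and (ii) that for every DAG $\mathcal{D} \in [\mathcal{M}]$ the true interventional law $P^{\mathcal{D}}_{\mathbf{x}}(\mathbf{y})$ equals that functional. I would fix an arbitrary $\mathcal{D} \in [\mathcal{M}]$ and start from the truncated factorization \eqref{trunc_fact}, which gives $P^{\mathcal{D}}_{\mathbf{x}}(\mathbf{v}) = \prod_{V_i \notin \mathbf{X}} P(v_i \mid \mathsf{pa}^{\mathcal{D}}(V_i))$, and then reduce the target marginal $P^{\mathcal{D}}_{\mathbf{x}}(\mathbf{y})$, block by block, to the claimed product. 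The two facts that must be established en route are that the factorization collapses onto the correct set of variables and that every surviving factor is the same in every member of $[\mathcal{M}]$.

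First I would carry out the \emph{ancestral reduction}. Summing the truncated factorization over any variable that is not an ancestor of $\mathbf{Y}$ in the mutilated graph $\mathcal{D}_{\mathbf{V} \setminus \mathbf{X}}$ telescopes to $1$, since such a variable appears only on the left of its own conditional and in no surviving parent set; hence $P^{\mathcal{D}}_{\mathbf{x}}(\mathbf{y})$ depends only on the sub-law of $\mathbf{A}_{\mathcal{D}} := \mathsf{An}(\mathbf{Y}, \mathcal{D}_{\mathbf{V} \setminus \mathbf{X}})$. I would then show this set is \emph{MPDAG-invariant}, i.e. $\mathbf{A}_{\mathcal{D}} = \mathsf{An}(\mathbf{Y}, \mathcal{M}_{\mathbf{V} \setminus \mathbf{X}})$ for every $\mathcal{D} \in [\mathcal{M}]$. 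This is exactly where the hypothesis bites: if some edge incident to $\mathbf{X}$ started a possibly proper causal path to $\mathbf{Y}$ with an undirected first edge, distinct consistent orientations of that edge would enlarge or shrink the ancestral set and invariance would fail; ruling out such paths forces the ancestor and possible-ancestor sets relevant to $\mathbf{Y}$ to agree across the class.

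Next I would exploit the chain-graph structure recorded in the Preliminaries: $\mathcal{M}$ is a chain graph whose chain components are chordal, so the subgraph induced on $\mathbf{A} := \mathsf{An}(\mathbf{Y}, \mathcal{M}_{\mathbf{V} \setminus \mathbf{X}})$ admits a partial causal ordering $(\mathbf{B}_1, \dots, \mathbf{B}_m)$ in which edges between distinct blocks are oriented identically in every $\mathcal{D} \in [\mathcal{M}]$, while edges inside a block lie in a single chordal chain component. Grouping the truncated factorization along this ordering, and using that a chordal undirected component can be consistently oriented by a perfect elimination ordering without creating or destroying any unshielded collider, I would collapse the per-vertex conditionals inside each block into the block factor $P(\mathbf{b}_i \mid \mathsf{Pa}(\mathbf{b}_i, \mathcal{M}))$, with the assignments forced to be consistent with $do(\mathbf{X})$. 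Summing the resulting product over $\mathbf{b} = \mathbf{A} \setminus \mathbf{Y}$ then yields the stated formula.

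The crux, and the step I expect to be the main obstacle, is the \emph{class-invariance of each factor}: proving that $P(\mathbf{b}_i \mid \mathsf{Pa}(\mathbf{b}_i, \mathcal{M}))$ is one and the same observational conditional in every DAG represented by $\mathcal{M}$. This needs two ingredients working together — that the parent set $\mathsf{Pa}(\mathbf{b}_i, \mathcal{M})$ is determined by $\mathcal{M}$ rather than by a particular orientation, and that no unshielded collider is created when the intra-block chordal edges are oriented, so that the block conditional coincides with the corresponding observational factor. Both rely on the no-undirected-first-edge condition to prevent the intervention from reorienting structure upstream of $\mathbf{Y}$; assembling these local invariances into the global product, while verifying that $do(\mathbf{X})$-consistency of the parent assignments is respected in every block, is the delicate bookkeeping that completes the argument.
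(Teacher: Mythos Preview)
The paper does not prove this lemma at all: it is quoted verbatim from \citet{perkovic2020identifying} and used as a black box (the supplementary material only reproduces the \textsf{PCO} and \textsf{IdentifyCausalEffect} subroutines from that reference, not a proof of the identification formula). There is therefore no ``paper's own proof'' to compare your proposal against.

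For what it is worth, your sketch is broadly in the spirit of Perkovi\'c's original argument: start from the truncated factorization in an arbitrary $\mathcal{D}\in[\mathcal{M}]$, marginalize down to $\mathsf{An}(\mathbf{Y},\mathcal{D}_{\mathbf{V}\setminus\mathbf{X}})$, argue that this set and the resulting block factors are invariant across $[\mathcal{M}]$, and use the bucket decomposition along a PCO to rewrite the product in terms of MPDAG parents. The one place where your outline is slightly loose is the claim that the no-undirected-first-edge hypothesis directly forces $\mathsf{An}(\mathbf{Y},\mathcal{D}_{\mathbf{V}\setminus\mathbf{X}})$ to coincide with $\mathsf{An}(\mathbf{Y},\mathcal{M}_{\mathbf{V}\setminus\mathbf{X}})$; in Perkovi\'c's proof this invariance, and more importantly the invariance of each block conditional $P(\mathbf{b}_i\mid\mathsf{Pa}(\mathbf{b}_i,\mathcal{M}))$, is obtained through a separate structural lemma showing that the parents of a bucket in $\mathcal{M}$ are exactly the DAG parents of that bucket in every member of $[\mathcal{M}]$, which in turn uses that buckets are undirected-connected and that Meek's rules have been exhausted. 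Your ``crux'' paragraph correctly flags this as the delicate step, but a full proof needs that structural lemma rather than an appeal to chordality and perfect elimination orderings alone.
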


\begin{figure}[h]
\centering
\begin{subfigure}[t]{0.20\textwidth}
\centering
    \begin{tikzpicture}
        \node(1) {$V_2$};
        \node(2) [below of = 1,,xshift = -1.0cm,yshift = +0.1cm] {$V_1$};
        \node(3) [below of = 1,,xshift = +1.0cm,yshift = +0.1cm] {$V_3$};
       \node(4) [below of = 1,yshift = -0.8cm] {$V_4$};

   \draw[<-] (2) -- (1);
   \draw[-] (1) -- (3);
   \draw[<-] (2) -- (4);
    \draw[-] (3) -- (4);
   \draw[-] (1) -- (4);

    \end{tikzpicture}
    \subcaption{}
\end{subfigure}
\quad
\begin{subfigure}[t]{0.20\textwidth}
\centering
    \begin{tikzpicture}
     \node(1) {$V_2$};
        \node(2) [below of = 1,,xshift = -1.0cm,yshift = +0.1cm] {$V_1$};
        \node(3) [below of = 1,,xshift = +1.0cm,yshift = +0.1cm] {$V_3$};
       \node(4) [below of = 1,yshift = -0.8cm] {$V_4$};

   \draw[->] (2) -- (1);
   \draw[->] (1) -- (3);
   \draw[<-] (2) -- (4);
    \draw[<-] (3) -- (4);
   \draw[<-] (1) -- (4);

    \end{tikzpicture}
    \subcaption{}
\end{subfigure}
\quad
\centering
\begin{subfigure}[t]{0.20\textwidth}
\centering
    \begin{tikzpicture}
        \node(1) {$V_2$};
        \node(2) [below of = 1,,xshift = -1.0cm,yshift = +0.1cm] {$V_1$};
        \node(3) [below of = 1,,xshift = +1.0cm,yshift = +0.1cm] {$V_3$};
       \node(4) [below of = 1,yshift = -0.8cm] {$V_4$};

   \draw[<-] (2) -- (1);
   \draw[->] (1) -- (3);
   \draw[->] (2) -- (4);
    \draw[<-] (3) -- (4);
   \draw[->] (1) -- (4);
  
    \end{tikzpicture}
    \subcaption{}
\end{subfigure}
\quad
\centering
\begin{subfigure}[t]{0.20\textwidth}
\centering
    \begin{tikzpicture}
      \node(1) {$V_2$};
        \node(2) [below of = 1,,xshift = -1.0cm,yshift = +0.1cm] {$V_1$};
        \node(3) [below of = 1,,xshift = +1.0cm,yshift = +0.1cm] {$V_3$};
       \node(4) [below of = 1,yshift = -0.8cm] {$V_4$};

   \draw[<-] (1) -- (2);
   \draw[->] (1) -- (3);
   \draw[->] (2) -- (4);
    \draw[<-] (3) -- (4);
   \draw[-] (1) -- (4);

    \end{tikzpicture}
    \subcaption{}
\end{subfigure}

\caption{MPDAGs obtained by assigning orientations to edges $E[{V_1},\mathbf{V}\setminus{V_1}]$ in corresponding skeleton i.e. CPDAG}
% \vspace{-1.5em}
\label{forbidst}
\end{figure}
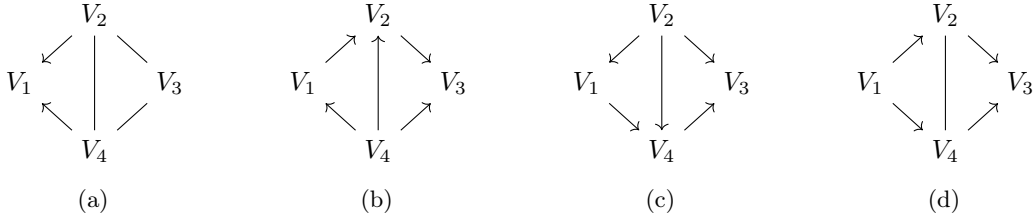

The algorithm for finding the partial causal ordering and enumerating all possible causal effects leveraging~\cref{iden_thm} in an MPDAG is provided in the supplementary material. In cases where one or multiple possibly proper causal paths exist from $\mathbf{X}$ to $\mathbf{Y}$ and start with an undirected edge, the interventional distribution $P_{\mathbf{x}}(\mathbf{y})$ cannot be uniquely determined. However, we can enumerate all possible values for $P_{\mathbf{x}}(\mathbf{y})$ for the set of DAGs represented by the MPDAG $([\mathcal{M}])$. Suppose that $|\mathbf{X}| = k$ and the maximum degree of $\mathcal{M}$ is $d$. This implies that there can be a maximum of $kd$ edges adjacent to the vertices in set $\mathbf{X}$. To enumerate all candidate values of $P_\mathbf{x}(\mathbf{y})$ for every DAG in the set $[\mathcal{M}]$, we assign orientations to all the unoriented edges in $E[\mathbf{X},V \setminus \mathbf{X}]$ and propagate using Meek rules. We denote an orientation of cutting edges $(E[\mathbf{X}, \mathbf{V} \setminus \mathbf{X}])$ as $\mathsf{C}(\mathbf{X})$. This process results in a maximum of $2^{kd}$ partially directed graphs, each one being a valid MPDAG. It's worth noting that, since the first edge of all paths from $\mathbf{X}$ to $\mathbf{Y}$ is oriented, the condition for the identifiability of $P_{\mathbf{x}}(\mathbf{y})$ is satisfied in all of the newly generated MPDAGs. With slight abuse of notation we denote the interventional distribution in the MPDAG $\mathcal{M}$ with cut configuration $\mathsf{C}(\mathbf{X})$ by  $P_{\mathbf{x}}^{\mathsf{C}(\mathbf{X})}$.

Consider a CPDAG $\mathcal{G}$ on the vertex set $\mathbf{V} = \{V_1, V_2, V_3, V_4\}$, which is the same as the complete undirected graph on $\mathbf{V}$ with the edge $V_1 - V_3$ removed. A valid separating set system for $\mathcal{G}$ is $\{\{V_1\}, \{V_1, V_2\}\}$. Figure \ref{forbidst} shows the $4$ possible MPDAGs by assigning different orientations to the cut at $\mathbf{X} =\{V_1\}$. By applying the causal identification formula (\cref{iden_thm}) to all the $4$ MPDAGs, we can identify all possible interventional distributions $P_{\mathbf{x}}(\mathbf{v})$ for MPDAGs in \cref{forbidst} given below.
\begin{equation*} {
   P_{v_1}(v_2,v_3,v_4)   = \begin{cases} 
      P(v_2, v_3, v_4) & (a) \\
      P(v_4)P(v_3|v_2, v_4)P(v_2|v_1, v_4) & (b) \\
      P(v_4|v_1, v_2)P(v_3|v_2, v_4)P(v_2) & (c)\\
     P(v_3|v_2, v_4) P(v_2, v_4|v_1)  & (d)
   \end{cases}}
   \label{enum_caus_e}
\end{equation*}

For example, for the MPDAG in Figure \ref{forbidst}(a), using the algorithm to find PCO, we obtain $\mathsf{PCO}(\mathbf{V} \setminus \mathbf{X}, \mathcal{G}_{\mathbf{V} \setminus \mathbf{X}}) = \{V_2, V_3, V_4\}$, and $P_{\mathbf{x}}(\mathbf{v}) = P(v_2, v_3, v_4)$. We repeat this process for all the MPDAGs in Figure \ref{forbidst} to enumerate all the possible candidate interventional distributions in the above equation. We show in Lemma \ref{our_lemma} that all candidate interventional distributions are different from one another. This implies that we can orient the cutting edges by comparing the candidate interventional distributions with the empirical interventional distribution. In order to ensure that the enumeration step is feasible, we use $(n,k)$ separating system, which implies that for any target set $\mathbf{S}$ we will have at most $2^{kd}$ possible interventional distributions, where $d$ is the maximum degree in the graph. We can then orient the entire DAG by repeating this procedure for all the intervention targets in the separating system.

We define the collection of interventional distributions $P^{\mathcal{D}}_\mathbf{S} = \{P^{\mathcal{D}}_\mathbf{s}\} _{\forall \mathbf{s} \in \text{Dom}(\mathbf{S})}$, where $\text{Dom}(\mathbf{S})$ refers to the domain of $\mathbf{S}$. We show that we have a unique $P^{\mathcal{D}}_\mathbf{S}$ for every possible cutting edge configuration $\mathsf{C}(\mathbf{S})$ in $\mathcal{D}$ (\cref{our_lemma}). The Lemma \ref{our_lemma} implies that there exists a one-to-one mapping between the candidate interventional distributions and the cutting edge orientation $\mathsf{C}(\mathbf{S})$. The proof of \cref{our_lemma} relies on~\citep[Th. 10]{hauser2012characterization}, which requires revisiting some concepts and definitions from the paper \citealp{hauser2012characterization}.

\begin{lemma}
Assume that the faithfulness assumption in~\cref{def: faith} holds and $\mathcal{D}^*$ is the true DAG. For any DAG $\mathcal{D} \neq \mathcal{D^*} $, if $P_{\mathbf{S}}^{\mathcal{D}} = P_{\mathbf{S}}^{\mathcal{D}^*}$   for some $\mathbf{S} \subseteq \mathbf{V}$, they must share the same cutting edge orientation $\mathsf{C}(\mathbf{S})$.
\label{our_lemma}
\end{lemma}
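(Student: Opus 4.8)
The plan is to reduce the statement to the characterization of interventional Markov equivalence classes in \citep[Theorem 10]{hauser2012characterization}, applied to the two-element target family $\mathcal{I} = \{\emptyset, \mathbf{S}\}$. First I would dispose of the degenerate case in which the cut $E[\mathbf{S}, \mathbf{V} \setminus \mathbf{S}]$ is empty (e.g.\ $\mathbf{S} = \mathbf{V}$ or $\mathbf{S} = \emptyset$), where $\mathsf{C}(\mathbf{S})$ is vacuous and there is nothing to prove. For the remaining case I would record two facts about the pair $(\mathcal{D}, \mathcal{D}^*)$: because both are orientations of the common observational CPDAG $\mathcal{G}$, they are observationally Markov equivalent and represent the same faithful observational distribution $P := P^{\mathcal{D}} = P^{\mathcal{D}^*}$; and by hypothesis $P^{\mathcal{D}}_{\mathbf{S}} = P^{\mathcal{D}^*}_{\mathbf{S}}$. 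Hence the interventional family $\{P, P_{\mathbf{S}}\}$ is produced, via the truncated factorization \eqref{trunc_fact}, by \emph{both} $\mathcal{D}$ and $\mathcal{D}^*$.

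The first substantive step, and the one I expect to be the main obstacle, is to upgrade this distributional coincidence into the purely graphical statement that $\mathcal{D}$ and $\mathcal{D}^*$ are $\mathcal{I}$-Markov equivalent. This is where faithfulness (\cref{def: faith}) is essential. The family $\{P, P_{\mathbf{S}}\}$ is $\mathcal{I}$-faithful to $\mathcal{D}^*$: the conditional independencies of $P$ are exactly the $d$-separations of $\mathcal{D}^*$, and those of $P_{\mathbf{S}}$ are exactly the $d$-separations of the mutilated graph $\mathcal{D}^*_{\overline{\mathbf{S}}}$. By the assumed equalities the same family is also $\mathcal{I}$-Markov with respect to $\mathcal{D}$ and $\mathcal{D}_{\overline{\mathbf{S}}}$. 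A family that is simultaneously faithful to one DAG and Markov to another forces the two to encode the same (interventional) $d$-separation relations, i.e.\ $\mathcal{D} \sim_{\mathcal{I}} \mathcal{D}^*$. The delicate point here is arguing \emph{interventional} faithfulness rather than only observational faithfulness; I would derive it from \cref{assum1} together with the fact that \eqref{trunc_fact} writes $P_{\mathbf{S}}$ using the same observational conditionals, so that no non-generic cancellations can hide an interventional dependence.

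With $\mathcal{D} \sim_{\mathcal{I}} \mathcal{D}^*$ in hand, I would apply \citep[Theorem 10]{hauser2012characterization} in its augmented-DAG form: two DAGs are $\mathcal{I}$-Markov equivalent iff the graphs obtained by adjoining a single intervention vertex $\zeta$ with edges $\zeta \to a$ for every $a \in \mathbf{S}$ are Markov equivalent, that is, share the same skeleton and the same v-structures. The last step translates shared v-structures into shared cut orientations. Every cut edge $\{a,b\} \in E[\mathbf{S}, \mathbf{V} \setminus \mathbf{S}]$ has exactly one endpoint $a \in \mathbf{S}$ and one endpoint $b \notin \mathbf{S}$, so $\zeta$ is adjacent to $a$ but not to $b$, and $\zeta - a - b$ is an unshielded triple in both augmented graphs. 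This triple is a collider $\zeta \to a \leftarrow b$ precisely when the edge is oriented $b \to a$. Since the two augmented graphs have identical v-structures, they agree on whether each such triple is a collider, hence on the orientation of every cut edge. This is exactly the assertion that $\mathcal{D}$ and $\mathcal{D}^*$ share the same cutting edge orientation $\mathsf{C}(\mathbf{S})$, completing the proof.
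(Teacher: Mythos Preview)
Your proposal is correct and follows essentially the same approach as the paper: both reduce to \citep[Theorem~10]{hauser2012characterization} applied to the conservative target family $\{\emptyset,\mathbf{S}\}$, using faithfulness to pass from the distributional equality $P^{\mathcal{D}}_{\mathbf{S}}=P^{\mathcal{D}^*}_{\mathbf{S}}$ to $\mathcal{I}$-Markov equivalence. The only difference is which equivalent characterization from that theorem is invoked in the final step: the paper uses condition~(ii) directly (the mutilated graphs $\mathcal{D}_{\overline{\mathbf{S}}}$ and $\mathcal{D}^*_{\overline{\mathbf{S}}}$ share a skeleton iff the cut orientations agree), whereas you use the augmented-DAG formulation and read off cut orientations from the v-structures at the intervention vertex $\zeta$; both routes are immediate once $\mathcal{I}$-Markov equivalence is established, and your explicit flagging of the interventional-faithfulness step is if anything more careful than the paper's treatment.
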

\vspace{-0.2em}

\begin{definition}
    \label{def:intervention-densities}
    Let $\mathcal{D}$ be a DAG on the vertex set $\textbf{V}$, and let $\mathcal{S}$ be a family of targets. Then we define $\mathcal{MK_S}(\mathcal{D})$ as follows:
\begin{equation*}
   \mathcal{MK_S}(\mathcal{D}) = \setdef{(P_{\mathbf{S}}^{\mathcal{D}})_{\mathbf{S} \in \mathcal{S}}}{\text{condition (1) and (2) is true.}}
\end{equation*}
\begin{enumerate}[(1)]
    \item Markov property: $P_{\mathbf{S}}^{\mathcal{D}} \in \mathcal{MK}(\mathcal{D_{\overline{\mathbf{S}} }})$ for all $\mathbf{S}  \in \mcS$.
    \item Local invariance property: for any pair of intervention targets $ \mathbf{S_1}, \mathbf{S_2} \in \mcS,$ for any non-intervened node $U\notin \mathbf{S}_1 \cup \mathbf{S}_2$, $P_{\mathbf{S}_1 }^{\mathcal{D}}(U | \mathsf{Pa}_\mathcal{D}(U)) =P_{\mathbf{S}_2 }^{\mathcal{D}}(U | \mathsf{Pa}_\mathcal{D}(U))$.
\end{enumerate}
  \vspace{-0.4em}
\end{definition}

The $\mathcal{MK_S}(\mathcal{D})$ is space of interventional distribution tuples $(P_{\mathbf{S}}^{\mathcal{D}})_{\mathbf{S} \in \mathcal{S}}$, where each $P_{\mathbf{S}}^{\mathcal{D}}$ is Markov relative to the post-interventional DAG $\mathcal{D}_{\overline{\mathbf{S}}}$, as indicated by $P_{\mathbf{S}}^{\mathcal{D}} \in \mathcal{MK}(\mathcal{D}_{\overline{\mathbf{S}}})$. This suggests that the expression for each $P_\mathbf{s}^{\mathcal{D}}$ can be formulated using truncated factorization over $\mathcal{D}_{\overline{\mathbf{S}}}$ in equation~\eqref{trunc_fact}. Additionally, for any non-intervened variable $U$, the conditional distribution given its parents remains invariant across different interventions. A family of targets $\mathcal{S}$ is considered conservative if, for any $V \in \mathbf{V}$, there exists at least one $ \mathbf{S} \in \mathcal{S}$ such that $V \notin \mathbf{S}$. This implies that any $\mathcal{S}$ containing the empty set, i.e., observational distribution being available, is indeed conservative. Two DAGs $D$ and $D^{*}$ are  $\mcS$-Markov equivalent denoted by $D \sim_\mcS \mathcal{D}^{*}$ if $\mathcal{MK_S}(\mathcal{D}) = \mathcal{MK_S}(\mathcal{D}^{*})$.

\begin{lemma}[\cite{hauser2012characterization}, Th. 10]
    \label{thm:interventional-markov-equivalence}
    Let $\mathcal{D}$ and $\mathcal{D}^{*}$ be two DAGs on $\mathbf{V}$, and $\mcS$  be a conservative family of targets.  Then, the following statements are equivalent: \vspace{-0.8em}
    \begin{enumerate}
        \itemsep -0.15em
        \item \label{itm:interventional-markov-equivalence} $\mathcal{D} \sim_\mcS \mathcal{D}^{*}$.
        \itemsep -0.15em
        \item \label{itm:skeleton-v-structures} $\mathcal{D}$ and $\mathcal{D}^{*}$ have the same skeleton and the same v-structures, and $\mathcal{D}_{\overline{\mathbf{S}}}$ and $\mathcal{D}^{*}_{\overline{\mathbf{S}}}$ have the same skeleton for all $\mathbf{S} \in \mcS$.
       \end{enumerate}
\end{lemma}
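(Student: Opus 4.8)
The plan is to prove the two implications separately, using the classical observational characterization of Markov equivalence (Verma--Pearl) as the main engine: two DAGs induce the same conditional-independence model iff they have the same skeleton and the same v-structures. I would apply this not to $\mathcal{D},\mathcal{D}^{*}$ directly but to each pair of post-intervention DAGs $\mathcal{D}_{\overline{\mathbf{S}}},\mathcal{D}^{*}_{\overline{\mathbf{S}}}$, and then glue the per-target statements together through the structure of $\mathcal{MK_S}$ from \cref{def:intervention-densities}.

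\textbf{(2) $\Rightarrow$ (1).} The core step is to upgrade the hypotheses to: for every $\mathbf{S}\in\mcS$, the DAGs $\mathcal{D}_{\overline{\mathbf{S}}}$ and $\mathcal{D}^{*}_{\overline{\mathbf{S}}}$ are observationally Markov equivalent. Equal interventional skeletons are assumed, so only v-structures remain to be matched. I would argue this by a case analysis on a v-structure $a \to c \leftarrow b$ of $\mathcal{D}_{\overline{\mathbf{S}}}$: if $a,b$ are already non-adjacent in $\mathcal{D}$ the triple is an original v-structure and transfers via the same-v-structure hypothesis; if $a,b$ are adjacent in $\mathcal{D}$ but their edge is severed by intervening on $\mathbf{S}$, I use equality of interventional skeletons to conclude the edge is severed identically in $\mathcal{D}^{*}_{\overline{\mathbf{S}}}$ and that the surviving orientations into $c$ reproduce the same v-structure there. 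Given per-target Markov equivalence, any tuple with $P_\mathbf{S}\in\mathcal{MK}(\mathcal{D}_{\overline{\mathbf{S}}})$ also satisfies $P_\mathbf{S}\in\mathcal{MK}(\mathcal{D}^{*}_{\overline{\mathbf{S}}})$, matching the Markov clause of the definition; it then remains to transfer the local-invariance clause, discussed below.

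\textbf{(1) $\Rightarrow$ (2).} I would argue contrapositively and exploit conservativeness. If the base skeletons or v-structures differ, observe first that conservativeness makes every edge of $\mathcal{D}$ survive in at least one $\mathcal{D}_{\overline{\mathbf{S}}}$ (its head is un-intervened under some target), so the base skeleton equals the union of the interventional skeletons and the base v-structures are likewise readable off the family; a structural difference therefore forces $\mathcal{MK}(\mathcal{D}_{\overline{\mathbf{S}}})\neq\mathcal{MK}(\mathcal{D}^{*}_{\overline{\mathbf{S}}})$ for some $\mathbf{S}$, hence $\mathcal{MK_S}(\mathcal{D})\neq\mathcal{MK_S}(\mathcal{D}^{*})$. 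If instead the base structures agree but some interventional skeleton differs, then by Verma--Pearl $\mathcal{MK}(\mathcal{D}_{\overline{\mathbf{S}}})\neq\mathcal{MK}(\mathcal{D}^{*}_{\overline{\mathbf{S}}})$ for that $\mathbf{S}$, and I would exhibit a tuple realizing a distribution in one interventional model but not the other, witnessing $\mathcal{D}\not\sim_\mcS\mathcal{D}^{*}$.

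\textbf{Main obstacle.} The delicate point is the local-invariance clause (item (2) of \cref{def:intervention-densities}), which is phrased through the DAG-specific parent sets $\mathsf{Pa}_{\mathcal{D}}(U)$. To move a tuple between $\mathcal{MK_S}(\mathcal{D})$ and $\mathcal{MK_S}(\mathcal{D}^{*})$ I need that ``invariance of the conditional of $U$ given its parents across targets'' is a property of the $\mcS$-equivalence class rather than of the chosen representative, i.e. that the common skeleton/v-structure/interventional-skeleton data make the invariant mechanisms agree whether parents are read from $\mathcal{D}$ or $\mathcal{D}^{*}$. Establishing this reformulation, together with the triangle subcase of the structural lemma (where $a,b,c$ form a triangle whose $a$--$b$ edge is cut, so the relevant orientations are unconstrained by v-structures and must be pinned down purely by the interventional-skeleton equality), is where I expect the real work to lie.
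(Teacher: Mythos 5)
You should first note the baseline here: the paper does not prove this lemma at all --- it is imported verbatim as Theorem 10 of \cite{hauser2012characterization} --- so your proposal is competing with the source's proof, not with anything in this manuscript. Against that benchmark, your outline chooses the right engine (Verma--Pearl applied to each pair $\mathcal{D}_{\overline{\mathbf{S}}}, \mathcal{D}^{*}_{\overline{\mathbf{S}}}$, then gluing through \cref{def:intervention-densities}), but it has its difficulty budget inverted. The triangle subcase you defer as ``real work'' is a three-line check: an edge of $\mathcal{D}$ is deleted in $\mathcal{D}_{\overline{\mathbf{S}}}$ iff its head lies in $\mathbf{S}$, so for a v-structure $a \to c \leftarrow b$ of $\mathcal{D}_{\overline{\mathbf{S}}}$ whose $a$--$b$ edge is cut, survival of both edges at $c$ gives $c \notin \mathbf{S}$, and deletion of $a$--$b$ in $\mathcal{D}^{*}_{\overline{\mathbf{S}}}$ puts its $\mathcal{D}^{*}$-head in $\mathbf{S}$. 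If both $a, b \in \mathbf{S}$, survival of $a$--$c$ and $b$--$c$ forces $a \to c$ and $b \to c$ outright; if only $b \in \mathbf{S}$, then $a \to b$ and $b \to c$ are forced in $\mathcal{D}^{*}$, and $c \to a$ would close the directed cycle $a \to b \to c \to a$, so acyclicity gives $a \to c$. Either way the v-structure transfers.

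The genuine gap is the step you explicitly leave open: transferring the local-invariance clause. Per-target Markov equivalence does \emph{not} suffice, because members of $\mathcal{MK_S}(\mathcal{D})$ are tuples coupled across targets by condition (2) of \cref{def:intervention-densities}, with parent sets read in a specific DAG; without a mechanism for re-reading invariance in $\mathcal{D}^{*}$, direction (2) $\Rightarrow$ (1) is unproved, and this is precisely the content that distinguishes the theorem from a per-target corollary of Verma--Pearl. The missing mechanism is a covered-edge-reversal argument in the spirit of Chickering's transformational characterization: since $\mathcal{D}$ and $\mathcal{D}^{*}$ share skeleton and v-structures, one passes between them by covered reversals of exactly the disagreeing edges; for any such edge $u$--$v$ (oriented $u \to v$ in $\mathcal{D}$, $v \to u$ in $\mathcal{D}^{*}$), equality of interventional skeletons forces $u \in \mathbf{S} \Leftrightarrow v \in \mathbf{S}$ for every $\mathbf{S} \in \mcS$, since the edge is cut iff its head is intervened. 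Hence $\setdef{\mathbf{S}}{u \notin \mathbf{S}} = \setdef{\mathbf{S}}{u, v \notin \mathbf{S}}$, and under a covered reversal the only conditionals that change, those of $u$ and $v$, are functionals of $P_{\mathbf{S}}(x_u, x_v \mid x_{\mathsf{Pa}(u)})$, which is invariant across exactly that set of targets; invariance therefore transfers step by step. A smaller soft spot: in (1) $\Rightarrow$ (2) you must exhibit a coherent \emph{tuple} witness, not merely a per-target density in one observational model but not the other; this is standard (push a density faithful to $\mathcal{D}$ through the truncated factorization, swapping the roles of $\mathcal{D}$ and $\mathcal{D}^{*}$ as needed), but it should be said. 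With the covered-reversal lemma added, your plan becomes a complete, essentially self-contained alternative to citing \cite{hauser2012characterization}; as written, it is an honest roadmap whose hardest leg is still unbuilt.
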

\vspace{-1.2em}

\begin{proof}[Proof of \cref{our_lemma}]
    From the definition of interventional markov equivalence, for any two Markov Equivalent DAGs, $\mathcal{D}$ and $\mathcal{D}^{*},$ if they are not $\mathcal{S}$-Markov Equivalent, i.e., $\mathcal{D} \not\sim_{\mathcal{S}} \mathcal{D}^{*},$ this implies $\mathcal{MK}_{\mathcal{S}}(\mathcal{D}) \neq \mathcal{MK}_{\mathcal{S}}(\mathcal{D}^{*}),$ which in turn implies there exists $S \in \mathcal{S}$ such that $P^{\mathcal{D}}_{\mathbf{S}}(\mathbf{v}) \neq P^{\mathcal{D}^{*}}_{\mathbf{S}}(\mathbf{v})$. Also, note that for any set of nodes $\mathbf{S}$, the DAGs with incoming edges to $S$ removed $\mathcal{D}_{\overline{\mathbf{S}}}$ and $\mathcal{D}^{*}_{\overline{\mathbf{S}}}$ share the same skeleton if and only if they have the same cutting edge orientations at $\mathbf{S}$, i.e., $\mathsf{C}(\mathbf{S})$. Now, considering $\mathcal{S} = \{\emptyset, \mathbf{S}\}$ and using \cref{thm:interventional-markov-equivalence}, we have an equivalence relationship between statements $\ref{itm:interventional-markov-equivalence}$ and $\ref{itm:skeleton-v-structures}$, i.e., $\ref{itm:interventional-markov-equivalence} \iff \ref{itm:skeleton-v-structures}$. This equivalence implies that for any two Markov Equivalent DAGs, if they have different cutting-edge configurations $\mathsf{C}(\mathbf{S})$, statement $\ref{itm:skeleton-v-structures}$ does not hold, which, in turn, implies that statement $\ref{itm:interventional-markov-equivalence}$ does not hold. Consequently, $\mathcal{D} \not\sim_{\mathcal{S}} \mathcal{D}^{*}$, suggesting that the joint interventional distribution will differ across the two DAGs, i.e., $P^{\mathcal{D}}_{\mathbf{S}}(\mathbf{v}) \neq P^{\mathcal{D}^{*}}_{\mathbf{S}}(\mathbf{v}).$ The converse of the previous statement, which is that if two Markov equivalent DAGs have the same interventional distribution with some target $\mathbf{S}$, i.e., $P^{\mathcal{D}}_{\mathbf{S}}(\mathbf{v}) = P^{\mathcal{D}^{*}}_{\mathbf{S}}(\mathbf{v})$, they must have the same cutting edge configuration at the target $\mathbf{S}$, is also true.
\end{proof}

\begin{algorithm}[b]
\small
	{	\SetKwInOut{Input}{Input}
		\SetKwInOut{Set}{Set}
		\SetKwInOut{Title}{Algorithm}
		\SetKwInOut{Require}{Require}
		\SetKwInOut{Output}{Output}
            \SetKwInOut{Initialization}{Initialization}
		
		{	
			\Input{CPDAG $\mathcal{C}$, $\delta$, $\mathcal{I}$ and $\tupdef{P_{\mathbf{s}}}{\mathbf{s}\in \mathcal{I}}$ }
			
			% \Initialization{$\forall \mathbf{s} \in \mathcal{I}: N_t(\mathbf{s}) = 0$}

            % \Set{$\epsilon \in (0, 1 / \abs{\mathcal{I}})$ and $\gamma>0$}

		\Output{causal discovery result}
		}
		\medskip          
            
            % \nl For each $x \in \until k$, compute the bounds over $\mu_x$: $l_x$ and $h_x$.

            % \nl Remove any arm $x$ with $h_x \leq l_{max}$, resulting in $k'$ remaining arms.
  
            % \nl Pull each remaining arm once
            select each intervention $\mathbf{s} \in \mathcal{I}$ once
            \smallskip
            
            \While{ $f_t(d_t) \leq \delta $ or $d_t < \abs{\mathcal{I}} (\abs{\omega(V)}-1)$}{

            compute $\boldsymbol{\alpha}_t$ via~\eqref{aleq 4} (or~\eqref{aleq2 2})            
                        
            % where $\mathcal{C}_t$ is the confidence set of causal models defined as            
            % $\mathcal{C}_t := \Bigsetdef{ \mathcal{P}' \in \mathcal{M}_{\mathcal{M}}}{\sum_{\mathbf{s} \in \mathcal{I}} N_t(\mathbf{s}) D( \Bar{P}_{\mathbf{s},t} \parallel P'_{\mathbf{s}}) \leq (1+\gamma)\ln t}.$

            \nl \If{$\min_{\mathbf{s} \in \mathcal{I}} N_t(\mathbf{s}) < \sqrt{t}$}{
            \emph{\% forced exploration}
            
            select $\mathbf{s}_t = \argmin_{\mathbf{s} \in \mathcal{I}} N_t(\mathbf{s})$  
            } 
            
            \nl \Else{
            
            \emph{\% allocation matching}
            
            select $\mathbf{s}_t = \argmax_{\mathbf{s} \in \mathcal{I}}  \sum_{i=1}^{t} {\alpha}_{\mathbf{s}, i} / N_i(\mathbf{s}) $}
            
            \nl{observe $\mathbf{v}_t$ and update
            
            $N_t(\mathbf{s}_t)\leftarrow N_t(\mathbf{s}_t)+1$, $N_t(\mathbf{s}_t,\mathbf{v}_t)\leftarrow N_t(\mathbf{s}_t, \mathbf{v}_t)+1$
            }
            }
            \textbf{return} $\mathcal{D}_t^*$ in~\eqref{aleq 1} (or $\tupdef{\mathsf{C}^*_t(\mathbf{S})}{\mathbf{S} \in \mathcal{S}}$ in~\eqref{aleq2 1})
              
        } 
		
		%  \nocaptionofalgo
		\caption{Track-and-stop Causal Discovery}
          
		\label{alg: TSCD}
\end{algorithm} 
% \DecMargin{.3em}

\section{Online Algorithm Design and Analysis }
We design a data-efficient causal discovery algorithm. After initialization, the CPDAG, a graph separating system, and all possible causal effects are available. We proceed to propose a track-and-stop causal discovery algorithm that adaptively selects informative interventions. We analyze it to show it can discover the true DAG with any given confidence level $1- \delta$ for any $\delta \in (0,1)$. In casual discovery, reaching a confidence level $1 - \delta$ itself is not a challenging task since the learner can take arbitrarily many interventional samples. The overarching objective is to minimize the number of interventions required to reach the accuracy level $\tau_\delta$. Since the stopping time $\tau_{\delta}$ is random, in fact, $\expt[\tau_{\delta}]$ is minimized. A sound algorithm needs to be instance-dependent, which means it is capable of detecting any DAG $\mathcal{D^*} \in [\mathcal{C}]$ if it is the ground truth. Also in line with the definition of stopping times, for a poorly designed algorithm, it is possible that $\tau_{\delta} = \infty$, which means the learner can never make a decision. Bringing both aspects together, a sound causal discovery algorithm is formally defined as follows.

\begin{definition}[Soundness of Algorithm]~\label{def: soundness}
    For a given confidence level $\delta \in (0, 1)$, a causal discovery algorithm $\tuple{\pi,\tau_{\delta}, \psi}$ is sound if for any $\mathcal{D}^* \in [\mathcal{C}]$, it satisfies
    \[\prob(\tau_{\delta} < \infty, \psi = \mathcal{D}^*) \geq 1 - \delta.\]
\end{definition}
The following theorem gives a lower bound on $\expt[\tau_\delta]$ for all sound algorithms to discover the true DAG, which serves as the ultimate target we follow in algorithm design. It has a similar form to the sampling complexity of the bandit problem, whose objective is to identify the optimal arm. The proof follows a similar procedure as~\cite{kaufmann2016complexity}, and we defer its proof to the appendix. 

\begin{theorem}\label{th: lowerbound}
    For the causal discovery problem, suppose the MEC represented by CPDAG $\mathcal{C}$ and observational distributions are available. Assume that $\tuple{\pi,\tau_{\delta}, \psi}$ is sound for $\mathcal{D}^*$ at confidence level $\delta\in(0,1)$. It holds that $\expt[ \tau_{\delta} ] \geq  \log({4}/{\delta}) / c(\mathcal{D}^*)$, where
    \begin{equation}\label{eq: lowerbound}
        c(\mathcal{D}^*) = \sup_{\boldsymbol{\alpha} \in \Delta(\mathcal{I})} \min_{ \mathcal{D} \in [\mathcal{C}] \setminus \mathcal{D}^* } \sum_{\mathbf{s} \in \mathcal{I}}  \alpha_{\mathbf{s}} \KL{P^{\mathcal{D}^*}_{\mathbf{s}}}{P^{\mathcal{D}}_{\mathbf{s}}},
    \end{equation}
and $\Delta(\mathcal{I}):= \bigsetdef{\boldsymbol{\alpha} \in \real_{\geq 0}^{\abs{\mathcal{I}}}}{ \sum_{\mathbf{s} \in \mathcal{I}} \alpha_{\mathbf{s}} = 1}$.
\end{theorem}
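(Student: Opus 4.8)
The plan is to establish the bound by a change-of-measure argument in the style of the pure-exploration lower bounds of \cite{kaufmann2016complexity}. Fix the true environment $\mathcal{D}^*$ and any competing DAG $\mathcal{D} \in [\mathcal{C}] \setminus \mathcal{D}^*$. Both induce the same observational law but, by \cref{our_lemma}, differ on at least one interventional distribution reachable through $\mathcal{I}$, so there is some $\mathbf{s}$ with $\KL{P^{\mathcal{D}^*}_{\mathbf{s}}}{P^{\mathcal{D}}_{\mathbf{s}}} > 0$; discreteness together with shared support keeps every such divergence finite, which is what makes the change of measure legitimate. I would treat the two environments as probability measures $\prob_{\mathcal{D}^*}$ and $\prob_{\mathcal{D}}$ on the space of interaction trajectories $(\mathbf{s}_1,\mathbf{v}_1,\ldots)$ generated by the same algorithm $\tuple{\pi,\tau_{\delta},\psi}$.

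First I would write the log-likelihood ratio of a trajectory up to the stopping time. The crucial observation is that the sampling rule $\pi_t(\mathbf{s}_t \mid \text{history})$ depends only on the observed history and not on the underlying DAG, so the policy factors cancel and only the environment terms $P_{\mathbf{s}_t}(\mathbf{v}_t)$ survive. Applying Wald's identity for the stopping time $\tau_{\delta}$ (adapted to $\seqdef{\mathcal{F}_t}{t \in \natural_{>0}}$) yields the key identity
\[
\expt_{\mathcal{D}^*}\Big[\log \tfrac{d\prob_{\mathcal{D}^*}}{d\prob_{\mathcal{D}}}\Big] = \sum_{\mathbf{s} \in \mathcal{I}} \expt_{\mathcal{D}^*}[N_{\tau_{\delta}}(\mathbf{s})]\, \KL{P^{\mathcal{D}^*}_{\mathbf{s}}}{P^{\mathcal{D}}_{\mathbf{s}}},
\]
where $N_{\tau_{\delta}}(\mathbf{s})$ counts how often $\mathbf{s}$ is selected before stopping.

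Next I would invoke the transportation (data-processing) inequality: for any event $\mathcal{E} \in \mathcal{F}_{\tau_{\delta}}$ the expected log-likelihood ratio dominates the binary relative entropy $d\big(\prob_{\mathcal{D}^*}(\mathcal{E}),\prob_{\mathcal{D}}(\mathcal{E})\big)$ between the two Bernoulli laws of $\mathcal{E}$. Taking $\mathcal{E} = \{\psi = \mathcal{D}^*\}$ and using soundness (\cref{def: soundness}) --- which forces $\prob_{\mathcal{D}^*}(\mathcal{E}) \geq 1-\delta$ while, applied with $\mathcal{D}$ as the ground truth, $\prob_{\mathcal{D}}(\mathcal{E}) \leq \delta$ --- together with the standard lower bound on $d(1-\delta,\delta)$, gives $\sum_{\mathbf{s}} \expt_{\mathcal{D}^*}[N_{\tau_{\delta}}(\mathbf{s})]\, \KL{P^{\mathcal{D}^*}_{\mathbf{s}}}{P^{\mathcal{D}}_{\mathbf{s}}} \geq \log(4/\delta)$ for every alternative $\mathcal{D}$. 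Finally I would convert expected counts into an allocation: since $\sum_{\mathbf{s}} N_{\tau_{\delta}}(\mathbf{s}) = \tau_{\delta}$, setting $\bar{\alpha}_{\mathbf{s}} = \expt_{\mathcal{D}^*}[N_{\tau_{\delta}}(\mathbf{s})]/\expt[\tau_{\delta}]$ defines a point $\bar{\boldsymbol{\alpha}} \in \Delta(\mathcal{I})$. Factoring out $\expt[\tau_{\delta}]$, minimizing over $\mathcal{D} \in [\mathcal{C}] \setminus \mathcal{D}^*$, and using that $\bar{\boldsymbol{\alpha}}$ is feasible (so its value is at most the supremum $c(\mathcal{D}^*)$) yields $\expt[\tau_{\delta}]\, c(\mathcal{D}^*) \geq \log(4/\delta)$, which is the claim.

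I expect the delicate step to be the Wald-identity decomposition at the random time $\tau_{\delta}$ --- justifying the interchange of expectation with the stopping-time sum, and carefully checking that the policy terms cancel so that only interventional KL terms remain --- rather than the final optimization, which is bookkeeping. Ensuring finiteness and absolute continuity of the per-step likelihood ratios (guaranteed by discreteness and \cref{our_lemma}) is what underpins the entire change of measure.
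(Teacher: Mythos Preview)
Your proposal is correct and follows essentially the same three-step skeleton as the paper: a Wald/divergence decomposition of the trajectory log-likelihood ratio into $\sum_{\mathbf{s}} \expt[N_{\tau_\delta}(\mathbf{s})]\,\KL{P^{\mathcal{D}^*}_{\mathbf{s}}}{P^{\mathcal{D}}_{\mathbf{s}}}$, an information-theoretic inequality linking that sum to the confidence level, and normalization of the expected counts into an allocation in $\Delta(\mathcal{I})$. The only difference is in the middle step --- you pass through the binary relative entropy $d\big(\prob_{\mathcal{D}^*}(\mathcal{E}),\prob_{\mathcal{D}}(\mathcal{E})\big)$ via the data-processing inequality as in \cite{kaufmann2016complexity}, whereas the paper applies the Bretagnolle--Huber inequality directly to the trajectory measures; these are interchangeable standard tools (and in fact neither route literally produces the constant $\log(4/\delta)$ --- Bretagnolle--Huber yields $\log\!\big(1/(4\delta)\big)$ and $d(1-\delta,\delta)$ is also below $\log(4/\delta)$ --- so the constant in the statement appears to be a typo that affects both arguments equally).
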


The lower bound can be interpreted as follows. By mixing up interventions in $\mathcal{I}$ with oracle allocation $\boldsymbol{\alpha}$, the average information distance generated from $\mathcal{D}^*$ to $\mathcal{D}$ is $\sum_{\mathbf{s} \in \mathcal{I}}  \alpha_{\mathbf{s}} \kl{P^{\mathcal{D}^*}_{\mathbf{s}}}{P^{\mathcal{D}}_{\mathbf{s}}}$. To identify the true DAG with probability at least $1-\delta$,~\cref{th: lowerbound} suggest at least $\log(4/\delta )$ information distance is required to be generated from $\mathcal{D}^*$ to any other $\mathcal{D} \in [\mathcal{C}] \setminus \mathcal{D}^*$, which explains the minimization term in~\eqref{eq: lowerbound}. The parameter $\alpha$ that solves~\eqref{eq: lowerbound} suggests an optimal allocation of interventions in $\mathcal{I}$. However, computing $\alpha$ requires true interventional distribution $P_{\mathbf{s}}$ for each $\mathbf{s} \in \mathcal{I}$. The allocation matching principle essentially replaces the true interventional distributions with an estimated one to compute $\alpha$ and select samples to match it. The key idea will be elaborated in the upcoming algorithm design section.

\subsection{The Exact Algorithm}
In this section, we propose the track-and-stop causal discovery whose pseudo-code is shown in Algorithm~\ref{alg: TSCD}. It is asymptotically optimal as it achieves the $O(\log(1/{\delta}) / c(\mathcal{D}^*))$ lower bound in~\cref{th: lowerbound} on the expected number of required interventions. However, it is computationally intense. In the next section, we propose its practical implementation that reduces computational complexity at the cost of acceptable reduced efficiency.

\textbf{Tracking and Termination Condition:} Let $N_t(\mathbf{s})$ be the number of intervention $do(\mathbf{S} = \mathbf{s})$ taken till $t$, and let $N_t(\mathbf{s}, \mathbf{v})$ be the number of times $\mathbf{v}$ is observed by taking intervention $do(\mathbf{S} = \mathbf{s})$. The most probable DAG can be computed as
\begin{equation}\label{aleq 1}
    \mathcal{D}^*_t \in \argmax_{\mathcal{D} \in [\mathcal{C}]} \sum_{\mathbf{s} \in \mathcal{I}}  N_t(\mathbf{s}, \mathbf{v}) \log P^{\mathcal{D}}_{\mathbf{s}}(\mathbf{v}),
\end{equation}
where $P^{\mathcal{D}}_{\mathbf{s}}$ can be computed based on the configuration of cutting edges of $\mathbf{S}$ in $\mathcal{D}$ according to~\cref{iden_thm}. Let $\Bar{P}_{\mathbf{s}, t} (\mathbf{v}) = N_t(\mathbf{v}, \mathbf{s}) / N_t(\mathbf{s})$ be the empirical distribution conditioned on taking intervention $do(\mathbf{S} = \mathbf{s})$. To evaluate if $\mathcal{D}^*_t$ has reached the confidence level $1-\delta$, we compute 
\begin{equation}\label{aleq 2}
    d_t =  \min_{\mathcal{D} \in [\mathcal{C}]\setminus \mathcal{D}^*_t} \sum_{\mathbf{s} \in \mathcal{I}} N_t(\mathbf{s})  \kl{\Bar{P}_{\mathbf{s}, t}}{P^{\mathcal{D}}_{\mathbf{s}}},
\end{equation}
which is the cumulative information distance between the empirical distribution and interventional distribution from the second most probable DAG. The algorithm terminates if $f_t(d_t) < \delta$, where 
\begin{equation}
    f_t(x)=\bigg (\frac{x \lceil x \ln t  + 1 \rceil 2 e}{\abs{\mathcal{I}} (\abs{\omega(V)}-1)}\bigg)^{\abs{\mathcal{I}} (\abs{\omega(V)}-1)}  e^{1 - x},
\end{equation}
and returns $\mathcal{D}^*_t$. The function $f_t(x)$ is selected according to a concentration bound for Categorical distributions~\cite{van2020optimal}, and it guarantees the probability of $\mathcal{D}^*_t \neq \mathcal{D}^*$ to be lower than $\delta$.

\textbf{Intervention Selection Rule:}
Inspired by~\cref{th: lowerbound}, we intend to design an efficient causal discovery strategy such that $N_t(\mathbf{s}) \approx \alpha_{\mathbf{s}} t$ for each $\mathbf{s}\in\mathcal{I}$. Since ground truth $(P_{\mathbf{s}}^*)_{\mathbf{s}\in\mathcal{I}}$ is unavailable, at each time $t$, we use $(\Bar{P}_{\mathbf{s},t})_{\mathbf{s}\in\mathcal{I}}$ instead to solve for $\alpha_t$ to approximate the oracle allocation $\boldsymbol{\alpha}$. To make this approach work, we need to ensure every intervention is taken a sufficient amount of times so that each $\Bar{P}_{\mathbf{s},t}$ converges to the $P^*_{\mathbf{s}}$ in a fast enough rate. Accordingly, if $\min_{\mathbf{s} \in \mathcal{I}} N_t(\mathbf{s}) \leq \sqrt{t}$, the forced exploration step selects the least selected intervention so that it guarantees that each intervention is selected at least $\Omega(\sqrt{t})$ times. 

To solve for the sequence $\seqdef{\boldsymbol{\alpha}_t}{t=1}^{T}$, we substitute $\Bar{P}_{\mathbf{s},t}$ and $\mathcal{D}_t^*$ into~\eqref{eq: lowerbound} and take an online optimization procedure to
\begin{equation}\label{aleq 3}
    \maximize_{\forall t: \boldsymbol{\alpha}_t \in \Delta(\mathcal{I})} \sum_{t = 1}^T \min_{\mathcal{D} \in [\mathcal{C}]\setminus \mathcal{D}^*_t} \sum_{\mathbf{s} \in \mathcal{I}} \alpha_{\mathbf{s}, t} \kl{\Bar{P}_{\mathbf{s}, t}}{P^{\mathcal{D}}_{\mathbf{s}}}.
\end{equation}
Let $\mathcal{D}'_t \in \argmin_{\mathcal{D} \in [\mathcal{M}]\setminus \mathcal{D}^*_t} \sum_{\mathbf{s} \in \mathcal{I}} \alpha_{\mathbf{s}, t} \kl{\Bar{P}_{\mathbf{s}, t}}{P^{\mathcal{D}}_{\mathbf{s}}}$ and $\boldsymbol{r}_t \in \real^{\abs{\mathcal{I}}}$ be a vector with entries $r_{\mathbf{s},t} = \kl{\Bar{P}_{\mathbf{s}, t}}{P^{\mathcal{D}'_t}_{\mathbf{s}}}$. Note that $\boldsymbol{\alpha}_t \in \Delta(\mathcal{I})$ for all $t$. We follow the AdaHedge algorithm~\citep{de2014follow} to set
\begin{equation}\label{aleq 4}
    \alpha_{\mathbf{\mathbf{s}}, 1} = \frac{1}{\abs{\mathcal{I}}}, \,\,\alpha_{\mathbf{s},t+1} = \frac{\alpha_{\mathbf{s},t} e^{\eta_t r_{\mathbf{s},t} }}{\sum_{{\mathbf{s} \in \mathcal{I}}} \alpha_{\mathbf{s},t} e^{\eta_t r_{\mathbf{s},t} }} , \forall{\mathbf{s} \in \mathcal{I}},
\end{equation}
where $\eta_t$ is a decreasing learning rate with update rule
\[\eta_{t+1} = \frac{\ln K}{\Delta_{t}}, \,\, \Delta_{t} = \sum_{i=1}^t \frac{1}{\eta_t} \ln \langle \boldsymbol{\alpha}_t, e^{\eta_t \boldsymbol{r}_t}\rangle - \sum_{\mathbf{s} \in \mathcal{I}} \alpha_{\mathbf{s}, t} r_{\mathbf{s},t}.\]
To make $N_t(\mathbf{s})$ track $\sum_{i=1}^t \alpha_{\mathbf{s}}$, the allocation matching step selects $\argmax_{\mathbf{s} \in \mathcal{I}}  \sum_{i=1}^t {\alpha}_{\mathbf{s}, i} / N_t(\mathbf{s})$.
\begin{remark}
    The proposed algorithm is computationally intensive since in equations~\eqref{aleq 1}~\eqref{aleq 2} and~\eqref{aleq 3}, it needs to enumerate DAGs in $[\mathcal{C}]$ which can be exponentially many. The worst-case computational complexity can be $\Omega(2^n)$, where $n$ is the number of unoriented edges in $\mathcal{C}$.
\end{remark}

\subsection{Practical Algorithm Implementation}
In a practical implementation of track-and-stop causal discovery, we treat learning the configuration of edge cut $\mathsf{C}^*(\mathbf{S})$ for each node set $\mathbf{S} \in \mathcal{S}$ as an individual task, and apply a local learning strategy. The global strategy assigns allocation according to feedback from local learning results.

\textbf{Local strategy:} With~\cref{our_lemma}, the intervention $\mathbf{S} \in \mathcal{S}$ is sufficient to learn the edge cut corresponding to $\mathbf{S}$. At time $t$, we compute a local allocation rule $\boldsymbol{\xi}^{\mathbf{S}}_{t} \in \Delta(\omega(\mathbf{S}))$ to learn the edge cut of $\mathbf{S}$. Let the most probable configuration of edge cut be computed as
\begin{equation}\label{aleq2 1}
    \mathsf{C}^*_t(\mathbf{S}) = \argmax_{\mathsf{C}(\mathbf{S})} \sum_{\mathbf{s} \in \omega(\mathbf{S})}  N_t(\mathbf{s}, \mathbf{v}) \log P^{\mathsf{C}(\mathbf{S})}_{\mathbf{s}} (\mathbf{v}).
\end{equation}
Similar to~\eqref{aleq 3}, we solve $\xi_{\mathbf{S},t}$ via online optimization
\[\maximize_{\forall t: \boldsymbol{\xi}^{\mathbf{S}}_t \in \Delta(\omega(\mathbf{S}))} \sum_{t = 1}^T \min_{\mathsf{C}(\mathbf{S}) \neq \mathsf{C}_t^*(\mathbf{S})} \sum_{\mathbf{s} \in \omega(\mathbf{S})} \xi^{\mathbf{S}}_{\mathbf{s}, t} \kl{\Bar{P}_{\mathbf{s}, t}}{P^{\mathsf{C}(\mathbf{S})}_{\mathbf{s}}}.\]

The update rule for $\boldsymbol{\xi}^\mathbf{S}_{t} \!\!\!\in \!\!\Delta(\omega(\mathbf{S}))$ is similar to~\eqref{aleq 4}. Let
 $\mathsf{C}'_t(\mathbf{S}) \!\! \in \!\!\argmin_{\mathsf{C}(\mathbf{S}) \neq \mathsf{C}_t^*(\mathbf{S})} \sum_{\mathbf{s} \in \omega(\mathbf{S})} \xi^{\mathbf{S}}_{\mathbf{s}, t} \kl{\Bar{P}_{\mathbf{s}, t}}{P^{\mathsf{C}(\mathbf{S})}_{\mathbf{s}}}$ and define vector $\boldsymbol{r}_t^{\mathbf{S}} \in \real^{\abs{\omega(\mathbf{S})}}$ with each entry to be $r_{\mathbf{s},t}^{\mathbf{S}} = \kl{\Bar{P}_{\mathbf{s}, t}}{P^{\mathsf{C}'_t(\mathbf{S}) }_{\mathbf{s}}}$. Then we set
\begin{equation*}
    \xi^{\mathbf{S}}_{\mathbf{\mathbf{s}}, 1} = \frac{1}{\abs{\omega(\mathbf{S})}}, \,\,\xi^{\mathbf{S}}_{\mathbf{s},t+1} = \frac{\xi^{\mathbf{S}}_{\mathbf{s},t} e^{\eta_t r_{\mathbf{s},t} }}{\sum_{\mathbf{s} \in \omega(\mathbf{S})}\xi^{\mathbf{S}}_{\mathbf{s},t} e^{\eta_t r_{\mathbf{s},t} }} , \forall{\mathbf{s} \in \omega(\mathbf{S})},
\end{equation*}
where $\eta_{t+1} = {\ln K}/{\Delta_{t}}$ and
\[\Delta_{t} = \sum_{i=1}^t \frac{1}{\eta_t} \ln \langle \boldsymbol{\xi}^{\mathbf{S}}_{t}, e^{\eta_t \boldsymbol{r}_t(\mathbf{S})}\rangle - \sum_{\mathbf{s} \in \omega(\mathbf{S})} \xi^{\mathbf{S}}_{\mathbf{s}, t} r_{\mathbf{s},t}^{\mathbf{S}}.\]

\textbf{Global Strategy: }
To allocate interventions on different node sets in $\mathcal{S}$, we design a global allocation strategy $\boldsymbol{\gamma}_t \in \Delta(\mathcal{S})$ at each step. Taking feedback from $\abs{\mathcal{S}}$ local strategies, let $c_t (\mathbf{S}) = \frac{1}{t} \sum_{i=1}^t \sum_{\mathbf{s} \in \omega(\mathbf{S})} \xi^{\mathbf{S}}_{\mathbf{s}, t}  r_{\mathbf{s},t}^{\mathbf{S}}$. The value $1/c_t (\mathbf{S})$ corresponds to the estimated difficulty of learning the edge cut of $\mathbf{S}$. Accordingly, set $\gamma_{\mathbf{S},t} = \frac{{1}/{c_t (\mathbf{S})}}{\sum_{\mathbf{S} \in \mathcal{S}} {1}/{c_t (\mathbf{S})}} $ and let
\begin{equation}\label{aleq2 2}
    \alpha_{\mathbf{s}} = \gamma_{\mathbf{S},t} \xi^{\mathbf{S}}_{\mathbf{s},t} \, \forall \mathbf{S} \in \mathcal{I}.
\end{equation}

\textbf{Tracking and Termination: }
The algorithm keeps track of $\tupdef{\mathsf{C}^*_t(\mathbf{S})}{\mathbf{S} \in \mathcal{S}}$ as the candidate causal discovery result. To evaluate if the confidence level $\delta$ is reached about $ \tupdef{\mathsf{C}^*_t(\mathbf{S})}{\mathbf{S} \in \mathcal{S}}$, for each $\mathbf{S}\in \mathcal{S}$, let
\[Z_t(\mathbf{S})=  \min_{\mathsf{C}(\mathbf{S}) \neq \mathsf{C}^*_t(\mathbf{S})} \sum_{\mathbf{s} \in \omega(\mathbf{S})} N_t(\mathbf{s})  \kl{P^{\mathsf{C}(\mathbf{S})}_{\mathbf{s}, t}}{P^{\mathsf{C}^*_t(\mathbf{S})}_{\mathbf{s}}},\]
which is the minimal additional information distance by changing the edge cut of $\mathbf{S}$. Then, we set $d_t$ to be
\[d_t = \min_{\mathbf{S}\in\mathcal{S}} Z_t (\mathbf{S}) + \sum_{\mathbf{S}\in\mathcal{S}} \sum_{\mathbf{s} \in \omega(\mathbf{S})}  N_t(\mathbf{s}) \kl{\Bar{P}_{\mathbf{s}, t}}{P^{\mathsf{C}^*_t(\mathbf{S})}_{\mathbf{s}}}.\]
If $f_t(d_t) < \delta$, the algorithm stops and returns $\tupdef{\mathsf{C}^*_t(\mathbf{S})}{\mathbf{S} \in \mathcal{S}}$. Other aspects of the algorithm remains unchanged.
\begin{remark}
    Instead of enumerating all DAGs in $[\mathcal{C}]$, the practical implementation enumerates configurations of cutting edges for each $\mathbf{S} \in \mathcal{S}$. It is possible to output $\tupdef{\mathsf{C}_t(\mathbf{S})}{\mathbf{S} \in \mathcal{S}}$ with contradictory edge orientations or violation of the DAG criteria. But the overall probability of $\tupdef{\mathsf{C}^*_{\tau_{\delta}}(\mathbf{S})}{\mathbf{S} \in \mathcal{S}}$ not matching with the true DAG $\mathcal{D}^*$ is bounded by $\delta$. If $\tupdef{\mathsf{C}^*_{\tau_{\delta}}(\mathbf{S})}{\mathbf{S} \in \mathcal{S}}$ is not a DAG, it is suggested to reduce $\delta$ and continue the causal discovery experiment.
\end{remark}

\subsection{An Asymptotic Analysis of Algorithm}
Let $\mathcal{A}_{\mathsf{I}}$ and $\mathcal{A}_{\mathsf{P}}$ denote the exact track-and-stop causal discovery algorithm and its practical implementation, respectively. To characterize the performance for $\mathcal{A}_{\mathsf{P}}$, we define 
\[\underline{c}(\mathcal{D}^*) \!:= \!\!\!\sup_{\boldsymbol{\alpha} \in \Delta(\mathcal{I})} \min_{\mathbf{S} \in \mathcal{S}} \min_{\mathsf{C}(\mathbf{S}) \neq \mathsf{C}^*(\mathbf{S})} \! \sum_{\mathbf{s} \in \omega(\mathbf{S})} \!\! \alpha_{\mathbf{s}} \kl{P^{\mathcal{D}^*\!\!}_{\mathbf{s}}}{P^{\mathsf{C}(\mathbf{S})}_{\mathbf{s}}},\]
which is a lower bound for $c(\mathcal{D}^*)$ in~\eqref{eq: lowerbound}.
\begin{theorem} \label{th: upperbound}
For the causal discovery problem, suppose the MEC represented by CPDAG $\mathcal{C}$ and observational distributions are available. If the faithfulness assumption in~\cref{def: faith} holds, for both $\mathcal{A}_{\mathsf{I}}$ and $\mathcal{A}_{\mathsf{P}}$,
\begin{itemize}
    \item $\prob(\psi \neq \mathcal{D}^*) \leq \delta$ and $\prob(\tau_{\delta} = \infty) = 0$.
    \item The expected number of required interventions
    \[\lim_{\delta \rightarrow 0} \frac{\log (1/\delta)}{ \expt[\tau_{\delta}]} = \begin{cases}
        {c}(\mathcal{D}^*), & \mathcal{A}_{\mathsf{I}}, \\
        \underline{c}(\mathcal{D}^*), & \mathcal{A}_{\mathsf{P}},\\
    \end{cases}\]
    where ${c}(\mathcal{D}^*) \geq \underline{c}(\mathcal{D}^*)$.
\end{itemize}    
\end{theorem}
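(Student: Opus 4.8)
The plan is to prove both bullets for $\mathcal{A}_{\mathsf{I}}$ in full and then reduce $\mathcal{A}_{\mathsf{P}}$ to the same machinery, following the track-and-stop template of \citet{kaufmann2016complexity, wei2024approximate}. For correctness, I would observe that a wrong output at a stopping time $t$ (i.e.\ $\mathcal{D}^*_t\neq\mathcal{D}^*$) makes $\mathcal{D}^*$ a feasible competitor in the minimisation defining $d_t$, so $d_t\le\sum_{\mathbf{s}\in\mathcal{I}}N_t(\mathbf{s})\KL{\bar{P}_{\mathbf{s},t}}{P^{\mathcal{D}^*}_{\mathbf{s}}}$; since $f_t$ is eventually decreasing, the test $f_t(d_t)<\delta$ then forces the true-DAG statistic $f_t\big(\sum_{\mathbf{s}}N_t(\mathbf{s})\KL{\bar{P}_{\mathbf{s},t}}{P^{\mathcal{D}^*}_{\mathbf{s}}}\big)<\delta$ across the threshold, an event the uniform categorical deviation bound of \citet{van2020optimal} (which dictates the very form of $f_t$) caps at probability $\delta$ after a union bound over the finitely many targets. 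For finiteness, forced exploration gives $N_t(\mathbf{s})=\Omega(\sqrt t)$, hence $\bar{P}_{\mathbf{s},t}\to P^*_{\mathbf{s}}$ almost surely; by \cref{our_lemma} distinct DAGs in $[\mathcal{C}]$ induce distinct interventional tuples, so $\mathcal{D}^*_t=\mathcal{D}^*$ for all large $t$, whence $d_t$ grows linearly in $t$ while $f_t^{-1}(\delta)=O(\log t)$, forcing a stop and $\prob(\tau_\delta=\infty)=0$.

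For the matching constant of $\mathcal{A}_{\mathsf{I}}$ I would establish $d_t/t\to c(\mathcal{D}^*)$ almost surely. The AdaHedge update in~\eqref{aleq 4} is a no-regret learner on the linear rewards $\boldsymbol{r}_t$, so its running average converges to $\sup_{\boldsymbol{\alpha}}\min_{\mathcal{D}}\sum_{\mathbf{s}}\alpha_{\mathbf{s}}\KL{\bar{P}_{\mathbf{s},t}}{P^{\mathcal{D}}_{\mathbf{s}}}$, which by $\bar{P}_{\mathbf{s},t}\to P^*_{\mathbf{s}}$ and continuity of the min-of-linear map equals $c(\mathcal{D}^*)$; the allocation-matching step then makes $N_t(\mathbf{s})$ track $\sum_{i\le t}\alpha_{\mathbf{s},i}$. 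Inverting $f_t$ at $\tau_\delta$ gives $\tau_\delta\le(1+o(1))\log(1/\delta)/c(\mathcal{D}^*)$, and controlling the rare slow-convergence events in expectation yields $\liminf_{\delta\to0}\log(1/\delta)/\expt[\tau_\delta]\ge c(\mathcal{D}^*)$. The reverse inequality is immediate from \cref{th: lowerbound}: $\expt[\tau_\delta]\ge\log(4/\delta)/c(\mathcal{D}^*)$ gives $\limsup_{\delta\to0}\log(1/\delta)/\expt[\tau_\delta]\le c(\mathcal{D}^*)$, so the two bounds coincide.

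For $\mathcal{A}_{\mathsf{P}}$ the same steps apply verbatim to the per-target statistics $Z_t(\mathbf{S})$, with the local and global allocations $(\boldsymbol{\gamma}_t,\boldsymbol{\xi}^{\mathbf{S}}_t)$ driving $d_t/t\to\underline{c}(\mathcal{D}^*)$; here both directions of the limit follow from the algorithm's own concentration, as no external lower bound at rate $\underline{c}$ is invoked. To close with $c(\mathcal{D}^*)\ge\underline{c}(\mathcal{D}^*)$, fix $\boldsymbol{\alpha}$, let $\mathcal{D}^{\dagger}$ attain the inner minimum of $c$, and use \cref{our_lemma}: $\mathcal{D}^{\dagger}$ differs from $\mathcal{D}^*$ in some cut $\mathsf{C}(\mathbf{S}^{\dagger})$, and for $\mathbf{s}\in\omega(\mathbf{S}^{\dagger})$ the distribution $P^{\mathcal{D}^{\dagger}}_{\mathbf{s}}$ depends only on that cut. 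Dropping the remaining nonnegative KL terms gives $\sum_{\mathbf{s}\in\mathcal{I}}\alpha_{\mathbf{s}}\KL{P^*_{\mathbf{s}}}{P^{\mathcal{D}^{\dagger}}_{\mathbf{s}}}\ge\sum_{\mathbf{s}\in\omega(\mathbf{S}^{\dagger})}\alpha_{\mathbf{s}}\KL{P^*_{\mathbf{s}}}{P^{\mathsf{C}(\mathbf{S}^{\dagger})}_{\mathbf{s}}}$, which is at least the inner objective of $\underline{c}$ (whose minimisation ranges over all local cut configurations, including those not extending to a global DAG); taking suprema over $\boldsymbol{\alpha}$ proves the inequality.

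The main obstacle is the quantitative convergence $d_t/t\to c(\mathcal{D}^*)$ of the second paragraph and its passage to $\expt[\tau_\delta]$: the empirical distributions feeding AdaHedge are themselves random and non-stationary, the maximiser $\mathcal{D}^*_t$ and the optimal allocation need not be unique, and one must verify that the exceptional events on which the plug-in objective lags behind $c(\mathcal{D}^*)$ contribute only $o(\log(1/\delta))$ to the expected stopping time. Making this non-asymptotic enough to control an expectation, rather than merely an almost-sure limit, is where the bulk of the effort goes.
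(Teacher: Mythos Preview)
Your proposal is correct and follows essentially the same architecture as the paper: correctness via the competitor argument plus the categorical deviation bound of \citet{van2020optimal}; finiteness and the asymptotic rate via forced exploration, a tracking lemma bounding $|N_t(\mathbf{s})-\sum_{i\le t}\alpha_{\mathbf{s},i}|=O(\sqrt{t})$, the AdaHedge regret guarantee, and a minimax duality lemma that converts the max-over-arms into $c(\mathcal{D}^*)$ (this last being the explicit form of your ``no-regret implies saddle convergence''); and $c\ge\underline{c}$ by exactly the drop-nonnegative-terms argument you give. One point where you are actually more careful than the paper: for $\mathcal{A}_{\mathsf{P}}$ the appendix only establishes $\liminf_{\delta\to 0}\log(1/\delta)/\expt[\tau_\delta]\ge\underline{c}(\mathcal{D}^*)$ and then stops, whereas \cref{th: lowerbound} gives an upper direction only at rate $c(\mathcal{D}^*)$, not $\underline{c}(\mathcal{D}^*)$---so the stated equality genuinely needs the two-sided $d_t/t\to\underline{c}(\mathcal{D}^*)$ argument you flag.
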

The output of $\psi$ can be either $D_{\tau^*_\delta}$ or $\tupdef{\mathsf{C}_{\tau_{\delta}}(\mathbf{S})}{\mathbf{S} \in \mathcal{S}}$, and $\psi \neq \mathcal{D}^*$ in general means that the output does not match $\mathcal{D}^*$. The theorem shows that both $\mathcal{A}_{\mathsf{I}}$ and $\mathcal{A}_{\mathsf{P}}$ are sound, and $\mathcal{A}_{\mathsf{I}}$ archives a asymptotic performance matching with the lower bound in~\cref{th: lowerbound}.

\begin{figure*}[b!]
\centering
\begin{subfigure}[b]{4.6cm}
    \includegraphics[height = 3.0cm,width=4.6cm]{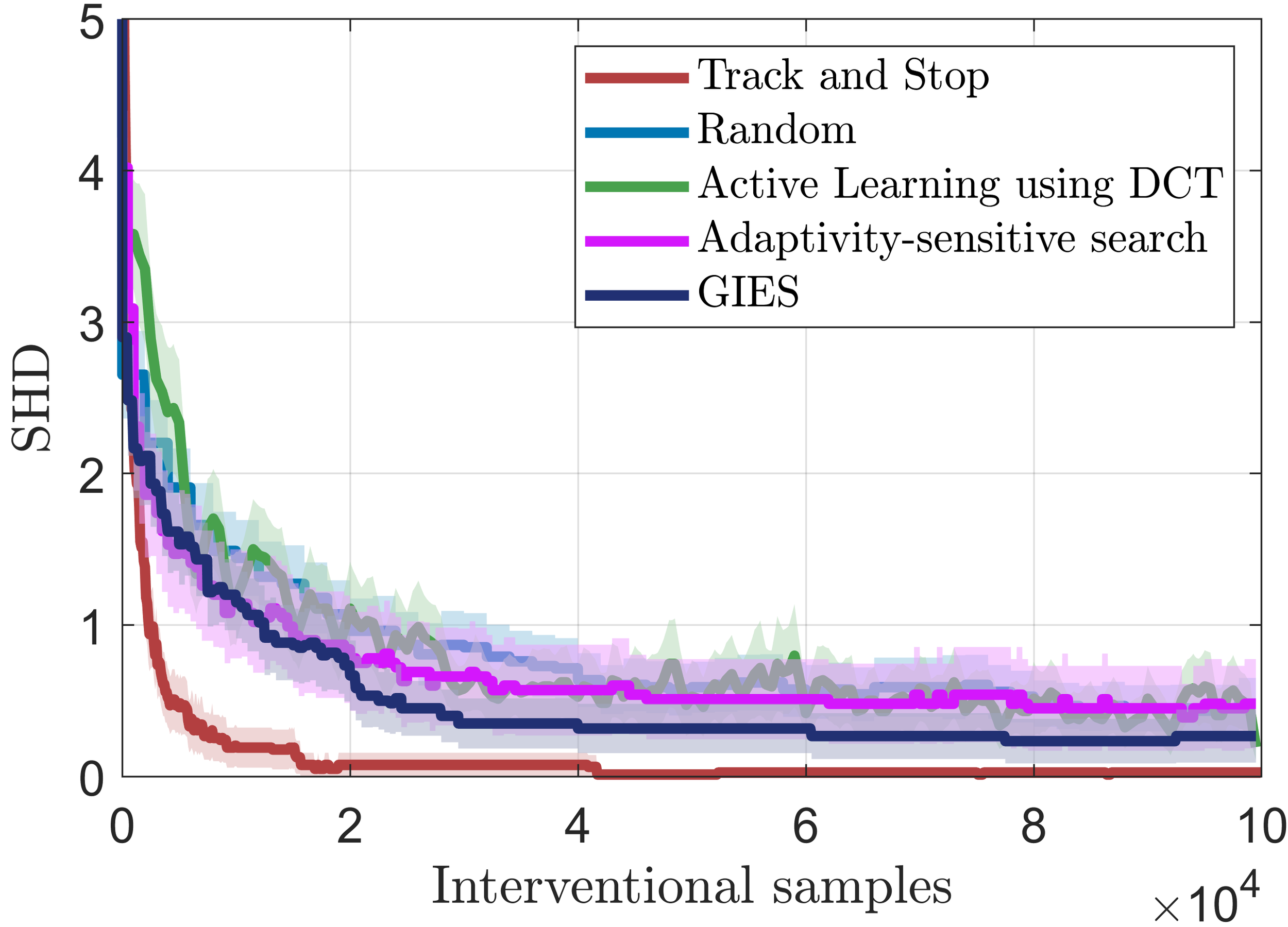}
    \subcaption{$N=5,\rho = 1$}
    \end{subfigure}
\enspace
\begin{subfigure}[b]{4.6cm}
    \includegraphics[height = 3.0cm,width=4.6cm]{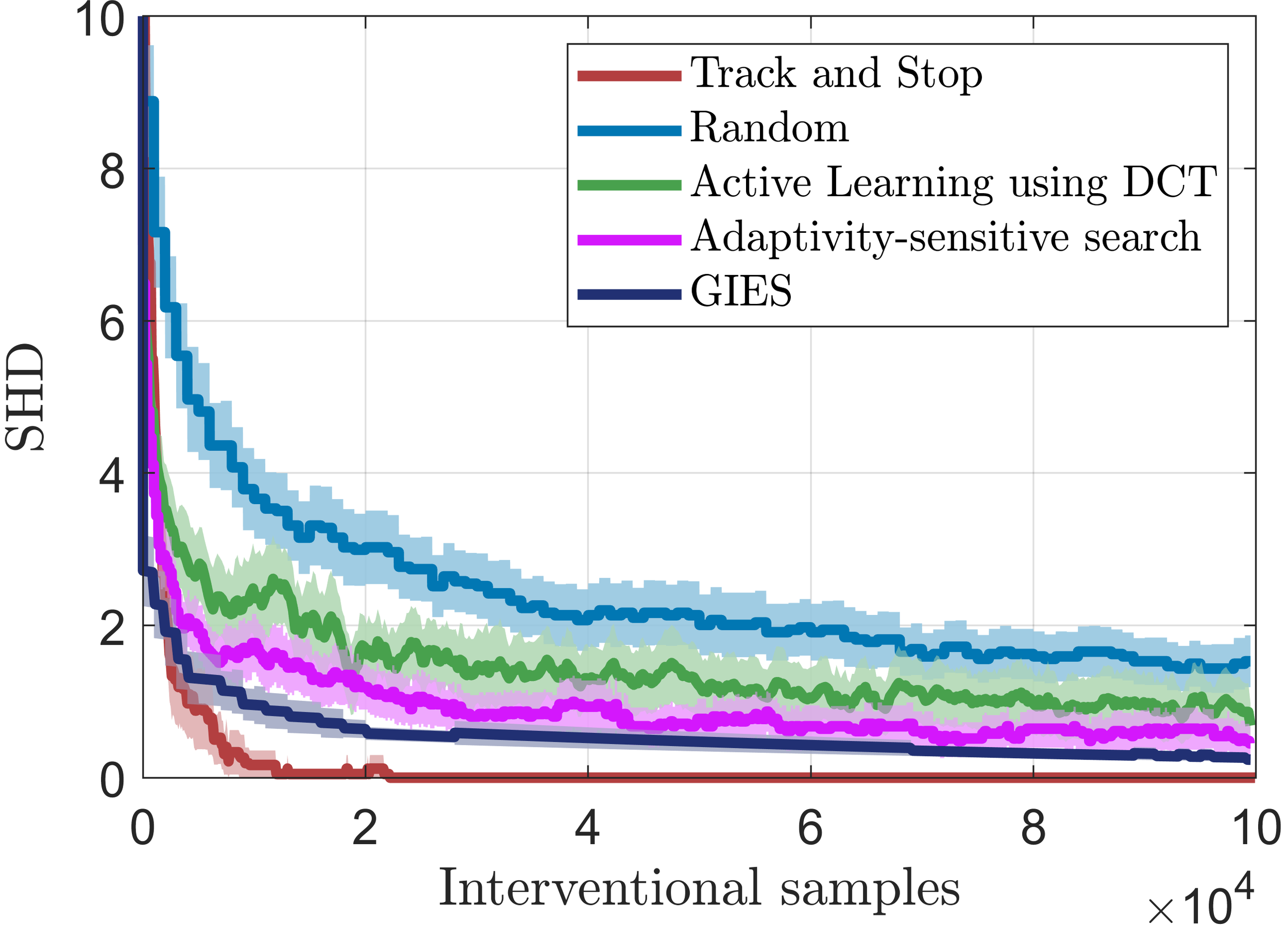}
  \subcaption{$N=6,\rho = 1$}
    \end{subfigure}
\enspace
\begin{subfigure}[b]{4.6cm}
    \includegraphics[height = 3.0cm,width=4.6cm]{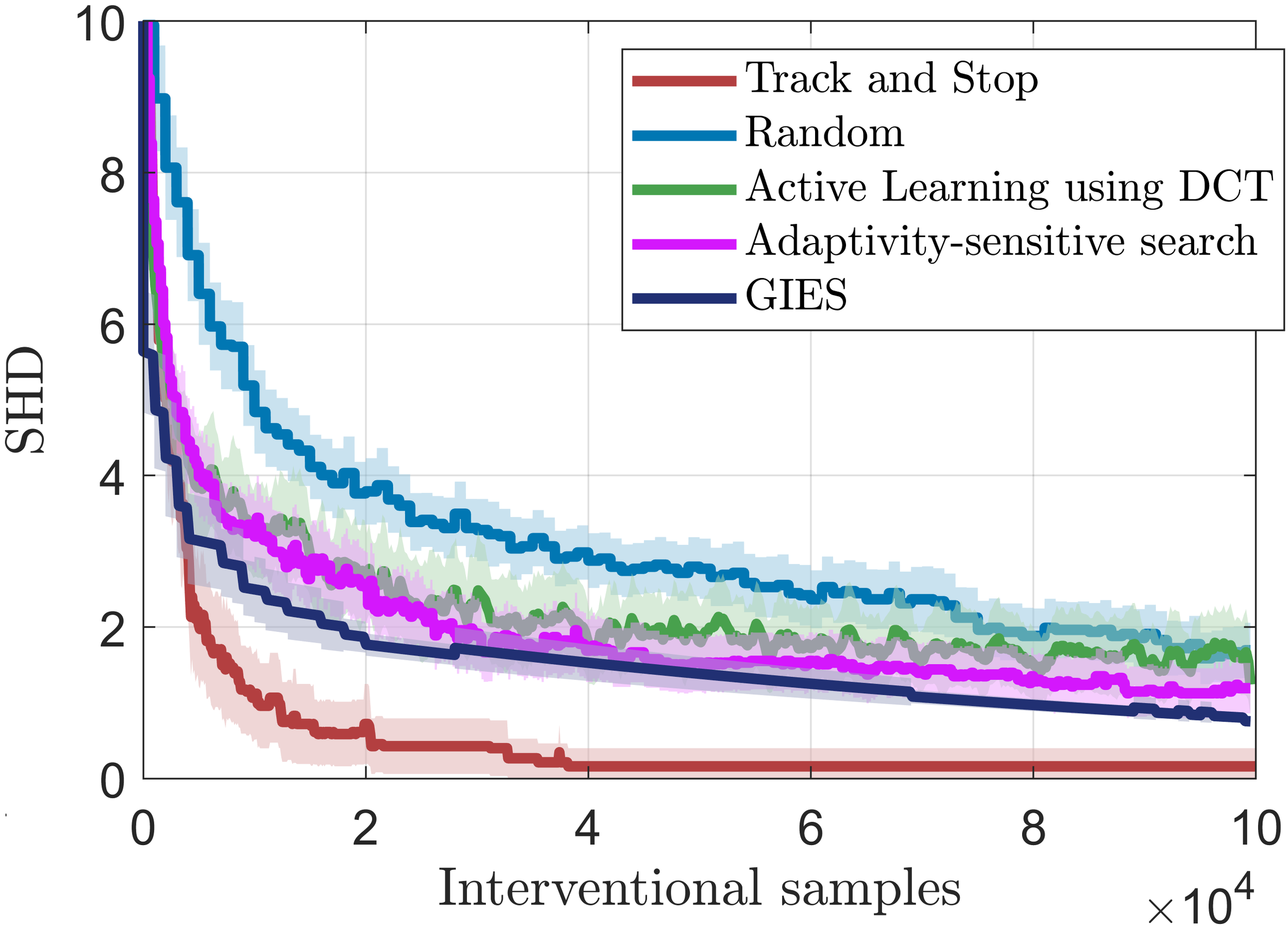}
        \subcaption{ $N=7,\rho = 1$}
\end{subfigure}     
% \vspace{-0.85em}
\caption{SHD vs interventional samples for complete Erdös-Rényi random chordal graphs with varying graph orders.}
\label{syn_res1}
\end{figure*}

\begin{figure*}[b!]
\centering
\begin{subfigure}[b]{4.6cm}
    \includegraphics[height = 3.0cm,width=4.6cm]{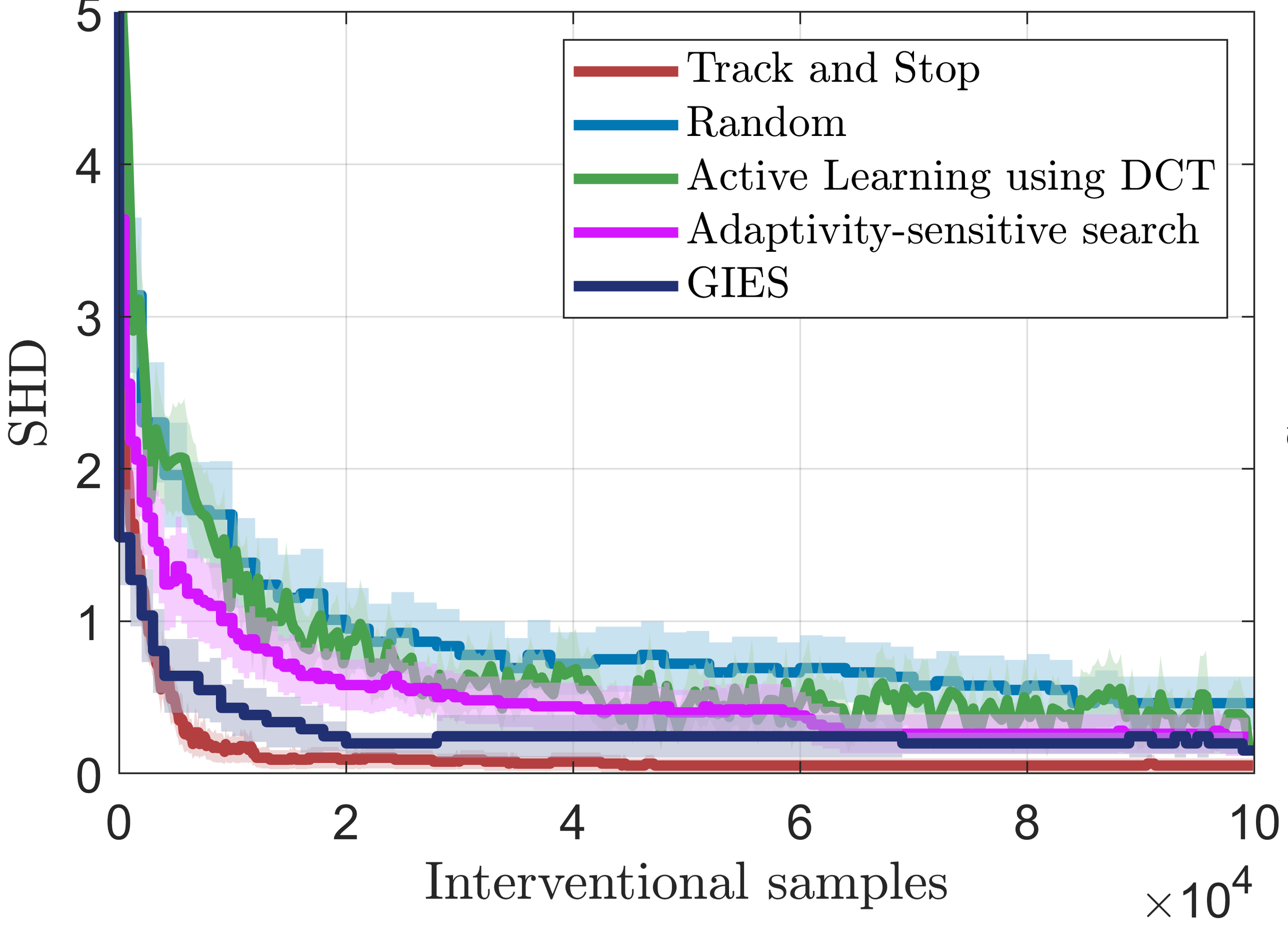}
    \subcaption{$N=10,\rho = 0.1$}
    \end{subfigure}
\enspace
\begin{subfigure}[b]{4.6cm}
    \includegraphics[height = 3.0cm,width=4.6cm]{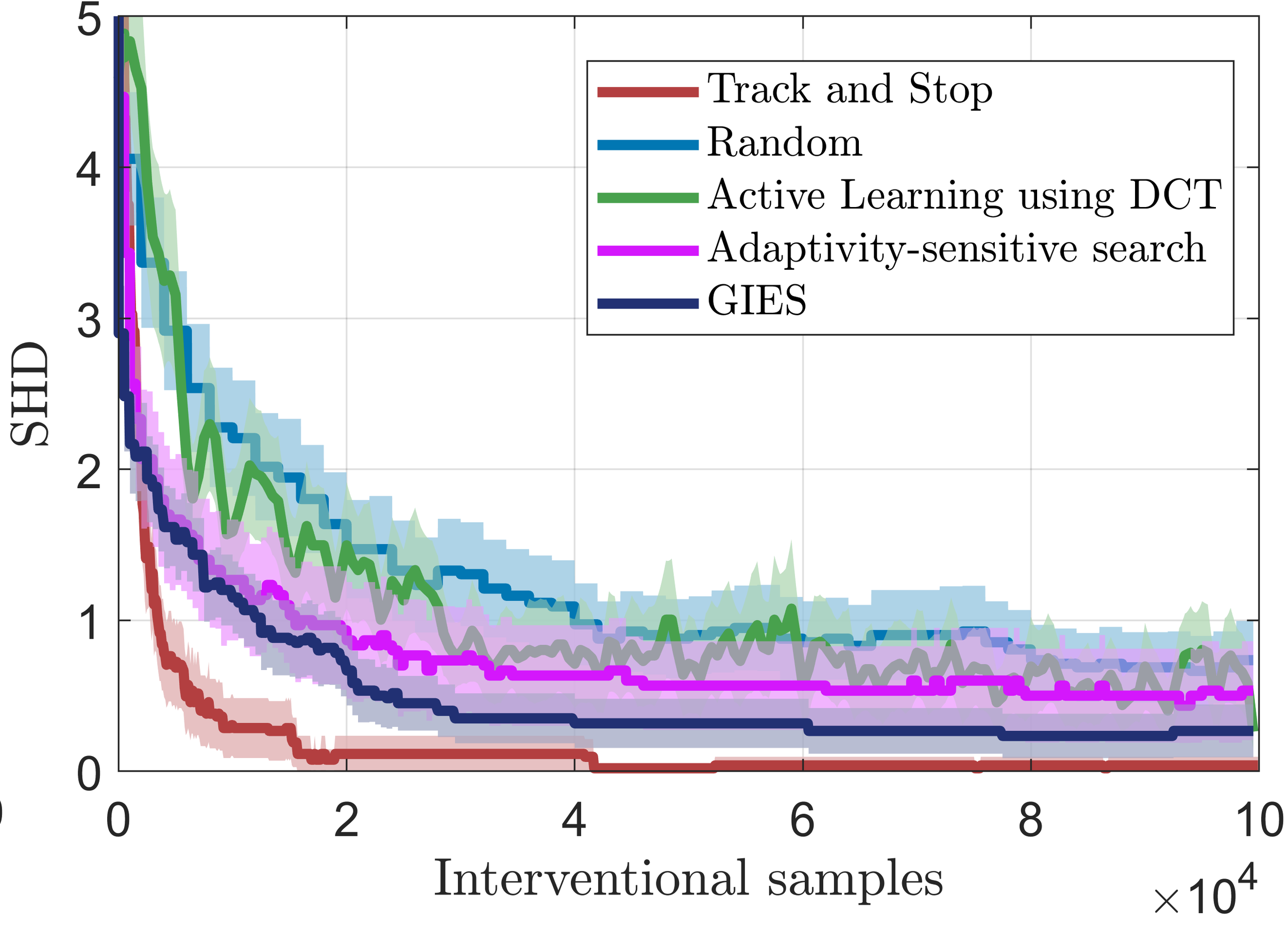}
  \subcaption{$N=10,\rho = 0.15$}
    \end{subfigure}
\enspace
\begin{subfigure}[b]{4.6cm}
    \includegraphics[height = 3.0cm,width=4.6cm]{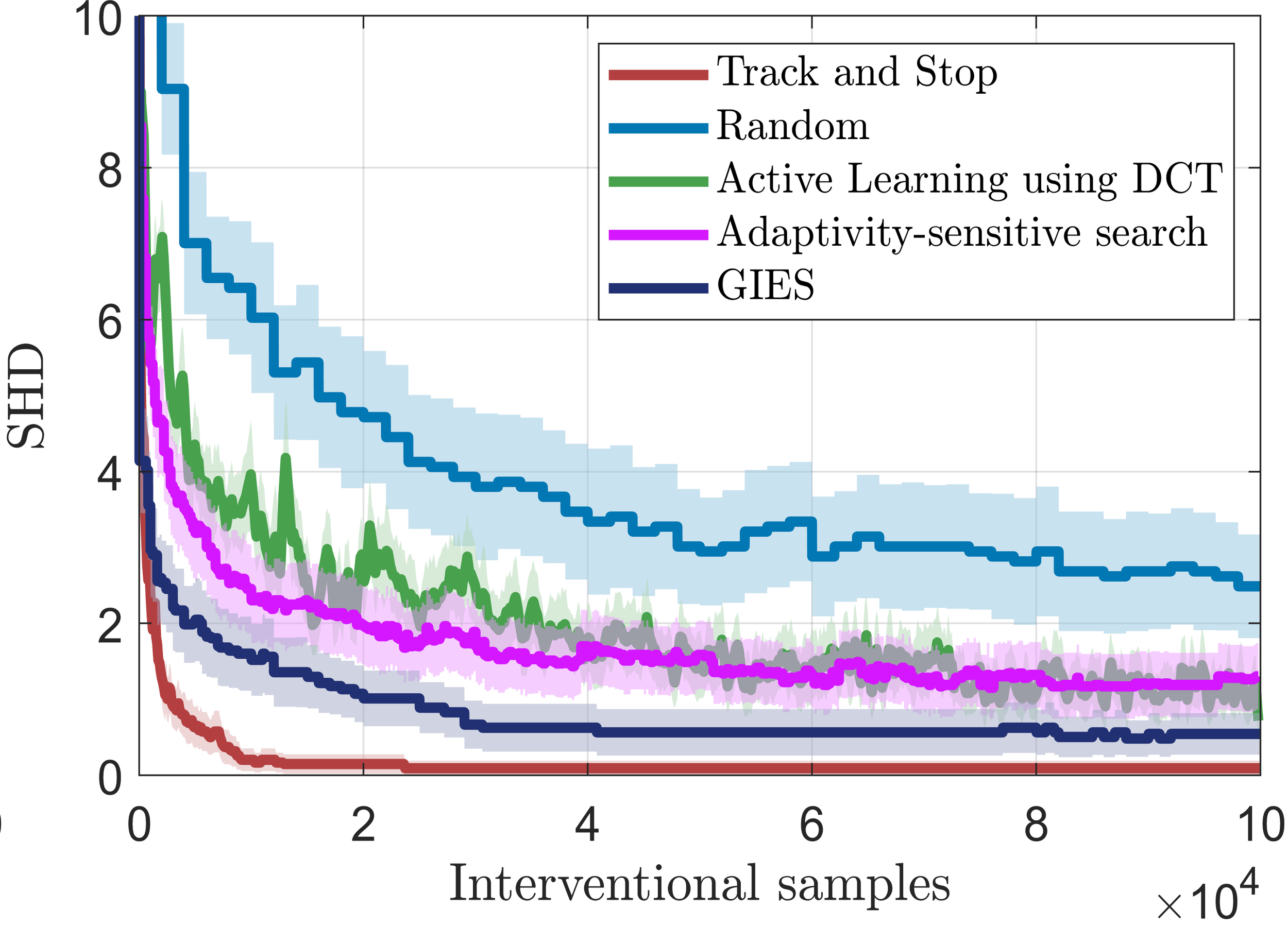}
        \subcaption{ $N=10,\rho = 0.2$}
\end{subfigure}     
% \vspace{-0.85em}
\caption{SHD vs interventional samples for Erdös-Rényi random chordal graphs with varying graph density.}

\label{syn_res2}
\end{figure*}

\section{Experiments}
We compare the proposed track-and-stop causal discovery algorithm with four other baselines. The first baseline consists of random interventions within the graph separating system. For each time step, only one sample is collected, and independence tests are used to learn the cuts at the targets based on the available samples from each intervention target within the graph separating system. The second baseline employs Active Structure Learning of Causal DAGs via Directed Clique Trees (DCTs) \citep{squires2020active}. The third one is the adaptive sensitivity search algorithm proposed in \cite{choo2023adaptivity}. The fourth baseline is Greedy Interventional Equivalence Search (GIES) which is used for regularized maximum likelihood estimation in an interventional setting \citep{hauser2012characterization}.

We randomly sample connected moral DAGs using a modified Erdös-Rényi sampling approach. Initially, we generate a random ordering $\sigma$ over vertices. Subsequently, for the n\textsuperscript{th} node, we sample its in-degree as $X_n = \max(1, \text{Bin}(n - 1, \rho))$ and select its parents by uniformly sampling from the nodes that precede it in the ordering. In the final step, we chordalize the graph by applying the elimination algorithm \citep{koller2009probabilistic}, using an elimination ordering that is the reverse of $\sigma$. This procedure  is similar to the one used by \cite{squires2020active}. Finally, we randomly sample the conditional probability tables (CPTs) consistent with the sampled DAG and run the causal discovery algorithms.

\begin{figure}[t!]
    \centering
    \includegraphics[width=0.33\textwidth]{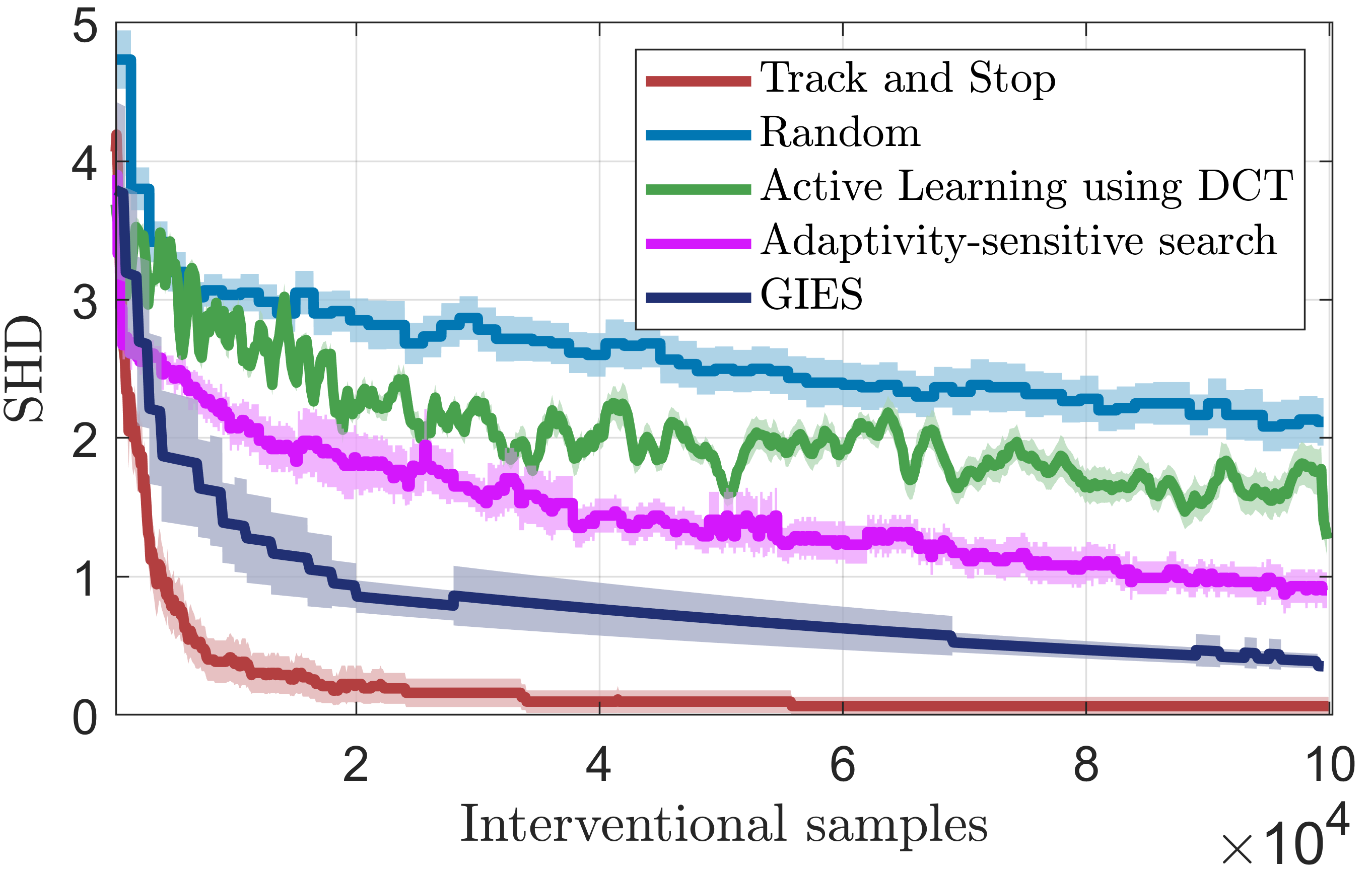}
    % \vspace{-0.7em}
    \caption{SHD vs No. of samples for SACHS dataset.}
  % \vspace{-0.7em}
    \label{semisyn}
\end{figure}

Figures \ref{syn_res1} and \ref{syn_res2} plot the Structural Hamming Distance (SHD) between the true and learned DAGs in relation to the number of interventional samples. SHD measures the number of edge additions, deletions, and reversals needed to transform one DAG into another. The shaded region  represents a range of two standard deviations above and below the mean SHD. For statistical independence tests with limited samples, we use the Chi-Square independence test from the Causal Discovery Toolbox \citep{kalainathan2020causal}.
 
The results in Figures \ref{syn_res1} and \ref{syn_res2} show that the track-and-stop algorithm outperforms other causal discovery algorithms. In Figure \ref{syn_res1}, we show the performance of causal discovery algorithms on complete graphs with $5$, $6$, and $7$ vertices, demonstrating the better performance of the track-and-stop algorithm compared to other baseline methods. The number of samples required by the other algorithms to achieve a low SHD increases significantly faster with the number of nodes compared to our proposed algorithm. A comparison of the plots in Figure \ref{syn_res2}(a), \ref{syn_res2}(b), and \ref{syn_res2}(c) reveals that as the DAGs become denser, the number of samples required by our algorithm does not increase significantly. In contrast, other causal discovery algorithms experience a significant increase in SHD in denser graphs compared to sparser ones, requiring a larger number of samples to achieve a low SHD.

We also assess the performance of causal discovery algorithms using the SACHS Bayesian network, consisting of 13 nodes and 17 edges from the bnlearn library \citep{scutari2009learning}. The SACHS dataset measures expression levels of various proteins and phospholipids in human cells \citep{sachs2005causal}. As shown in Figure \ref{semisyn}, the track-and-stop algorithm outperforms other baseline methods, resulting in significantly lower SHD for the same number of samples. The evaluation results from these synthetic and semi-synthetic experiments establish the superior performance of the proposed algorithm. While our setup requires access to the true observational distribution, in the supplementary section, we explore a scenario where our algorithm starts with the wrong CPDAG due to limited observational data, causing SHD to settle at some non-zero value instead of zero.

%%%%%%%%%%%%%%%%%%%%%%%%%%%%%%%%%%%%%%%%%%%%%%%%%%%%%%%%%%%%
\section{Conclusion}
Causal discovery aims to reconstruct the causal structure that explains the mechanism of the underlying data-generating process through observation and experimentation. Inspired by pure exploration problems in bandits, we propose a track-and-stop causal discovery algorithm that intervenes adaptively and employs a decision rule to return the most probable causal graph at any stage. We establish a problem-dependent upper bound on the expected number of interventions by the algorithm. We conduct a series of experiments on synthetic and semi-synthetic data and demonstrate that the track-and-stop algorithm outperforms many baseline causal discovery algorithms, requiring considerably fewer interventional samples to learn the true causal graph.

\section{Acknowledgements}
Murat Kocaoglu acknowledges the support of NSF CAREER 2239375, Amazon Research Award, and Adobe Research.

%Bibliography
\bibliographystyle{plainnat}
\bibliography{main}  

\begin{thebibliography}{37}
\providecommand{\natexlab}[1]{#1}
\providecommand{\url}[1]{\texttt{#1}}
\expandafter\ifx\csname urlstyle\endcsname\relax
  \providecommand{\doi}[1]{doi: #1}\else
  \providecommand{\doi}{doi: \begingroup \urlstyle{rm}\Url}\fi

\bibitem[Andersson et~al.(1997)Andersson, Madigan, and Perlman]{andersson1997characterization}
Steen~A Andersson, David Madigan, and Michael~D Perlman.
\newblock A characterization of markov equivalence classes for acyclic digraphs.
\newblock \emph{The Annals of Statistics}, 25\penalty0 (2):\penalty0 505--541, 1997.

\bibitem[Annadani et~al.(2023)Annadani, Pawlowski, Jennings, Bauer, Zhang, and Gong]{annadani2023bayesdag}
Yashas Annadani, Nick Pawlowski, Joel Jennings, Stefan Bauer, Cheng Zhang, and Wenbo Gong.
\newblock Bayesdag: Gradient-based posterior sampling for causal discovery.
\newblock \emph{arXiv preprint arXiv:2307.13917}, 2023.

\bibitem[Boyd and Vandenberghe(2004)]{boyd2004convex}
Stephen~P Boyd and Lieven Vandenberghe.
\newblock \emph{Convex optimization}.
\newblock Cambridge university press, 2004.

\bibitem[Choo and Shiragur(2023)]{choo2023adaptivity}
Davin Choo and Kirankumar Shiragur.
\newblock Adaptivity complexity for causal graph discovery.
\newblock \emph{arXiv preprint arXiv:2306.05781}, 2023.

\bibitem[Combes and Proutiere(2014)]{combes2014unimodal}
Richard Combes and Alexandre Proutiere.
\newblock Unimodal bandits: Regret lower bounds and optimal algorithms.
\newblock In \emph{International Conference on Machine Learning}, pages 521--529. PMLR, 2014.

\bibitem[De~Rooij et~al.(2014)De~Rooij, Van~Erven, Gr{\"u}nwald, and Koolen]{de2014follow}
Steven De~Rooij, Tim Van~Erven, Peter~D Gr{\"u}nwald, and Wouter~M Koolen.
\newblock Follow the leader if you can, hedge if you must.
\newblock \emph{The Journal of Machine Learning Research}, 15\penalty0 (1):\penalty0 1281--1316, 2014.

\bibitem[Degenne et~al.(2019)Degenne, Koolen, and M{\'e}nard]{degenne2019non}
R{\'e}my Degenne, Wouter~M Koolen, and Pierre M{\'e}nard.
\newblock Non-asymptotic pure exploration by solving games.
\newblock \emph{Advances in Neural Information Processing Systems}, 32, 2019.

\bibitem[Ghassami et~al.(2017)Ghassami, Salehkaleybar, and Kiyavash]{ghassami2017optimal}
AmirEmad Ghassami, Saber Salehkaleybar, and Negar Kiyavash.
\newblock Optimal experiment design for causal discovery from fixed number of experiments.
\newblock \emph{arXiv preprint arXiv:1702.08567}, 2017.

\bibitem[Greenewald et~al.(2019)Greenewald, Katz, Shanmugam, Magliacane, Kocaoglu, Boix~Adsera, and Bresler]{greenewald2019sample}
Kristjan Greenewald, Dmitriy Katz, Karthikeyan Shanmugam, Sara Magliacane, Murat Kocaoglu, Enric Boix~Adsera, and Guy Bresler.
\newblock Sample efficient active learning of causal trees.
\newblock \emph{Advances in Neural Information Processing Systems}, 32, 2019.

\bibitem[Hauser and B{\"u}hlmann(2012)]{hauser2012characterization}
Alain Hauser and Peter B{\"u}hlmann.
\newblock Characterization and greedy learning of interventional markov equivalence classes of directed acyclic graphs.
\newblock \emph{The Journal of Machine Learning Research}, 13\penalty0 (1):\penalty0 2409--2464, 2012.

\bibitem[Hauser and B{\"u}hlmann(2014)]{hauser2014two}
Alain Hauser and Peter B{\"u}hlmann.
\newblock Two optimal strategies for active learning of causal models from interventional data.
\newblock \emph{International Journal of Approximate Reasoning}, 55\penalty0 (4):\penalty0 926--939, 2014.

\bibitem[Heckerman et~al.(1997)Heckerman, Meek, and Cooper]{heckerman1997bayesian}
David Heckerman, Christopher Meek, and Gregory Cooper.
\newblock A bayesian approach to causal discovery.
\newblock Technical report, Technical report msr-tr-97-05, Microsoft Research, 1997.

\bibitem[Hu et~al.(2014)Hu, Li, and Vetta]{hu2014randomized}
Huining Hu, Zhentao Li, and Adrian~R Vetta.
\newblock Randomized experimental design for causal graph discovery.
\newblock \emph{Advances in neural information processing systems}, 27, 2014.

\bibitem[Kalainathan et~al.(2020)Kalainathan, Goudet, and Dutta]{kalainathan2020causal}
Diviyan Kalainathan, Olivier Goudet, and Ritik Dutta.
\newblock Causal discovery toolbox: Uncovering causal relationships in python.
\newblock \emph{The Journal of Machine Learning Research}, 21\penalty0 (1):\penalty0 1406--1410, 2020.

\bibitem[Katona(1966)]{katona1966separating}
Gyula Katona.
\newblock On separating systems of a finite set.
\newblock \emph{Journal of Combinatorial Theory}, 1\penalty0 (2):\penalty0 174--194, 1966.

\bibitem[Kaufmann et~al.(2016)Kaufmann, Capp{\'e}, and Garivier]{kaufmann2016complexity}
Emilie Kaufmann, Olivier Capp{\'e}, and Aur{\'e}lien Garivier.
\newblock On the complexity of best arm identification in multi-armed bandit models.
\newblock \emph{Journal of Machine Learning Research}, 17:\penalty0 1--42, 2016.

\bibitem[Kocaoglu et~al.(2017)Kocaoglu, Dimakis, and Vishwanath]{kocaoglu2017cost}
Murat Kocaoglu, Alex Dimakis, and Sriram Vishwanath.
\newblock Cost-optimal learning of causal graphs.
\newblock In \emph{International Conference on Machine Learning}, pages 1875--1884. PMLR, 2017.

\bibitem[Koller and Friedman(2009)]{koller2009probabilistic}
Daphne Koller and Nir Friedman.
\newblock \emph{Probabilistic graphical models: principles and techniques}.
\newblock MIT press, 2009.

\bibitem[Kr{\'a}l'(2004)]{kral2004coloring}
Daniel Kr{\'a}l'.
\newblock Coloring powers of chordal graphs.
\newblock \emph{SIAM Journal on Discrete Mathematics}, 18\penalty0 (3):\penalty0 451--461, 2004.

\bibitem[Lattimore and Szepesv{\'a}ri(2020)]{lattimore2020bandit}
Tor Lattimore and Csaba Szepesv{\'a}ri.
\newblock \emph{Bandit algorithms}.
\newblock Cambridge University Press, 2020.

\bibitem[Meek(1995)]{meek1995causal}
Christopher Meek.
\newblock Causal inference and causal explanation with background knowledge.
\newblock In \emph{Proceedings of the Eleventh conference on Uncertainty in artificial intelligence}, pages 403--410, 1995.

\bibitem[Pearl(2009)]{pearl2009causality}
Judea Pearl.
\newblock \emph{Causality}.
\newblock Cambridge university press, 2009.

\bibitem[Perkovic(2020)]{perkovic2020identifying}
Emilija Perkovic.
\newblock Identifying causal effects in maximally oriented partially directed acyclic graphs.
\newblock In \emph{Conference on Uncertainty in Artificial Intelligence}, pages 530--539. PMLR, 2020.

\bibitem[Peters et~al.(2017)Peters, Janzing, and Sch{\"o}lkopf]{peters2017elements}
Jonas Peters, Dominik Janzing, and Bernhard Sch{\"o}lkopf.
\newblock \emph{Elements of causal inference: foundations and learning algorithms}.
\newblock The MIT Press, 2017.

\bibitem[Sachs et~al.(2005)Sachs, Perez, Pe'er, Lauffenburger, and Nolan]{sachs2005causal}
Karen Sachs, Omar Perez, Dana Pe'er, Douglas~A Lauffenburger, and Garry~P Nolan.
\newblock Causal protein-signaling networks derived from multiparameter single-cell data.
\newblock \emph{Science}, 308\penalty0 (5721):\penalty0 523--529, 2005.

\bibitem[Scutari(2009)]{scutari2009learning}
Marco Scutari.
\newblock Learning bayesian networks with the bnlearn r package.
\newblock \emph{arXiv preprint arXiv:0908.3817}, 2009.

\bibitem[Shanmugam et~al.(2015)Shanmugam, Kocaoglu, Dimakis, and Vishwanath]{shanmugam2015learning}
Karthikeyan Shanmugam, Murat Kocaoglu, Alexandros~G Dimakis, and Sriram Vishwanath.
\newblock Learning causal graphs with small interventions.
\newblock \emph{Advances in Neural Information Processing Systems}, 28, 2015.

\bibitem[Siegmund(1985)]{siegmund1985sequential}
David Siegmund.
\newblock \emph{Sequential analysis: tests and confidence intervals}.
\newblock Springer Science \& Business Media, 1985.

\bibitem[Spirtes et~al.(2000)Spirtes, Glymour, and Scheines]{spirtes2000causation}
Peter Spirtes, Clark~N Glymour, and Richard Scheines.
\newblock \emph{Causation, prediction, and search}.
\newblock MIT press, 2000.

\bibitem[Squires et~al.(2020)Squires, Magliacane, Greenewald, Katz, Kocaoglu, and Shanmugam]{squires2020active}
Chandler Squires, Sara Magliacane, Kristjan Greenewald, Dmitriy Katz, Murat Kocaoglu, and Karthikeyan Shanmugam.
\newblock Active structure learning of causal dags via directed clique trees.
\newblock \emph{Advances in Neural Information Processing Systems}, 33:\penalty0 21500--21511, 2020.

\bibitem[Subramani and Cooper(1999)]{subramani1999causal}
M~Subramani and GF~Cooper.
\newblock Causal discovery from medical textual data, 1999.

\bibitem[Toth et~al.(2022)Toth, Lorch, Knoll, Krause, Pernkopf, Peharz, and Von~K{\"u}gelgen]{toth2022active}
Christian Toth, Lars Lorch, Christian Knoll, Andreas Krause, Franz Pernkopf, Robert Peharz, and Julius Von~K{\"u}gelgen.
\newblock Active bayesian causal inference.
\newblock \emph{Advances in Neural Information Processing Systems}, 35:\penalty0 16261--16275, 2022.

\bibitem[Van~Parys and Golrezaei(2020)]{van2020optimal}
Bart~PG Van~Parys and Negin Golrezaei.
\newblock Optimal learning for structured bandits.
\newblock \emph{arXiv preprint arXiv:2007.07302}, 2020.

\bibitem[Verma and Pearl(2022)]{verma2022equivalence}
Thomas~S Verma and Judea Pearl.
\newblock Equivalence and synthesis of causal models.
\newblock In \emph{Probabilistic and causal inference: The works of Judea Pearl}, pages 221--236. 2022.

\bibitem[Wegener(1979)]{wegener1979separating}
Ingo Wegener.
\newblock On separating systems whose elements are sets of at most k elements.
\newblock \emph{Discrete Mathematics}, 28\penalty0 (2):\penalty0 219--222, 1979.

\bibitem[Wei et~al.(2024)Wei, Elahi, Ghasemi, and Kocaoglu]{wei2024approximate}
Lai Wei, Muhammad~Qasim Elahi, Mahsa Ghasemi, and Murat Kocaoglu.
\newblock Approximate allocation matching for structural causal bandits with unobserved confounders.
\newblock \emph{Advances in Neural Information Processing Systems}, 36, 2024.

\bibitem[Zhang and Spirtes(2012)]{zhang2012strong}
Jiji Zhang and Peter~L Spirtes.
\newblock Strong faithfulness and uniform consistency in causal inference.
\newblock \emph{arXiv preprint arXiv:1212.2506}, 2012.

\end{thebibliography}

\newpage
\appendix
\onecolumn
\section{Supplementary Material} 

\subsection{Procedure to construct $(n,k)$ seperating system}

\begin{lemma}[\cite{shanmugam2015learning}]
   There exists a labeling procedure that gives distinct labels of length $\ell$ for all elements in $[n]$ using letters from the integer alphabet $\{0,1 \ldots a \}$, where $\ell=\lceil \log_{a} n \rceil$. Furthermore, in every position, any integer letter is used at most $\lceil n/a \rceil$ times.
   \label{lbl prc}
\end{lemma}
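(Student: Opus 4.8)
The plan is to recast the statement as the existence of an injective labeling $f:[n]\to\{0,1,\dots,a\}^{\ell}$ with $\ell=\lceil\log_a n\rceil$ whose induced $n\times\ell$ symbol matrix has distinct rows and, in every column, no symbol occurring more than $\lceil n/a\rceil$ times. First I would record the obvious attempt: writing $0,1,\dots,n-1$ in base $a$ already yields distinct rows because $a^{\ell}\ge n$, and it balances the low-order columns (the least significant digit cycles through $\{0,\dots,a-1\}$ in blocks). The reason this does not suffice is the high-order column: when $n$ is not a power of $a$ the leading digit is used very unevenly (e.g. $a=2,n=5$ gives four $0$'s and one $1$ in the top position, exceeding $\lceil 5/2\rceil=3$), so a genuinely balanced assignment is required rather than the positional expansion.

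I would therefore build $f$ by a balanced recursive partition, realized as a depth-$\ell$, $a$-ary tree. At a node carrying $m$ elements I split them into $a$ groups of sizes $\lceil m/a\rceil$ or $\lfloor m/a\rfloor$, write the group index (a symbol in $\{0,\dots,a-1\}$) into the current column, and recurse on each group with the remaining length. Distinctness is immediate, since two elements that are separated at some node differ in that column. The recursion is well defined because every child receives at most $\lceil n/a\rceil\le a^{\ell-1}$ elements (as $n\le a^{\ell}$ forces $\lceil n/a\rceil\le a^{\ell-1}$), so $\ell-1$ further symbols always suffice; the extra alphabet symbol $a$ is never needed and simply provides slack, which is what reconciles the $(a+1)$-letter alphabet with the base-$a$ counts.

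The crux, and the step I expect to be the main obstacle, is \emph{global} balance: within a single subtree the recursion is balanced, but a fixed symbol in a lower column accumulates contributions from many sibling subtrees, and naive rounding (always letting group $0$ absorb the remainder) reproduces exactly the leading-digit blow-up. I would fix this by rotating, from node to node, which children round up. Fixing a column, let the nodes feeding it have sizes $m_v=a\lfloor m_v/a\rfloor+s_v$ with $0\le s_v<a$. Each symbol receives the common floor contribution $B=\sum_v\lfloor m_v/a\rfloor$ plus one extra unit for each node that rounds it up, the total number of round-ups being $R=\sum_v s_v$ with $aB+R=n$, so that $\lceil n/a\rceil=B+\lceil R/a\rceil$. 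It therefore suffices to distribute the $R$ round-ups over the $a$ symbols so that each absorbs at most $\lceil R/a\rceil$ of them, subject to the constraint that a node of remainder $s_v$ rounds up $s_v$ \emph{distinct} symbols.

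This reduces the lemma to a balanced token-distribution claim: batches of sizes $s_v\le a$ must be placed into $a$ bins, at most one token per bin per batch, with maximum load $\lceil R/a\rceil$. I would establish this either by a greedy rule that always fills the currently least-loaded admissible bins (tracking that bin loads stay within one of each other), or by phrasing it as an integral-flow / Hall feasibility problem on the bipartite batch–bin incidence graph. Granting this claim, every column uses each symbol at most $B+\lceil R/a\rceil=\lceil n/a\rceil$ times, which together with row-distinctness produces the desired labeling; the separating sets are then the preimages $\{\,i:f(i)_j=c\,\}$, each of size at most $\lceil n/a\rceil$. The one delicate point is the token-distribution claim, and I would take care that the greedy (or Hall) argument respects the per-node distinctness constraint and the per-symbol cap simultaneously.
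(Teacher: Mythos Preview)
Your approach is correct and takes a genuinely different route from the paper. The paper does not prove the lemma but simply reproduces the explicit column-by-column construction of \cite{shanmugam2015learning}: for the $d$-th digit one first cycles through $\{0,\ldots,a-1\}$ in blocks of length $a^{d-1}$ up to position $p_d a^d$ (where $n=p_d a^d+r_d$, $r_d<a^d$), then fills the remaining $r_d$ positions with shorter blocks of length $\lceil r_d/a\rceil$, and finally increments by one every symbol appearing beyond position $a^{d-1}p_{d-1}$ (where $n=p_{d-1}a^{d-1}+r_{d-1}$). That last shift is exactly what restores row distinctness after the short remainder blocks break the positional pattern, and it is where the extra letter $a$ enters. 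Your balanced recursive partition instead treats all columns uniformly via the tree, never touches the $(a{+}1)$-th symbol (so you in fact establish a marginally stronger statement), and isolates the only nontrivial step as the token-distribution claim. That claim is routine rather than delicate: placing each batch of $s_v<a$ round-ups into the next $s_v$ bins cyclically modulo $a$ keeps all bin loads within one of each other and hence at most $\lceil R/a\rceil$, so neither Hall nor a careful greedy analysis is really needed. The paper's construction buys an explicit single-pass formula one can tabulate directly; your construction buys a clean inductive picture and a transparent reason \emph{why} per-column balance is achievable, at the cost of leaving the concrete labels implicit until the rotation schedule is fixed.
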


The string labeling method in Lemma \ref{lbl prc} from \cite{shanmugam2015learning} is described below:

\textbf{Labelling Procedure:} Let $a>1$ be a positive integer. Let $x$ be the integer such that $a^{x} < n \leq a^{x+1}$. $x+1 = \lceil \log_{a} n \rceil$. Every element $j \in [1:n]$ is given a label $L(j)$ which is a string of integers of length $x+1$ drawn from the alphabet $\{0,1,2 \ldots a\}$ of size $a+1$. Let $n= p_d a^d+r_d$ and $n=p_{d-1}a^{d-1}+r_{d-1}$ for any integers $p_d,p_{d-1},r_{d},r_{d-1}$, where $r_d < a^d$ and $r_{d-1}<a^{d-1}$. Now, we describe the sequence of the $d$-th digit across the string labels of all elements from $1$ to $n$: 

  \begin{enumerate}
   \item Repeat the integer $0$ a total of $a^{d-1}$ times, and then repeat the subsequent integer, $1$, also $a^{d-1}$ times \footnote{Circular means that after $a-1$ is completed, we start with $0$ again.} from $\{0,1 \ldots a-1 \}$ till $p_da^d$. 
   \item Following this, repeat the integer $0$ a number of times equal to $\lceil r_d/a \rceil$, and then repeat the integer $1$ $\lceil r_d/a \rceil$ times, continuing this pattern until we reach the $n$th position. It is evident that the $n$th integer in the sequence will not exceed $a-1$.
   \item Each integer that appears beyond the position $a^{d-1}p_{d-1}$ is incremented by $1$.
  \end{enumerate}  

Once we have a set of $n$ string labels, we can easily construct a $(n,k)$ separating system using Lemma \ref{const}, stated as follows:

\begin{lemma}[\cite{shanmugam2015learning}]
 Consider an alphabet ${\cal A}=[0:\lceil \frac{n}{k} \rceil]$ of size $\lceil \frac{n}{k} \rceil+1$ where $k<n/2$. Label every element of an $n$ element set using a distinct string of letters from ${\mathcal A}$ of length $ \ell= \lceil \log_{\lceil \frac{n}{k} \rceil} n \rceil$ using the labeling procedure in Lemma \ref{lbl prc} with $a=\lceil \frac{n}{k} \rceil$. For every $1 \leq a \leq \ell$ and $1 \leq b \leq \lceil \frac{n}{k} \rceil $,we choose the subset $I_{a,b}$ of vertices whose string's $a$-th letter is $b$. The set of all such subsets ${\mathcal S} = \{s_{a,b}\}$ is a $k$-separating system on $n$ elements and $\lvert {\mathcal S} \rvert \leq (\lceil \frac{n}{k} \rceil ) \lceil \log_{\lceil \frac{n}{k} \rceil} n \rceil $.
\label{const}
\end{lemma}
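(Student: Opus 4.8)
The plan is to verify the two assertions of the lemma separately: that the family $\mathcal{S}=\{I_{a,b}\}$ is a separating system in which every member has size at most $k$, and that its cardinality obeys the stated bound. Throughout, I would fix the alphabet-size parameter to $\lceil n/k\rceil$, so that invoking \cref{lbl prc} with $a=\lceil n/k\rceil$ supplies the two facts the whole argument rests on: every element of $[n]$ receives a \emph{distinct} label of length $\ell=\lceil\log_{\lceil n/k\rceil} n\rceil$, and in each fixed position every letter is used at most $\lceil n/\lceil n/k\rceil\rceil$ times. The hypothesis $k<n/2$ enters only to guarantee $\lceil n/k\rceil\ge 3>1$, so that the logarithmic length $\ell$ is well defined.

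First I would establish the size constraint. By construction $I_{a,b}$ is exactly the set of elements whose label carries letter $b$ in position $a$, so $|I_{a,b}|$ equals the number of occurrences of letter $b$ in that position, which \cref{lbl prc} bounds by $\lceil n/\lceil n/k\rceil\rceil$. It then remains to check the arithmetic inequality $\lceil n/\lceil n/k\rceil\rceil\le k$: since $\lceil n/k\rceil\ge n/k$ we get $n/\lceil n/k\rceil\le k$, and because $k$ is an integer, taking ceilings preserves the inequality. Hence $|I_{a,b}|\le k$ for every admissible $a,b$.

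Next I would prove the separation property in the strong form that every \emph{pair} of distinct elements is separated (which in particular separates every edge of any graph on $[n]$). Take $i\ne j$. Since the labels are distinct, they disagree in some position $p$; let $b_i,b_j$ be the two letters there, so $b_i\ne b_j$. At most one of them can be $0$, hence at least one lies in $\{1,\dots,\lceil n/k\rceil\}$; say $b_i\ge 1$. Then $I_{p,b_i}\in\mathcal{S}$ contains $i$ but excludes $j$ (whose $p$-th letter is $b_j\ne b_i$), so it separates the pair. Counting indices finishes the cardinality claim: $a$ ranges over $\ell=\lceil\log_{\lceil n/k\rceil} n\rceil$ positions and $b$ over the $\lceil n/k\rceil$ nonzero letters, giving $|\mathcal{S}|\le \lceil n/k\rceil\,\lceil\log_{\lceil n/k\rceil} n\rceil$.

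The genuinely substantive points, rather than bookkeeping inherited from \cref{lbl prc}, are two. The first is the ceiling inequality $\lceil n/\lceil n/k\rceil\rceil\le k$, which is what links the per-position repetition guarantee of the labeling to the required bound $k$ on set sizes. The second, and the one most easily overlooked, is that no set $I_{a,0}$ is created for the letter $0$; this is precisely why the separation step must argue that at a differing position at least one of the two letters is nonzero. Once these two observations are in place the proof is routine, so I expect no real obstacle beyond carefully handling the excluded letter $0$.
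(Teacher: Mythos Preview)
The paper does not supply its own proof of this lemma; it is quoted verbatim as a result of \cite{shanmugam2015learning} and used as a black box to build the $(n,k)$-separating system. Your proposal is correct and is the standard argument: the per-position repetition bound from \cref{lbl prc} together with the ceiling inequality $\lceil n/\lceil n/k\rceil\rceil\le k$ gives the size constraint, distinctness of labels plus the observation that at a differing position at least one letter is nonzero gives separation, and the cardinality bound is a direct count of index pairs $(a,b)$.
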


\subsection{Meek Rules}
 The following algorithm can be used to apply Meek orientation rules to PDAGs.
\begin{algorithm}
\small
\SetAlgoLined
\DontPrintSemicolon
\SetKwFunction{FMain}{ApplyMeekRules}
\SetKwProg{Fn}{Function}{:}{}
\Fn{\FMain{$\mathcal{M}$}}{
    Orient as many undirected edges as possible by repeated
    application of the following three rules:\\
    \textbf{(R1)} Orient $b - c$ into $b \rightarrow c$ whenever there is an arrow $a \rightarrow b$ such that $a$ and $c$ are nonadjacent.\\
    \textbf{(R2)} Orient $a - b$ into $a \rightarrow b$ whenever there is a chain $a \rightarrow c \rightarrow b$.\\
    \textbf{(R3)} Orient $a - b$ into $a \rightarrow b$ whenever there are two chains $a - k \rightarrow b$ and  $a - l \rightarrow b$ such that $k$\\ and $l$ are nonadjacent.\\
   \textbf{(R4)} Orient $a - b$ into $a \rightarrow b$ whenever there is an edge $a - k$ and chain $k \rightarrow l \rightarrow b$ such that $k$ \\and
    $b$ are nonadjacent.
}
\textbf{return}  A valid MPDAG: $\mathcal{M}$\\
\textbf{End Function}
\caption{Apply Meek Rules to a Skeleton}
\label{MeekRules}
\end{algorithm}

\subsection{Algorithms to find  Partial Causal Ordering (PCO) and Enumerate all possible causal effects in the MPDAG}

\begin{definition}
(Bucket \citep{perkovic2020identifying}) Consider an MPDAG $\mathcal{M}(\mathbf{V},\mathbf{E})$ and set of vertices $\mathbf{S} \in \mathbf{V}$. The maximal undirected connected subset of $S$ in $\mathcal{M}$ is defined as a bucket in  $S$.
\label{def_bucket}
\end{definition}

Definition \ref{def_bucket} permits the presence of directed edges connecting nodes within the same bucket. This definition allows for a unique decomposition, known as the bucket decomposition, to be applied to any set of vertices in the MPDAG.

\begin{algorithm}[t!]
\small
\SetAlgoLined
\DontPrintSemicolon
    \SetKwFunction{FMain}{$\mathsf{PCO}$}
    \SetKwProg{Fn}{Function}{:}{}
    \Fn{\FMain{$\mathcal{M(\mathbf{V},\mathbf{E})},\mathbf{S}$}}{
        $ \mathbf{CC} =$ Bucket decomposition of $\mathbf{V}$ in $\mathcal{M}$\\
        $\mathbf{B} =$ an empty list\\
       \While{$\mathbf{CC} \neq \emptyset$}{
        Let $\mathbf{c} \in \mathbf{CC}$ \qquad//First element in set $CC$\\
        $\overline{\mathbf{c}} = \mathbf{CC} \setminus \mathbf{c}$\\
       \If{all edges in $E(\mathbf{c},\overline{\mathbf{c}})$ have a head in $\mathbf{c}$ } 
         {
            $\mathbf{CC} = \overline{\mathbf{c}}$\\
            $\overline{B} = \mathbf{S} \cap \mathbf{c}$\\             
                \If{$\overline{B} \neq \emptyset$} 
                {
                   Add $\overline{B}$ to the beginning of $\mathbf{B}$\\
                                }
                  }
              }
    
       }
        \textbf{return} $\mathbf{B}$ (An ordered list of Bucket Decomposition of $\mathbf{S}$) 

\textbf{End Function}
\caption{Partial Causal Ordering \citep{perkovic2020identifying}}
\label{PCO}
\end{algorithm}

\begin{lemma}
    (Bucket Decomposition \citep{perkovic2020identifying}) Consider an MPDAG $\mathcal{M}(\mathbf{V},\mathbf{E})$ and set of vertices $\mathbf{S} \in \mathbf{V}$. There exists a unique partition of $\mathbf{S}$ into pairwise disjoint subsets $(\mathbf{B}_1,\mathbf{B}_2,....,\mathbf{B}_m)$ such that $\bigcup_{i=1}^{m} \mathbf{B}_i = \mathbf{S}$ and $\mathbf{B}_i$ is a bucket of $\mathbf{S}\;\forall i \in [m]$.
\end{lemma}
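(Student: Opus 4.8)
The plan is to realize the buckets as the connected components of an auxiliary undirected graph and then invoke the standard fact that the connected components of a graph induce a unique partition of its vertex set. The directed edges of $\mathcal{M}$ will play no role in this construction, which is the point that reconciles the definition of a bucket (phrased purely in terms of undirected connectivity) with the remark that a bucket may internally contain directed edges.

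First I would restrict attention to the subgraph of $\mathcal{M}$ induced on $\mathbf{S}$ and discard all directed edges, keeping only the undirected ones; call the resulting undirected graph $U = (\mathbf{S}, E_U)$, where $E_U$ consists of all undirected edges of $\mathcal{M}$ with both endpoints in $\mathbf{S}$. Define a relation $\sim$ on $\mathbf{S}$ by declaring $u \sim v$ whenever there is a walk from $u$ to $v$ in $U$, with the convention that each vertex is joined to itself by the empty walk. I would then verify that $\sim$ is an equivalence relation: reflexivity holds by the empty-walk convention, symmetry follows because an undirected edge can be traversed in either direction, and transitivity follows by concatenating walks. Since every equivalence relation on $\mathbf{S}$ induces a unique partition into its classes, I take $(\mathbf{B}_1, \ldots, \mathbf{B}_m)$ to be exactly these classes. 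By construction each $\mathbf{B}_i$ is undirected-connected, and it is maximal, since any vertex outside $\mathbf{B}_i$ that were undirected-connected to some vertex of $\mathbf{B}_i$ would be $\sim$-related to it and hence already lie in $\mathbf{B}_i$. Thus each $\mathbf{B}_i$ is a bucket in the sense of Definition~\ref{def_bucket}, the blocks are pairwise disjoint, and their union is $\mathbf{S}$, which settles existence.

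For uniqueness I would argue that any partition of $\mathbf{S}$ into buckets must coincide with the partition into $\sim$-classes. Let $\mathbf{B}$ be any maximal undirected-connected subset of $\mathbf{S}$. All vertices of $\mathbf{B}$ are pairwise $\sim$-related, so $\mathbf{B}$ is contained in a single $\sim$-class; conversely, maximality forces $\mathbf{B}$ to contain that entire class, since every vertex of the class is undirected-connected to $\mathbf{B}$. Hence $\mathbf{B}$ equals a $\sim$-class, so the decomposition is forced as an unordered collection of blocks. The only point requiring care, rather than a genuine obstacle, is that a bucket is defined via undirected connectivity even though the subgraph induced on a bucket may contain directed edges; these directed edges never enter the relation $\sim$ and therefore do not affect the partition. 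The argument is otherwise the textbook connected-components decomposition, so I do not expect any hard step.
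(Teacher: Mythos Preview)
Your argument is correct and is exactly the natural one: buckets are precisely the connected components of the undirected graph on $\mathbf{S}$ obtained by retaining only the undirected edges of $\mathcal{M}$ with both endpoints in $\mathbf{S}$, and the connected-component decomposition of any graph is a unique partition. Your handling of the one subtle point---that directed edges may live inside a bucket but are irrelevant to the relation $\sim$---is also correct.

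Note, however, that the paper does not supply its own proof of this lemma; it is quoted from \cite{perkovic2020identifying} and stated without argument. So there is no ``paper's proof'' to compare against. Your write-up is the standard proof one would give, and it matches what the cited reference does.
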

The Algorithm \ref{PCO} returns an ordered list of bucket decomposition of $\mathbf{S}$ in $\mathcal{M}$. Also ordered list of
buckets output by Algorithm \ref{PCO} is a partial causal ordering of $\mathbf{S}$ in $\mathcal{M}$.

\begin{algorithm}[t!]
\small
\SetAlgoLined
\DontPrintSemicolon
\SetKwFunction{FMain}{IdentifyCausalEffect}
\SetKwInOut{Input}{Input}
\SetKwInOut{Set}{Set}
		\SetKwInOut{Title}{Algorithm}
		\SetKwInOut{Require}{Require}
		\SetKwInOut{Output}{Output}
            \SetKwInOut{Initialization}{Initialization}
		
		{	
			\Input{MPDAG $\mathcal{M(\mathbf{V},\mathbf{E})}$ , $\mathbf{X},\mathbf{Y} \subseteq \mathbf{V}$ }

		\Output{The interventional distribution $P(\mathbf{y}|do(\mathbf{x}))$ in MPDAG }
		}

\SetKwProg{Fn}{Function}{:}{}
\Fn{\FMain{$\mathcal{M(\mathbf{V},\mathbf{E})},\mathbf{X},\mathbf{Y}$}}{
    $(\mathbf{B}_1,\mathbf{B}_2,....,\mathbf{B}_m)$ =$\mathsf{PCO}(\mathsf{An}(\mathbf{Y},\mathcal{M_{\mathbf{V} \setminus \mathbf{X} }}),\mathcal{M})$ \\
    $\mathbf{b} = $ $\mathsf{An}(\mathbf{Y},\mathcal{M_{\mathbf{V} \setminus \mathbf{X} }}) \setminus \mathbf{Y}$\\
    \vspace{0.02cm}
    $P(\mathbf{y}|do(\mathbf{x})) =\sum_{\mathbf{b}} \prod_{i=1}^{m} P(\mathbf{b}_i|\mathsf{Pa}(\mathbf{b}_i,\mathcal{M}))$\\
   
}
\textbf{return} $P(\mathbf{y}|do(\mathbf{x}))$\\
\textbf{End Function}
\caption{Identify Causal Effect in an MPDAG}
\label{ident_caus}
\end{algorithm}

\begin{algorithm}[t!]
\small
\SetAlgoLined
\DontPrintSemicolon
\SetKwFunction{FMain}{EnumerateCausalEffect}
\SetKwInOut{Input}{Input}
\SetKwInOut{Set}{Set}
		\SetKwInOut{Title}{Algorithm}
		\SetKwInOut{Require}{Require}
		\SetKwInOut{Output}{Output}
            \SetKwInOut{Initialization}{Initialization}
		
		{	
			\Input{MPDAG $\mathcal{M(\mathbf{V},\mathbf{E})}$ , $\mathbf{X},\mathbf{Y} \subseteq \mathbf{V}$ }

		\Output{All possible interventional distribution $P(\mathbf{y}|do(\mathbf{x}))$ in MPDAG }
		}
\SetKwProg{Fn}{Function}{:}{}
\Fn{\FMain{$\mathcal{M(\textbf{V},\textbf{E})},\textbf{X},\textbf{Y}$}}{
    $\textbf{List} =$ an empty list\\
    $\textbf{E} =$ Unoriented edges in cut at $\textbf{X}$\\
    \uIf{$\textbf{e} = \emptyset$}{
     $P(\textbf{y}|do(\textbf{x}))$ = IdentifyCausalEffect($\mathcal{M},\textbf{X},\textbf{Y}$)\\
     Add $P(\textbf{y}|do(\textbf{x}))$  to the $\textbf{List}$\\
  }
    \Else{
            \For{All possible orientations of edges in $\textbf{E}$}
            { Orient the corresponding edges $\textbf{E}$ in $\mathcal{M}$ to get $\hat{\mathcal{M}}$ \\
            $\bar{\mathcal{M}}$ = ApplyMeekRules( $\hat{\mathcal{M}}$ )\\
            $P(\textbf{y}|do(\textbf{x}))$ = IdentifyCausalEffect($\bar{\mathcal{M}},\textbf{X},\textbf{Y}$)\\
            Add $P(\textbf{y}|do(\textbf{x}))$  to the $\textbf{List}$\\
            
            }

  }
}
\textbf{return} $\textbf{List}$ (A List of all candidate values of $P(\textbf{y}|do(\textbf{x}))$ for all DAGs in $[\mathcal{M}]\;$) \\
\textbf{End Function}
\caption{Enumerate Causal Effect in an MPDAG}
\label{enum_caus}
\end{algorithm}

Algorithm \ref{enum_caus} provides a systematic procedure for enumerating all possible values for $P(\mathbf{y}|\mathbf{x})$ in a given MPDAG.

\subsection{Proof of Lower Bound in~\cref{th: lowerbound}}
The lower bound is derived following the same strategy in~\cite{lattimore2020bandit} by applying divergence decomposition and Bretagnolle–Huber inequality. For completeness, we reproduce both proofs in this section. Readers familiar with these results can skip them.

Recall that a policy $\pi$ is composed of a sequence $\seqdef{\pi_t}{t \in \natural_{>0}}$, where at each time $t \in \until{T}$, $\pi_t$ determines the probability distribution of taking intervention $\mathbf{s}_t \in \mathcal{I}$ given intervention and observation history $\pi_t(\mathbf{s}_t \mid \mathbf{s}_1, \mathbf{v}_1, \ldots , \mathbf{s}_{t-1}, \mathbf{v}_{t-1})$. So the intervention and observation sequence $\seqdef{\mathbf{s}_t, \mathbf{v}_t}{t\in \natural_{>0}}$ is a production of the interactions between the interventional distribution tuple $\tupdef{P_\mathbf{s} }{\mathbf{s} \in \mathcal{I}}$ and policy $\pi$. For any $T \in \natural_{> 0}$, we define a probability measure $\mathbb{P}$ on the sequence of outcomes induced by $\tupdef{P_\mathbf{s}}{\mathbf{s}\in \mathcal{I}}$ and $\pi$ such that
\begin{equation}\label{eq: decompose}
    \mathbb{P} (\mathbf{s}_1, \mathbf{v}_1 , \ldots, \mathbf{s}_T, \mathbf{v}_T ) = \prod_{t=1}^T \pi_t( \mathbf{s}_t \mid \mathbf{s}_1, \mathbf{v}_1 , \ldots, \mathbf{s}_{t-1}, \mathbf{v}_{t-1} ) P_{\mathbf{s}_t}(\mathbf{v}_t).
\end{equation}
The following decomposition is a standard result in Bandit literature~\citep[Ch. 15]{lattimore2020bandit}.

\begin{lemma}[Divergence Decomposition] \label{lemma: DD}
In the causal discovery problem, assume $\mathcal{D}^*$ is the true DAG. for any fixed policy $\pi$, let $\mathbb{P}$ and $\mathbb{P}'$ be the probability measures corresponding to applying interventions on $\mathcal{D}^*$ and $\mathcal{D}'$, respectively. Let $\mathcal{F}=\seqdef{\mathcal{F}_t}{t\in \natural_{>0}}$ be a filtration, where $\mathcal{F}_t = \sigma(\mathbf{s}_1, \mathbf{v}_1, \ldots, \mathbf{s}_{t-1}, \mathbf{v}_{t-1})$, and let $\tau$ be a $\mathcal{F}$-measurable stopping time. Then for any event $E$ that is $\mathcal{F}_{\tau}$ measurable,
\[ \kl{\prob(E)}{\prob'(E)} = \sum_{\mathbf{s} \in \mathcal{I}} \expt [N_{\tau} (\mathbf{s}) ] \kl{P^{\mathcal{D}*}_{\mathbf{s}}}{P^{\mathcal{D}}_{\mathbf{s}}},\]
where the expectation is computed with probability measure $\prob$.
\end{lemma}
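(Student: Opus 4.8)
The plan is to reduce the stopping-time statement to the deterministic-horizon decomposition and then transport it across the random stopping time by a martingale argument. The key structural fact I would exploit is that $\prob$ and $\prob'$ are generated by the \emph{same} policy $\pi$ acting on two different interventional tuples (the true DAG $\mathcal{D}^*$ versus the alternative DAG), so that in the likelihood ratio every policy factor $\pi_t(\mathbf{s}_t \mid \mathbf{s}_1, \mathbf{v}_1, \ldots, \mathbf{s}_{t-1}, \mathbf{v}_{t-1})$ is common to both measures and cancels, leaving only the sampling terms $P_{\mathbf{s}_t}(\mathbf{v}_t)$.

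Concretely, starting from the factorization~\eqref{eq: decompose}, for a fixed horizon $T$ the log-likelihood ratio collapses to
\[ \log \frac{\prob(\mathbf{s}_1, \mathbf{v}_1, \ldots, \mathbf{s}_T, \mathbf{v}_T)}{\prob'(\mathbf{s}_1, \mathbf{v}_1, \ldots, \mathbf{s}_T, \mathbf{v}_T)} = \sum_{t=1}^T \log \frac{P^{\mathcal{D}^*}_{\mathbf{s}_t}(\mathbf{v}_t)}{P^{\mathcal{D}}_{\mathbf{s}_t}(\mathbf{v}_t)}. \]
Taking the expectation under $\prob$ and conditioning on $\sigma(\mathcal{F}_t, \mathbf{s}_t)$, each summand has conditional mean
\[ \expt\!\left[ \log \frac{P^{\mathcal{D}^*}_{\mathbf{s}_t}(\mathbf{v}_t)}{P^{\mathcal{D}}_{\mathbf{s}_t}(\mathbf{v}_t)} \,\Big|\, \mathcal{F}_t, \mathbf{s}_t \right] = \kl{P^{\mathcal{D}^*}_{\mathbf{s}_t}}{P^{\mathcal{D}}_{\mathbf{s}_t}} = \sum_{\mathbf{s} \in \mathcal{I}} \indicator{\mathbf{s}_t = \mathbf{s}}\, \kl{P^{\mathcal{D}^*}_{\mathbf{s}}}{P^{\mathcal{D}}_{\mathbf{s}}}, \]
since $\mathbf{v}_t \sim P^{\mathcal{D}^*}_{\mathbf{s}_t}$ under $\prob$. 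Summing over $t$ and using $N_T(\mathbf{s}) = \sum_{t=1}^T \indicator{\mathbf{s}_t = \mathbf{s}}$ gives the fixed-horizon identity $\KL{\prob}{\prob'} = \sum_{\mathbf{s} \in \mathcal{I}} \expt[N_T(\mathbf{s})]\, \kl{P^{\mathcal{D}^*}_{\mathbf{s}}}{P^{\mathcal{D}}_{\mathbf{s}}}$.

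To replace $T$ by the stopping time $\tau$, I would introduce the centered process $M_t = \sum_{i=1}^t \big( \log ( P^{\mathcal{D}^*}_{\mathbf{s}_i}(\mathbf{v}_i) / P^{\mathcal{D}}_{\mathbf{s}_i}(\mathbf{v}_i) ) - \kl{P^{\mathcal{D}^*}_{\mathbf{s}_i}}{P^{\mathcal{D}}_{\mathbf{s}_i}} \big)$. The conditional-mean computation above shows $M_t$ is a $\prob$-martingale with respect to $\mathcal{F}$, so optional stopping yields $\expt[M_\tau] = 0$, i.e. $\expt[\sum_{i=1}^\tau \log( P^{\mathcal{D}^*}_{\mathbf{s}_i}/P^{\mathcal{D}}_{\mathbf{s}_i})] = \sum_{\mathbf{s} \in \mathcal{I}} \expt[N_\tau(\mathbf{s})]\, \kl{P^{\mathcal{D}^*}_{\mathbf{s}}}{P^{\mathcal{D}}_{\mathbf{s}}}$; the left side is exactly the relative entropy of $\prob$ and $\prob'$ restricted to $\mathcal{F}_\tau$. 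The event-level bound then follows by the data-processing inequality applied to the two-cell partition $\{E, E^c\}$: coarsening cannot increase relative entropy, so $\kl{\prob(E)}{\prob'(E)}$ is controlled by this stopped divergence (an equality for the full measure, and the resulting inequality is what feeds the Bretagnolle--Huber step in the lower-bound proof).

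The main obstacle is the rigorous treatment of the random horizon: optional stopping needs an integrability hypothesis, which I would supply via $\expt[\tau] < \infty$ together with the uniformly bounded (discrete, finite-support) conditional KL increments, while noting that when $\expt[\tau] = \infty$ the downstream bound is trivial. A secondary technical point is absolute continuity — the cancellation of the policy factors and the finiteness of each $\kl{P^{\mathcal{D}^*}_{\mathbf{s}}}{P^{\mathcal{D}}_{\mathbf{s}}}$ presuppose $P^{\mathcal{D}^*}_{\mathbf{s}} \ll P^{\mathcal{D}}_{\mathbf{s}}$, which holds under the discreteness and faithfulness of Assumption~\ref{assum1}; in the contrary case both sides are $+\infty$ and the identity still holds vacuously.
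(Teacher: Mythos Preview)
Your proposal is correct and follows essentially the same route as the paper: cancel the common policy factors in the likelihood ratio, use the data-processing (log-sum) inequality to pass from the full stopped measure to the two-cell partition $\{E,E^c\}$, and compute the expected stopped log-likelihood ratio. The only cosmetic difference is that the paper regroups the sum by intervention $\mathbf{s}$ and invokes Wald's lemma on each group, whereas you keep the time-indexed centered martingale and apply optional stopping directly; these are equivalent, and your explicit attention to the integrability hypothesis and absolute continuity is, if anything, more careful than the paper's treatment.
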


\begin{proof}
For a given sequence $\seqdef{\mathbf{s}_t, \mathbf{v}_t}{t\in \natural_{>0}}$, let $\tau$ be the stopping time. Since policy $\pi$ and stopping time $\tau$ are fixed, it follows from~\eqref{eq: decompose} that 
\[\mathbb{P} (\mathbf{s}_1, \mathbf{v}_1 , \ldots, \mathbf{s}_{\tau}, \mathbf{v}_{\tau} ) = \prod_{t=1}^{\tau} \pi_t( \mathbf{s}_t \mid \mathbf{s}_1, \mathbf{v}_1 , \ldots, \mathbf{s}_{t-1}, \mathbf{v}_{t-1} ) P_{\mathbf{s}_t}(\mathbf{v}_t).\]
 Accordingly, we define random variable
\begin{equation}\label{eq: ratio_P_P'}
     L_{\tau} := \log \frac{\prob (\mathbf{s}_1, \mathbf{v}_1 , \ldots, \mathbf{s}_{\tau}, \mathbf{v}_{\tau} )}{\prob' (\mathbf{s}_1, \mathbf{v}_t , \ldots, \mathbf{s}_{\tau}, \mathbf{v}_{\tau} )} = \sum_{t=1}^{\tau} \log \frac{P^{\mathcal{D}*}_{\mathbf{s}_t}(\mathbf{v}_t)}{P^{\mathcal{D}'}_{\mathbf{s}_t}(\mathbf{v}_t)} ,
 \end{equation}
in which $\pi_t$ is reduced. Equation~\eqref{eq: ratio_P_P'} shows that the distinction between $\mathbb{P}$ and $\mathbb{P}'$ is exclusively due to the separations of $P_\mathbf{s}$ and $P'_\mathbf{s}$ for each $\mathbf{s} \in \mathcal{I}$. Let $\seqdef{\mathbf{v}_{\mathbf{s},i}}{i \in \natural_{>0}}$ be the sequence of observations by applying intervention $do(\mathbf{S=s} )$. Then we have
\begin{equation}\label{eq: wald}
    L_{\tau} = \sum_{\mathbf{s} \in \mathcal{I}} \sum_{i=1}^{N_{\tau}(\mathbf{s})} \log \frac{P_{\mathbf{s}}(\mathbf{v}_{\mathbf{s},i})}{P'_{\mathbf{s}}(\mathbf{v}_{\mathbf{s},i})} \text{ and } \expt \bigg[\log \frac{P_{\mathbf{s}}(\mathbf{v}_{\mathbf{s},i})}{P'_{\mathbf{s}}(\mathbf{v}_{\mathbf{s},i})} \bigg] = \kl{P^{\mathcal{D}*}_{\mathbf{s}}}{P^{\mathcal{D}}_{\mathbf{s}}}.
\end{equation}
Since event $E$ that is $\mathcal{F}_{\tau}$ measurable, we apply log sum inequality to get
\[\kl{\mathbb{P}(E)}{\mathbb{P}'(E)} \leq \expt \left[ \log \frac{\mathbb{P} (\mathbf{s}_1, \mathbf{v}_1 , \ldots, \mathbf{s}_{\tau} , \mathbf{v}_{\tau}  )}{\mathbb{P}' (\mathbf{s}_1, \mathbf{v}_t , \ldots, \mathbf{s}_{\tau} , \mathbf{v}_{\tau} )} \right] = \expt[L_\tau].\]
With~\eqref{eq: wald}, we apply Wald's Lemma (e.g.~\cite{siegmund1985sequential}) to get
\[\kl{\mathbb{P}(E)}{\mathbb{P}'(E)} \leq \expt \left[ \sum_{\mathbf{s} \in \mathcal{I}} \sum_{i=1}^{N_{\tau}(\mathbf{s})} \log \frac{P_{\mathbf{s}}(\mathbf{v}_{\mathbf{s},i})}{P'_{\mathbf{s}}(\mathbf{v}_{\mathbf{s},i})} \right] = \sum_{\mathbf{s} \in \mathcal{I}} \expt [N_{\tau} (\mathbf{s}) ] \kl{P^{\mathcal{D}*}_{\mathbf{s}}}{P^{\mathcal{D}}_{\mathbf{s}}}, \]
which concludes the proof.
\end{proof}

The other tool to prove the regret lower bound is the Bretagnolle–Huber Inequality.
\begin{lemma}[{Bretagnolle–Huber Inequality~\cite{lattimore2020bandit}, Th 14.2}] \label{lemma: BH_ineq}
    Let $P$ and $Q$ be two probability measures on a measurable space $(\omega, \mathcal{F})$, and let $E \in \mathcal{F}$ be an arbitrary event. Then
    \[P(E) + Q(E^\complement) \geq \frac{1}{2} \exp \left(- \kl{P}{Q} \right ),\]
    where $E^\complement = \omega \setminus E$ is complement of $E$.
\end{lemma}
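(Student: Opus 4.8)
The plan is to prove the inequality in two stages: first reduce the left-hand side, uniformly over all events $E$, to the overlap (affinity) between $P$ and $Q$, and then lower bound that overlap by $\tfrac12\exp(-\kl{P}{Q})$. Throughout I will fix a common dominating measure $\mu$ (for instance $\mu = (P+Q)/2$, which dominates both) and write the densities $p = dP/d\mu$ and $q = dQ/d\mu$. The whole argument is measure-theoretic and makes no use of the bandit structure; it is the standard route through the Bhattacharyya affinity $\int \sqrt{pq}\,d\mu$.

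For the first stage, I observe that for \emph{any} measurable set $A$ we have $P(A) + Q(A^\complement) = 1 + \int_A (p - q)\,d\mu$, and the right-hand side is minimized by choosing $A = \{p < q\}$, which yields $\int \min(p,q)\,d\mu$. Hence for the arbitrary event $E$ in the statement,
\[
P(E) + Q(E^\complement) \;\geq\; \inf_A \big\{ P(A) + Q(A^\complement) \big\} \;=\; \int \min(p,q)\,d\mu .
\]
This reduces the problem to lower bounding the overlap $\int \min(p,q)\,d\mu$, which equals $1 - \mathrm{TV}(P,Q)$.

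For the second stage, I apply Cauchy--Schwarz after splitting $\sqrt{pq} = \sqrt{\min(p,q)}\,\sqrt{\max(p,q)}$:
\[
\left( \int \sqrt{pq}\,d\mu \right)^2 \;\leq\; \left( \int \min(p,q)\,d\mu \right) \left( \int \max(p,q)\,d\mu \right).
\]
Since $\min(p,q) + \max(p,q) = p + q$ and $\int (p+q)\,d\mu = 2$, we have $\int \max(p,q)\,d\mu \leq 2$, so that $\int \min(p,q)\,d\mu \geq \tfrac12 \big( \int \sqrt{pq}\,d\mu \big)^2$. Finally I relate the affinity to the divergence by Jensen's inequality applied to the convex exponential under $P$:
\[
\int \sqrt{pq}\,d\mu \;=\; \expt_P\!\big[ e^{\frac12 \log(q/p)} \big] \;\geq\; e^{\frac12 \expt_P[\log(q/p)]} \;=\; e^{-\frac12 \kl{P}{Q}},
\]
using $\kl{P}{Q} = \expt_P[\log(p/q)]$. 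Squaring and chaining the three displays gives $P(E) + Q(E^\complement) \geq \tfrac12 \exp(-\kl{P}{Q})$.

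The step I expect to be most delicate is not any single calculation but the bookkeeping around supports and measure-zero sets: one must check that the reduction to $\int\min(p,q)\,d\mu$ is a genuine lower bound for \emph{every} $E$ (not only the minimizing one), handle the regions where $p = 0$ or $q = 0$ so that the ratio $q/p$ in the Jensen step is well defined $P$-almost everywhere, and confirm that when $\kl{P}{Q} = \infty$ the bound holds trivially (the right-hand side being $0$). Once these integrability and support conventions are pinned down, the Cauchy--Schwarz and Jensen steps are routine, and the conclusion follows directly.
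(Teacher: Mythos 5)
Your proof is correct, and it follows essentially the same route as the source the paper cites for this lemma: the paper itself states the Bretagnolle--Huber inequality without proof, deferring to Theorem 14.2 of Lattimore and Szepesv\'ari, whose argument is exactly your chain $P(E)+Q(E^\complement)\geq \int \min(p,q)\,d\mu \geq \tfrac{1}{2}\left(\int \sqrt{pq}\,d\mu\right)^2 \geq \tfrac{1}{2}\exp\left(-\mathrm{KL}(P \parallel Q)\right)$ via Cauchy--Schwarz and Jensen. Your handling of the degenerate cases (the reduction holding for every $E$, the set $\{p=0\}$ contributing nothing to the affinity, and the bound being trivial when the divergence is infinite) is also sound, so there is nothing to fix.
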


\begin{proof}[Proof of~\cref{th: lowerbound}]
    If $\expt[\tau_{\delta}] = \infty$, the result is trivial. Assume that $\expt[\tau_{\delta}] < \infty$, which also indicates $\prob(\tau_{\delta} = \infty) = 0$. Recall $\mathbb{P}$ and $\mathbb{P}'$ are the probability measures corresponding to applying interventions on $\mathcal{D}^*$ and $\mathcal{D}'$ respectively. Define event $E = \{ \tau_{\delta} \leq \infty, \psi \neq \mathcal{D}' \}$. For a sound casual discovery algorithm, we have
    \begin{align}
        2\delta & \geq \prob \big( \tau_{\delta} \leq \infty, \psi \neq \mathcal{D}^* \big ) + \prob' \big( \tau_{\delta} \leq \infty, \psi \neq \mathcal{D}' \big ) \\
        &\geq \prob \big( E^\complement \big ) + \prob' ( E ).         
    \end{align}
    We apply Bretagnolle–Huber inequality to get
    \begin{equation} \label{ineq: BH inequality}
        2\delta \geq \frac{1}{2} \exp \left(- \kl{\prob(E)}{\prob'(E)} \right ).
    \end{equation}
     With~\cref{lemma: DD}, we substitute $\sum_{\mathbf{s}\in \mathcal{I}}\expt [N_T (\mathbf{s}) ] \kl{P^{\mathcal{D}*}_{\mathbf{s}}}{P^{\mathcal{D}'}_{\mathbf{s}}}$ into $ \kl{\prob(E)}{\prob'(E)}$  and rearrange~\eqref{ineq: BH inequality} to get 
     \begin{equation}\label{ineq: bh_rst}
         \log \frac{4}{\delta} \leq \sum_{\mathbf{s}\in \mathcal{I}}\expt [N_{\tau_{\delta}} (\mathbf{s}) ] \kl{P^{\mathcal{D}*}_{\mathbf{s}}}{P^{\mathcal{D}'}_{\mathbf{s}}} \leq \expt[\tau_{\delta}] \sum_{\mathbf{s}\in \mathcal{I}} \frac{\expt [N_{\tau_{\delta}}(\mathbf{s})]}{\expt[\tau_{\delta}]}  \kl{P^{\mathcal{D}*}_{\mathbf{s}}}{P^{\mathcal{D}'}_{\mathbf{s}}} \leq \expt[\tau_{\delta}] c(\mathcal{D}^*).
     \end{equation} 
     Since~\eqref{ineq: bh_rst} holds for any $\mathcal{D}' \in [\mathcal{C}]\setminus \mathcal{D}^*$, we have
     \begin{align*}
         \log \frac{4}{\delta} &\leq \expt[\tau_{\delta}] \min_{\mathcal{D} \in [\mathcal{C}]\setminus \mathcal{D}^*} \sum_{\mathbf{s}\in \mathcal{I}} \frac{\expt [N_{\tau_{\delta}}(\mathbf{s})]}{\expt[\tau_{\delta}]}  \kl{P^{\mathcal{D}^*}_{\mathbf{s}}}{P^{\mathcal{D}}_{\mathbf{s}}} \\
         &\leq \expt[\tau_{\delta}] \max_{\boldsymbol{\alpha} \in \Delta(\mathcal{I})} \min_{ \mathcal{D} \in [\mathcal{C}] \setminus \mathcal{D}^* } \sum_{\mathbf{s} \in \mathcal{I}}  \alpha_{\mathbf{s}} \KL{P^{\mathcal{D}^*}_{\mathbf{s}}}{P^{\mathcal{D}}_{\mathbf{s}}} =  \expt[\tau_{\delta}] c(\mathcal{D}^*),
     \end{align*}
     where the last inequality is due to $\sum_{\mathbf{s}\in \mathcal{I}} {\expt [N_{\tau_{\delta}}(\mathbf{s})]}/{\expt[\tau_{\delta}]} = 1$. We conclude the proof.
\end{proof}

\subsection{Supporting Lemmas for~\cref{th: upperbound}}
\subsubsection{Supporting Lemmas on Online Maxmin Optimization}
In~\eqref{eq: lowerbound}, we define a variable $c(\mathcal{D}^*) = \sup_{\boldsymbol{\alpha} \in \Delta(\mathcal{I})} \min_{ \mathcal{D} \in [\mathcal{C}] \setminus \mathcal{D}^* } \sum_{\mathbf{s} \in \mathcal{I}}  \alpha_{\mathbf{s}} \KL{P^{\mathcal{D}^*}_{\mathbf{s}}}{P^{\mathcal{D}}_{\mathbf{s}}}$. We define another variable for the local discovery result
\begin{equation}\label{eqdef: local}
    c_{\mathbf{S}}(\mathcal{D}^*) = \sup_{\boldsymbol{\xi}^{\mathbf{S}} \in \Delta(\omega(\mathbf{S}))} \min_{\mathsf{C}(\mathbf{S}) \neq \mathsf{C}^*(\mathbf{S})} \sum_{\mathbf{s} \in \omega(\mathbf{S})} \xi^{\mathbf{S}}_{\mathbf{s}} \kl{P^{\mathsf{C}^*(\mathbf{S})}_{\mathbf{s}}}{P^{\mathsf{C}(\mathbf{S})}_{\mathbf{s}, t}}.
\end{equation}
Let $\mathsf{CF}(\mathbf{S})$ denote all the possible configurations of cutting edges attached to the node set $\mathbf{S}$. The following theorem shows these values from the maxmin optimization equal to their minmax counterparts.

\begin{lemma}~\label{lemma: duality}
The following two inequality holds:
    \[c(\mathcal{D}^*) = \inf_{\boldsymbol{w}\in \Delta([\mathcal{C}]\setminus \mathcal{D}^*)} \max_{\mathbf{s} \in \mathcal{I}}  \sum_{\mathcal{D} \in [\mathcal{C}] \setminus \mathcal{D}^*} w_{\mathcal{D}} \KL{P^{\mathcal{D}^*}_{\mathbf{s}}}{P^{\mathcal{D}}_{\mathbf{s}}},\]
    \[c_{\mathbf{S}}(\mathcal{D}^*) = \inf_{\boldsymbol{\zeta}^{\mathbf{S}} \in \Delta(\mathsf{CF}(\mathbf{S})\setminus \mathsf{C}^*(\mathbf{S}))} \max_{\mathbf{s} \in \omega(\mathbf{S})} \sum_{\mathsf{C}(\mathbf{S}) \in \mathsf{CF}(\mathbf{S}) \setminus \mathsf{C}^*(\mathbf{S})} \zeta^{\mathbf{S}}_{\mathsf{C}(\mathbf{S})} \kl{P^{\mathsf{C}^*(\mathbf{S})}_{\mathbf{s}}}{P^{\mathsf{C}(\mathbf{S})}_{\mathbf{s}, t}}, \forall \mathbf{S} \in \mathcal{S}.\] 
    Besides,
    \[\underline{c}(\mathcal{D}^*) = \sup_{\boldsymbol{\alpha} \in \Delta(\mathcal{I})} \min_{\mathbf{S} \in \mathcal{S}} \min_{\mathsf{C}(\mathbf{S}) \neq \mathsf{C}^*(\mathbf{S})} \sum_{\mathbf{s} \in \omega(\mathbf{S})} \alpha_{\mathbf{s}} \kl{P^{\mathcal{D}^*\!\!}_{\mathbf{s}}}{P^{\mathsf{C}(\mathbf{S})}_{\mathbf{s}}} = \gamma_{\mathbf{S}}^* c_{\mathbf{S}}(\mathcal{D}^*), \text{ where } \gamma_{\mathbf{S}}^* = \frac{1 / c_{\mathbf{S}}(\mathcal{D}^*)}{\sum_{\mathbf{S} \in \mathcal{S}} 1 / c_{\mathbf{S}}(\mathcal{D}^*) }.\]
\end{lemma}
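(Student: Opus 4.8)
The plan is to prove the two minimax identities (for $c$ and $c_{\mathbf{S}}$) by a single duality argument, and then to derive the harmonic-mean identity for $\underline{c}$ by decoupling the global allocation across node sets followed by an elementary equalization.

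First I would establish the identity for $c(\mathcal{D}^*)$. Starting from $c(\mathcal{D}^*) = \sup_{\boldsymbol{\alpha}\in\Delta(\mathcal{I})}\min_{\mathcal{D}\in[\mathcal{C}]\setminus\mathcal{D}^*}\sum_{\mathbf{s}}\alpha_{\mathbf{s}}\KL{P^{\mathcal{D}^*}_{\mathbf{s}}}{P^{\mathcal{D}}_{\mathbf{s}}}$, I would convexify the inner minimum: since the objective is linear in the choice of alternative $\mathcal{D}$ and $[\mathcal{C}]\setminus\mathcal{D}^*$ is finite, $\min_{\mathcal{D}}(\cdot) = \min_{\boldsymbol{w}\in\Delta([\mathcal{C}]\setminus\mathcal{D}^*)}\sum_{\mathcal{D}} w_{\mathcal{D}}(\cdot)$. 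The resulting function $F(\boldsymbol{\alpha},\boldsymbol{w}) = \sum_{\mathbf{s}}\sum_{\mathcal{D}}\alpha_{\mathbf{s}}w_{\mathcal{D}}\KL{P^{\mathcal{D}^*}_{\mathbf{s}}}{P^{\mathcal{D}}_{\mathbf{s}}}$ is bilinear, hence convex–concave, on a product of two compact simplices; under \cref{assum1} the discrete interventional laws are strictly positive, so every KL term is finite and $F$ is continuous. Von Neumann's minimax theorem then gives $\sup_{\boldsymbol{\alpha}}\inf_{\boldsymbol{w}}F = \inf_{\boldsymbol{w}}\sup_{\boldsymbol{\alpha}}F$, and since $F$ is linear in $\boldsymbol{\alpha}$ over the simplex, the inner supremum is attained at a vertex, i.e.\ $\sup_{\boldsymbol{\alpha}}F(\boldsymbol{\alpha},\boldsymbol{w}) = \max_{\mathbf{s}\in\mathcal{I}}\sum_{\mathcal{D}}w_{\mathcal{D}}\KL{P^{\mathcal{D}^*}_{\mathbf{s}}}{P^{\mathcal{D}}_{\mathbf{s}}}$, which is exactly the claimed right-hand side. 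The identity for $c_{\mathbf{S}}(\mathcal{D}^*)$ follows verbatim, replacing $\mathcal{I}$ by $\omega(\mathbf{S})$ and $[\mathcal{C}]\setminus\mathcal{D}^*$ by $\mathsf{CF}(\mathbf{S})\setminus\mathsf{C}^*(\mathbf{S})$.

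Next I would prove the harmonic identity for $\underline{c}(\mathcal{D}^*)$. The structural fact I would exploit is that the sets $\omega(\mathbf{S})$, $\mathbf{S}\in\mathcal{S}$, partition $\mathcal{I}$, so any $\boldsymbol{\alpha}\in\Delta(\mathcal{I})$ factors uniquely as $\alpha_{\mathbf{s}} = \gamma_{\mathbf{S}}\,\xi^{\mathbf{S}}_{\mathbf{s}}$ for $\mathbf{s}\in\omega(\mathbf{S})$, where $\gamma_{\mathbf{S}} = \sum_{\mathbf{s}\in\omega(\mathbf{S})}\alpha_{\mathbf{s}}$ is the mass on target $\mathbf{S}$ (so $\boldsymbol{\gamma}\in\Delta(\mathcal{S})$) and $\boldsymbol{\xi}^{\mathbf{S}}\in\Delta(\omega(\mathbf{S}))$ is the conditional allocation (taken arbitrary when $\gamma_{\mathbf{S}}=0$). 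The inner objective for a fixed target depends on $\boldsymbol{\alpha}$ only through the coordinates in $\omega(\mathbf{S})$ and is positively homogeneous in them, so it factors as $\gamma_{\mathbf{S}}\cdot\min_{\mathsf{C}(\mathbf{S})\neq\mathsf{C}^*(\mathbf{S})}\sum_{\mathbf{s}\in\omega(\mathbf{S})}\xi^{\mathbf{S}}_{\mathbf{s}}\kl{P^{\mathcal{D}^*}_{\mathbf{s}}}{P^{\mathsf{C}(\mathbf{S})}_{\mathbf{s}}}$. Because the conditionals $\boldsymbol{\xi}^{\mathbf{S}}$ live on disjoint coordinate blocks and each enters only one branch of the outer $\min_{\mathbf{S}}$, I can maximize each block independently; the supremum over $\boldsymbol{\xi}^{\mathbf{S}}$ of its factor equals $c_{\mathbf{S}}(\mathcal{D}^*)$ (using the definition of $c_{\mathbf{S}}$ and $P^{\mathcal{D}^*}_{\mathbf{s}} = P^{\mathsf{C}^*(\mathbf{S})}_{\mathbf{s}}$ on $\omega(\mathbf{S})$, which holds by \cref{our_lemma}). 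This collapses the problem to $\underline{c}(\mathcal{D}^*) = \sup_{\boldsymbol{\gamma}\in\Delta(\mathcal{S})}\min_{\mathbf{S}\in\mathcal{S}}\gamma_{\mathbf{S}}\,c_{\mathbf{S}}(\mathcal{D}^*)$.

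Finally I would solve the reduced program by equalization. The map $\boldsymbol{\gamma}\mapsto\min_{\mathbf{S}}\gamma_{\mathbf{S}}c_{\mathbf{S}}(\mathcal{D}^*)$ is maximized when all products $\gamma_{\mathbf{S}}c_{\mathbf{S}}(\mathcal{D}^*)$ equal a common value $\lambda$: if some product strictly exceeded the minimum, shifting mass toward a binding target would strictly raise the objective, so no slack survives at the optimum. Setting $\gamma_{\mathbf{S}} = \lambda/c_{\mathbf{S}}(\mathcal{D}^*)$ and enforcing $\sum_{\mathbf{S}}\gamma_{\mathbf{S}}=1$ yields $\lambda = \big(\sum_{\mathbf{S}}1/c_{\mathbf{S}}(\mathcal{D}^*)\big)^{-1}$ and the optimal weights $\gamma^*_{\mathbf{S}} = \frac{1/c_{\mathbf{S}}(\mathcal{D}^*)}{\sum_{\mathbf{S}}1/c_{\mathbf{S}}(\mathcal{D}^*)}$, whence $\underline{c}(\mathcal{D}^*) = \lambda = \gamma^*_{\mathbf{S}}c_{\mathbf{S}}(\mathcal{D}^*)$ for every $\mathbf{S}$, as claimed. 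I expect the main obstacle to be the decoupling step: one must argue carefully that, although the conditionals are coupled through the single outer minimum over targets, they can be optimized blockwise, and that the factorization holds uniformly (including the degenerate case $\gamma_{\mathbf{S}}=0$ and the finiteness required for the minimax swap). The equalization step and the two duality identities are then routine.
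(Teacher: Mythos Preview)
Your proposal is correct and follows essentially the same approach as the paper: von Neumann minimax (equivalently, LP strong duality for matrix games, which is what the paper cites from Boyd) for the two duality identities, then the factorization $\alpha_{\mathbf{s}}=\gamma_{\mathbf{S}}\xi^{\mathbf{S}}_{\mathbf{s}}$ to reduce $\underline{c}$ to $\sup_{\boldsymbol{\gamma}}\min_{\mathbf{S}}\gamma_{\mathbf{S}}c_{\mathbf{S}}(\mathcal{D}^*)$, solved by equalization. Your write-up is in fact more careful than the paper's sketch, particularly on the decoupling step (the paper swaps $\min_{\mathbf{S}}$ and $\sup_{\boldsymbol{\xi}^{\mathbf{S}}}$ without comment) and on the finiteness of the KL entries needed for continuity.
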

\begin{proof}[Sketch of Proof]
    These two max-min optimization problems correspond to designing a mixed strategy for matrix games. To elaborate, the reward matrix $R \in \mathbb{R}^{(\lvert[\mathcal{C}]\rvert-1) \times \lvert\mathcal{I}\rvert}$ has entries represented as $\KL{P^{\mathcal{D}^*}{\mathbf{s}}}{P^{\mathcal{D}}{\mathbf{s}}}$, and solving for $c(\mathcal{D}^)$ is equivalent to the following optimization problem:
\begin{align*}
    &\maximize \min_{i= \until{\abs{[\mathcal{C}]}-1}} (R \boldsymbol{\alpha})_i \\
    &\subject \quad \alpha  \succeq 1, \mathbf{1}^T \boldsymbol{\alpha} = 1.
\end{align*}
For such a problem, it is shown in~\citep[CH 5.2.5]{boyd2004convex} that strong duality holds. Similar argument can be made on $c_{\mathbf{S}}(\mathcal{D}^*)$. Detailed proofs are omitted.

To prove the last equality, let $\gamma_{\mathbf{S}} = \sum_{\mathbf{s}\in \omega(\mathbf{S})} \alpha_{\mathbf{s}}$ and let $ \alpha_{\mathbf{s}} = \gamma_{\mathbf{S}} \xi^{\mathbf{S}}_{\mathbf{s}}$. We also have $\sum_{\mathbf{S} \in \mathcal{S} } \gamma_{\mathbf{S}} = 1$ and $\sum_{\mathbf{s}\in \omega(\mathbf{S})} \xi^{\mathbf{S}}_{\mathbf{s}} = 1$. It follows that,
\begin{align*}
    \underline{c}(\mathcal{D}^*) &= \sup_{\boldsymbol{\alpha} \in \Delta(\mathcal{I})} \min_{\mathbf{S} \in \mathcal{S}} \min_{\mathsf{C}(\mathbf{S}) \neq \mathsf{C}^*(\mathbf{S})} \sum_{\mathbf{s} \in \omega(\mathbf{S})} \alpha_{\mathbf{s}} \kl{P^{\mathcal{D}^*}_{\mathbf{s}}}{P^{\mathsf{C}(\mathbf{S})}_{\mathbf{s}}} \\
    &= \sup_{\seqdef{\gamma_{\mathbf{S}}}{\mathbf{S} \in \mathcal{S}}}\sup_{\seqdef{\xi^{\mathbf{S}}_{\mathbf{s}}}{\mathbf{s}\in \omega(\mathbf{S})}} \min_{\mathbf{S} \in \mathcal{S}} \min_{\mathsf{C}(\mathbf{S}) \neq \mathsf{C}^*(\mathbf{S})} \sum_{\mathbf{s} \in \omega(\mathbf{S})} \gamma_{\mathbf{S}} \xi^{\mathbf{S}}_{\mathbf{s}} \kl{P^{\mathcal{D}^*}_{\mathbf{s}}}{P^{\mathsf{C}(\mathbf{S})}_{\mathbf{s}}}\\
    &= \sup_{\seqdef{\gamma_{\mathbf{S}}}{\mathbf{S} \in \mathcal{S}}} \gamma_{\mathbf{S}} \min_{\mathbf{S} \in \mathcal{S}} \sup_{\seqdef{\xi^{\mathbf{S}}_{\mathbf{s}}}{\mathbf{s}\in \omega(\mathbf{S})}} \min_{\mathsf{C}(\mathbf{S}) \neq \mathsf{C}^*(\mathbf{S})} \sum_{\mathbf{s} \in \omega(\mathbf{S})}  \xi^{\mathbf{S}}_{\mathbf{s}} \kl{P^{\mathcal{D}^*}_{\mathbf{s}}}{P^{\mathsf{C}(\mathbf{S})}_{\mathbf{s}}}\\
    &= \sup_{\seqdef{\gamma_{\mathbf{S}}}{\mathbf{S} \in \mathcal{S}}} \gamma_{\mathbf{S}} \min_{\mathbf{S} \in \mathcal{S}} c_{\mathbf{S}}(\mathcal{D}^*).
\end{align*}
Besides, the solution for above problem satisfies $\gamma_{\mathbf{S}} \propto 1/c_{\mathbf{S}}(\mathcal{D}^*)$. We conclude the proof.
\end{proof}

\subsubsection{Supporting Lemma for AdaHedge Algorithm}
The AdaHedge deals with such a sequential decision-making problem. At each $t = 1,2, \ldots$, the learner needs to decide a weight vector $\boldsymbol{\alpha}_t = (\alpha_{1,t}, \ldots, \alpha_{K,t})$ over $K$ “experts”. Nature then reveals a $K$-dimensional vector containing the rewards of the experts $\boldsymbol{r}_t = (r_{1,t}, \ldots,r_{K,t}) \in \real^K$. The actual received reward is the dot product $h_t = \boldsymbol{\alpha}_t \cdot  \boldsymbol{r}_t$, which can be interpreted as the expected loss with a mixed strategy. The learn's task it to maximize the cumulative reward $H_T = \sum_{t=1}^T h_t$ or equivalently minimize the regret defined as
\[R_T = \max_{k \in \until{K}} \sum_{t=1}^T r_{k,t} - H_T.\]
The performance guarantee of AdaHedge is as follows.
\begin{lemma}[~\citep{de2014follow}]\label{lemma: adh}
    If for any $t \in \natural_{>0}$, $r_{k,t} \in [0, D]$ for all $k\in\until{K}$, let $\supscr{R}{AH}_T $ be the regret for AdaHedge for horizon $T$. It satisfies that    
    \[\supscr{R}{AH}_T \leq \sqrt{D T \ln K} + D\left(\frac{4}{3} \ln K + 2\right).\]
\end{lemma}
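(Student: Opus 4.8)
The plan is to reproduce the standard AdaHedge analysis of~\cite{de2014follow}, organized around the \emph{mixability gap} that already appears in the learning-rate update. First I would recast everything in terms of the mix reward $M_t := \frac{1}{\eta_t}\ln\langle \boldsymbol{\alpha}_t, e^{\eta_t \boldsymbol{r}_t}\rangle$, so that the per-round gap $\delta_t := M_t - \langle \boldsymbol{\alpha}_t, \boldsymbol{r}_t\rangle \ge 0$ (nonnegativity by Jensen, since the soft-max mean dominates the arithmetic mean), and $\Delta_T = \sum_{t=1}^T \delta_t$ is exactly the cumulative gap driving $\eta_{t+1} = \ln K/\Delta_t$. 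Writing $R_{k,T} := \sum_{t} r_{k,t}$ and $H_T = \sum_t \langle \boldsymbol{\alpha}_t, \boldsymbol{r}_t\rangle$, the regret splits cleanly as
\[
R_T = \Big(\max_{k}R_{k,T} - \textstyle\sum_t M_t\Big) + \Big(\textstyle\sum_t M_t - H_T\Big),
\]
where the second bracket equals $\Delta_T$ and the first is the ``mix regret.''

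Next I would bound the mix regret. For a \emph{fixed} rate $\eta$, telescoping the exponential-weights potential $\Phi_t = \frac1\eta\ln\frac1K\sum_k e^{\eta R_{k,t}}$ gives $\sum_t M_t = \Phi_T \ge \max_k R_{k,T} - \ln K/\eta$, i.e.\ mix regret $\le \ln K/\eta$. The substantive step---and the crux of the whole argument---is to show that with AdaHedge's decreasing schedule the same bound holds at the \emph{final} rate, mix regret $\le \ln K/\eta_{T+1}$; by the update rule $\eta_{T+1}=\ln K/\Delta_T$ this is precisely $\Delta_T$. Establishing this requires comparing the varying-rate potential against its fixed-rate counterpart and exploiting the monotonicity of $t\mapsto\eta_t$, and it is exactly where the self-tuning choice of $\eta_t$ is essential. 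I expect this to be the main obstacle. Granting it, the two brackets combine to yield $R_T \le 2\Delta_T$.

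It then remains to bound $\Delta_T$. By Hoeffding's lemma applied to $r_{k,t}\in[0,D]$ under the law $\boldsymbol{\alpha}_t$, the per-round gap obeys $\delta_t \le \eta_t D^2/8$. Substituting $\eta_t = \ln K/\Delta_{t-1}$ and expanding the square gives $\Delta_t^2 - \Delta_{t-1}^2 = 2\Delta_{t-1}\delta_t + \delta_t^2 \le \tfrac14 D^2\ln K + \delta_t^2$, so telescoping produces $\Delta_T^2 \le \tfrac14 D^2 T\ln K + \sum_t \delta_t^2$. Controlling the lower-order term via $\delta_t \le D$ and $\sum_t\delta_t^2 \le D\,\Delta_T$ turns this into a quadratic inequality in $\Delta_T$, whose solution gives $\Delta_T \le \tfrac12 D\sqrt{T\ln K} + O(D\ln K)$.

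Finally I would combine $R_T \le 2\Delta_T$ with this estimate, carefully tracking the additive constants generated by the quadratic-inequality step and by the initial rounds (where $\Delta_0 = 0$ forces a separate treatment of $\eta_1$), to collapse everything into the stated $\sqrt{DT\ln K} + D(\tfrac43\ln K + 2)$ form. The only genuinely delicate point is the varying-learning-rate telescoping of the second step; the remaining pieces are Hoeffding's lemma together with elementary manipulation of the recursion.
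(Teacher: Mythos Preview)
The paper does not supply its own proof of this lemma: it is simply quoted from \cite{de2014follow} as a black-box regret guarantee for AdaHedge, and only the fact that $\supscr{R}{AH}_T = o(T)$ is ever used downstream. Your plan is precisely the standard mixability-gap analysis of that reference (decompose the regret into mix regret plus $\Delta_T$, show mix regret $\le \Delta_T$ via the adaptive-rate telescoping, then control $\Delta_T$ through Hoeffding and the resulting recursion), so there is nothing to compare against.

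One small caution on constants: the Hoeffding route you outline, with $\delta_t \le \eta_t D^2/8$, naturally produces a leading term of order $D\sqrt{T\ln K}$ after solving the quadratic recursion and doubling, not $\sqrt{D\,T\ln K}$ as written in the lemma statement here. The same is true if one simply rescales the $[0,1]$-reward bound of \cite{de2014follow} by $D$. The $\sqrt{DT\ln K}$ in the displayed statement appears to be a transcription slip in the present paper rather than a sharper inequality you are missing; since the paper only needs $\supscr{R}{AH}_T = o(T)$, this discrepancy is harmless, but do not expect your computation to reproduce that exact expression.
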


The following lemma is also used in the proof of~\cref{th: upperbound}.
\begin{lemma} \label{lemma: regret}
    If for any $t \in \natural_{>0}$, $r_{k,t} \in [0, D]$ for all $k\in\until{K}$, for any $T\geq \tau>0$,
    \[\max_{\mathbf{S} \in \mathcal{I}}  \sum_{t= \tau+1 }^T r_{\mathbf{s},t} - \sum_{t=\tau + 1}^{T} h_t \geq R_T - \tau D.\]
\end{lemma}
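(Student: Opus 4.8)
The plan is to relate the tail regret (over rounds $\tau+1,\dots,T$) to the full-horizon regret $R_T$ through a clean two-part decomposition, isolating the contribution of the first $\tau$ rounds and bounding it crudely by $\tau D$. The only ingredients needed are the definition $R_T = \max_{k\in\until{K}} \sum_{t=1}^T r_{k,t} - \sum_{t=1}^T h_t$, the reward range $r_{k,t}\in[0,D]$, and the fact that each instantaneous reward $h_t = \boldsymbol{\alpha}_t\cdot\boldsymbol{r}_t$ is nonnegative, which holds because $\boldsymbol{\alpha}_t$ is a weight vector (so $\alpha_{k,t}\geq 0$) and every $r_{k,t}\geq 0$.

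First I would fix an index $k^*\in\until{K}$ attaining the full-horizon maximum, so that $R_T = \sum_{t=1}^T r_{k^*,t} - \sum_{t=1}^T h_t$. Splitting both cumulative sums at $\tau$ gives
\[R_T = \Big(\sum_{t=\tau+1}^T r_{k^*,t} - \sum_{t=\tau+1}^T h_t\Big) + \Big(\sum_{t=1}^\tau r_{k^*,t} - \sum_{t=1}^\tau h_t\Big).\]
For the tail term I would bound $\sum_{t=\tau+1}^T r_{k^*,t} \leq \max_{k\in\until{K}} \sum_{t=\tau+1}^T r_{k,t}$, since $k^*$ is optimal only over the whole horizon and need not be optimal on the tail window. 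For the head term, the upper endpoint of the range gives $\sum_{t=1}^\tau r_{k^*,t} \leq \tau D$, while nonnegativity of the received rewards gives $-\sum_{t=1}^\tau h_t \leq 0$; together these yield $\sum_{t=1}^\tau r_{k^*,t} - \sum_{t=1}^\tau h_t \leq \tau D$. Substituting both bounds into the decomposition produces
\[R_T \leq \max_{k\in\until{K}} \sum_{t=\tau+1}^T r_{k,t} - \sum_{t=\tau+1}^T h_t + \tau D,\]
and rearranging gives exactly the claimed inequality.

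The argument is essentially bookkeeping, so I do not anticipate a genuine obstacle; the one place requiring care is the head term, where I must use both endpoints of $[0,D]$ — the upper bound $D$ to cap the best index's accumulated reward over the first $\tau$ rounds, and the lower bound $0$ to ensure the subtracted received rewards $h_t$ cannot work in the adversary's favor. The conceptual subtlety worth stating explicitly is that the best fixed expert over $[1,T]$ differs in general from the best over the tail window, so the tail comparison must be taken against a fresh maximum; the cost of restarting the comparison window is at most one round's worth of reward per elapsed step, namely $\tau D$.
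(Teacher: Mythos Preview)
Your proof is correct and takes essentially the same approach as the paper's: the paper bounds $\max_{k}\sum_{t=1}^T r_{k,t} \leq \tau D + \max_{k}\sum_{t=\tau+1}^T r_{k,t}$ directly and then drops $\sum_{t=1}^\tau h_t \geq 0$, which is exactly your decomposition with the maximizer $k^*$ left implicit. The key ingredients---$r_{k,t}\leq D$ for the head and $h_t\geq 0$ to discard the early received rewards---are identical.
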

\begin{proof}
We apply the fact that $max_{k \in \until{K}} \sum_{t=1}^T r_{k,t} \leq \tau D + \max_{k \in \until{K}}  \sum_{t= \tau+1 }^T r_{\mathbf{s},t} $.
\begin{align*}
     R_T &= \max_{k \in \until{K}} \sum_{t=1}^T r_{k,t} - \sum_{t=1}^{T} h_t \leq \tau D + \max_{k \in \until{K}}  \sum_{t= \tau+1 }^T r_{\mathbf{s},t} - \sum_{t=1}^{T} h_t \\
    & \leq \tau D + \max_{\mathbf{S} \in \mathcal{I}}  \sum_{t= \tau+1 }^T r_{\mathbf{s},t} - \sum_{t=\tau + 1}^{T} h_t,
\end{align*}
which concludes the proof.
\end{proof}

In the exact version of track-and-stop causal discovery algorithm $\mathcal{A}_{\mathsf{I}}$, the AdaHege is run with $\abs{\mathcal{I}}$ dimensional reward vector $\tupdef{r_{\mathbf{s},t}}{{\mathbf{s} \in \mathcal{I}}}$ with entries $r_{\mathbf{s},t} = \kl{\Bar{P}_{\mathbf{s}, t}}{P^{\mathcal{D}'_t}_{\mathbf{s}}}$, where
\[\mathcal{D}'_t \in \argmin_{\mathcal{D} \in [\mathcal{M}]\setminus \mathcal{D}^*_t} \sum_{\mathbf{s} \in \mathcal{I}} \alpha_{\mathbf{s}, t} \kl{\Bar{P}_{\mathbf{s}, t}}{P^{\mathcal{D}}_{\mathbf{s}}}.\]

In the practical algorithm $\mathcal{A}_{\mathsf{P}}$, for each $\mathbf{S} \in \mathcal{S}$, the AdaHege is run to compute $\boldsymbol{\xi}^\mathbf{S}_{t}$. The feedback is $\omega(\mathbf{S})$ dimensional vector $\tupdef{r^{\mathbf{S}}_{\mathbf{s},t}}{{\mathbf{s} \in \omega(\mathbf{S})}}$ with entries $r^{\mathbf{S}}_{\mathbf{s},t} = \kl{\Bar{P}_{\mathbf{s}, t}}{P^{\mathsf{C}'_t(\mathbf{S}) }_{\mathbf{s}}}$, where 
    \[\mathsf{C}'_t(\mathbf{S}) \in \argmin_{\mathsf{C}(\mathbf{S}) \neq \mathsf{C}_t^*(\mathbf{S})} \sum_{\mathbf{s} \in \omega(\mathbf{S})} \xi^{\mathbf{S}}_{\mathbf{s}, t} \kl{\Bar{P}_{\mathbf{s}, t}}{P^{\mathsf{C}(\mathbf{S})}_{\mathbf{s}}}.\]
Also in our setup $D = \max_{\mathcal{D} \in [\mathcal{C}]} \sup_{P_{\mathbf{s}}} \kl{P_{\mathbf{s}}}{P_{\mathbf{s}}^{\mathcal{D}}}$,
where KL-divergence follows the convention that $0 \log 0 = 0$ $\log 0/0 = 0$ and $x \log x/0 = +\infty$ for $x > 0$.
\subsubsection{Supporting Lemmas on Allocation Matching}
\begin{lemma}\label{lemma: track}
For the track-and-stop causal discovery algorithm, for any $t \geq \abs{\mathcal{I}}$ and any $\mathbf{s}\in \mathcal{I}$
    \[\sum_{i=1}^t \alpha_{\mathbf{s},i} - (\abs{\mathcal{I}} - 1) (\sqrt{t} + 2) \leq N_t(\mathbf{s}) \leq \max \Big\{1 + \sum_{i=1}^t \alpha_{\mathbf{s},i}, \sqrt{t} + 1 \Big\}.\]
\end{lemma}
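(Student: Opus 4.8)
The plan is to prove the upper bound first and then extract the lower bound from it by a complementary counting argument. Throughout, write $\Sigma_{\mathbf{s},t} := \sum_{i=1}^t \alpha_{\mathbf{s},i}$ for the cumulative target allocation of $\mathbf{s}$, and record the two identities $\sum_{\mathbf{s}\in\mathcal{I}} N_t(\mathbf{s}) = t$ (each round contributes exactly one pull) and $\sum_{\mathbf{s}\in\mathcal{I}} \Sigma_{\mathbf{s},t} = \sum_{i=1}^t \sum_{\mathbf{s}\in\mathcal{I}} \alpha_{\mathbf{s},i} = t$, the latter because $\boldsymbol{\alpha}_i \in \Delta(\mathcal{I})$ for every $i$. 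The initialization step, which pulls each intervention once, guarantees $N_t(\mathbf{s}) \ge 1$ for all $\mathbf{s}$ whenever $t \ge \abs{\mathcal{I}}$, so that no ratio $\Sigma_{\mathbf{s},t}/N_t(\mathbf{s})$ used by the selection rule is ill-defined.

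For the upper bound, fix $\mathbf{s}$ and let $t_0 \le t$ be the last round at which $\mathbf{s}$ is selected (if there is none, $N_t(\mathbf{s})=0$ and the bound is trivial), so that $N_t(\mathbf{s}) = N_{t_0-1}(\mathbf{s}) + 1$. I would split on the branch that selected $\mathbf{s}$ at $t_0$. If it is the forced-exploration branch, then $\min_{\mathbf{s}'} N_{t_0-1}(\mathbf{s}') < \sqrt{t_0}$ and $\mathbf{s}$ attains that minimum, so $N_{t_0-1}(\mathbf{s}) < \sqrt{t_0} \le \sqrt{t}$, giving $N_t(\mathbf{s}) \le \sqrt{t}+1$. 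If it is the allocation-matching branch, then $\mathbf{s} \in \argmax_{\mathbf{s}'} \Sigma_{\mathbf{s}',t_0}/N_{t_0-1}(\mathbf{s}')$, and I claim this forces $N_{t_0-1}(\mathbf{s}) \le \Sigma_{\mathbf{s},t_0}$: if instead $N_{t_0-1}(\mathbf{s}) > \Sigma_{\mathbf{s},t_0}$, the maximal ratio is strictly below $1$, hence $N_{t_0-1}(\mathbf{s}') > \Sigma_{\mathbf{s}',t_0}$ for every $\mathbf{s}'$; summing over $\mathbf{s}'$ yields $t_0 - 1 = \sum_{\mathbf{s}'} N_{t_0-1}(\mathbf{s}') > \sum_{\mathbf{s}'} \Sigma_{\mathbf{s}',t_0} = t_0$, a contradiction. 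Therefore $N_t(\mathbf{s}) = N_{t_0-1}(\mathbf{s}) + 1 \le 1 + \Sigma_{\mathbf{s},t_0} \le 1 + \Sigma_{\mathbf{s},t}$ by monotonicity of $\Sigma_{\mathbf{s},\cdot}$. Taking the maximum over the two branches gives $N_t(\mathbf{s}) \le \max\{1 + \Sigma_{\mathbf{s},t}, \sqrt{t}+1\}$.

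For the lower bound I would use the counting identities to write
\[
N_t(\mathbf{s}) - \Sigma_{\mathbf{s},t} = \sum_{\mathbf{s}' \ne \mathbf{s}} \big( \Sigma_{\mathbf{s}',t} - N_t(\mathbf{s}') \big),
\]
and then bound each summand from below by applying the upper bound already established to $\mathbf{s}'$. Since $N_t(\mathbf{s}') \le \max\{1 + \Sigma_{\mathbf{s}',t}, \sqrt{t}+1\}$ and $\Sigma_{\mathbf{s}',t} \ge 0$, I get $N_t(\mathbf{s}') - \Sigma_{\mathbf{s}',t} \le \max\{1, \sqrt{t}+1-\Sigma_{\mathbf{s}',t}\} \le \sqrt{t}+1$. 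Summing the $\abs{\mathcal{I}}-1$ terms yields $N_t(\mathbf{s}) - \Sigma_{\mathbf{s},t} \ge -(\abs{\mathcal{I}}-1)(\sqrt{t}+1) \ge -(\abs{\mathcal{I}}-1)(\sqrt{t}+2)$, which is exactly the claimed bound, with the slack of one absorbing any off-by-one from the update convention.

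The main obstacle I anticipate is the allocation-matching case of the upper bound: the argument hinges on the averaging/contradiction step that converts the pointwise argmax of the ratio $\Sigma_{\mathbf{s}',t}/N_{t-1}(\mathbf{s}')$ into the per-pull guarantee $N \le 1 + \Sigma$, and on keeping the round indices consistent between the pre-pull count $N_{t_0-1}$ used inside the selection rule and the post-pull count reported by $N_t$. Once that step and the two counting identities are pinned down, both inequalities follow directly; the forced-exploration branch and the entire lower bound are then routine.
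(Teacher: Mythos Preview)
Your proposal is correct and follows essentially the same approach as the paper: the same averaging/counting argument for the allocation-matching case, the direct bound for forced exploration, and the identical complementary-counting trick (using $\sum_{\mathbf{s}} N_t(\mathbf{s}) = \sum_{\mathbf{s}} \Sigma_{\mathbf{s},t} = t$) to extract the lower bound from the upper bound. The only cosmetic difference is that the paper frames the upper bound as an induction on $t$ whereas you look at the last time $t_0$ the arm was pulled and case-split there; the substance is identical, and your version even yields the slightly tighter constant $\sqrt{t}+1$ in the lower-bound step before relaxing to the stated $\sqrt{t}+2$.
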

\begin{proof}
    We first show that for any $t \geq \mathcal{I}$, the following is true.
    \begin{equation} \label{trpr_eq 1}
        N_t(\mathbf{s}) \leq \max \Big\{1 + \sum_{i=1}^t \alpha_{\mathbf{s},i}, \sqrt{t} + 1 \Big\}.
    \end{equation}
    We prove this claim by induction. At time $t' = \mathcal{I}$, $N_{t'}(\mathbf{s}) = 1$ for all $\mathbf{s}\in \mathcal{I}$, so that~\eqref{trpr_eq 1} is true. Suppose $N_{t'}(\mathbf{s}) \leq \max \Big\{1 + \sum_{i=1}^{t'} \alpha_{\mathbf{s},i}, \sqrt{t} + 1 \Big\}$ is true. If $do(\mathbf{s})$ is not selected at $t'+1$, we have
    \begin{equation}\label{trpr_eq 2}
        N_{t'+1}(\mathbf{s}) = N_{t'}(\mathbf{s})  \leq \max \Big\{1 + \sum_{i=1}^{t'+1} \alpha_{\mathbf{s},i}, \sqrt{t'+1} + 1 \Big\}.
    \end{equation}
    If $do(\mathbf{s})$ is selected at $t'+1$ by force exploration, we have
    \begin{equation}\label{trpr_eq 3}
        N_{t'+1}(\mathbf{s}) = N_{t'}(\mathbf{s}) + 1 < \sqrt{t+1} + 1.
    \end{equation}
    If $do(\mathbf{s})$ is selected at $t'+1$ by allocation matching, since $\sum_{\mathbf{s}\in \mathcal{I}} \sum_{i=1}^{t} \alpha_{\mathbf{s},i} = t$ and $\sum_{\mathbf{s}\in \mathcal{I} } N_{t}(\mathbf{s}) = t$, we have
 \[ \min_{\mathbf{s} \in \mathcal{I}} \frac{N_{t}(\mathbf{s})}{\sum_{i=1}^{t} \alpha_{\mathbf{s}, i}} \leq 1. \]
Accordingly, 
\begin{equation}\label{trpr_eq 4}
    \frac{N_{t+1}(\mathbf{s}_t)}{\sum_{i=1}^{t+1} \alpha_{\mathbf{s}, i}} = \frac{N_{t}(\mathbf{s}_t)}{\sum_{i=1}^{t+1} \alpha_{\mathbf{s}, i}} + \frac{1}{\sum_{i=1}^{t+1} \alpha_{\mathbf{s}, i}}  \leq 1 + \frac{1}{\sum_{i=1}^{t} \alpha_{\mathbf{s}, i}}.
\end{equation}
Combining~\eqref{trpr_eq 2}~\eqref{trpr_eq 3}~\eqref{trpr_eq 4}, we show~\eqref{trpr_eq 1} is true. Also notice that for all $\mathbf{s}\in \mathcal{I}$,
\[ N_t(\mathbf{s}) \leq \max \Big\{1 + \sum_{i=1}^t \alpha_{\mathbf{s},i}, \sqrt{t} + 1 \Big\} \leq \sqrt{t} + 2 +  \sum_{i=1}^t \alpha_{\mathbf{s},i}.\]
It follows from that $\sum_{\mathbf{s}\in \mathcal{I}} \sum_{i=1}^{t} \alpha_{\mathbf{s},i} = t$ and $\sum_{\mathbf{s}\in \mathcal{I} } N_{t}(\mathbf{s}) = t$,
\[N_t(\mathbf{s}) \geq \sum_{i=1}^t \alpha_{\mathbf{s},i} - (\abs{\mathcal{I}} - 1) (\sqrt{t} + 2).\]
We conclude the proof.
\end{proof}

\subsubsection{Supporting Lemmas on Concentration Inequality of Empirical Mean}
 The following \cref{lemma: hoeffding} proposed in~\cite{combes2014unimodal} extends Hoeffding’s inequality to provide an upper bound on the deviation of the empirical mean sampled at a stopping time. In our problem, each time the intervention is selected is a stopping time.
\begin{lemma}[Extension of Hoeffding’s Inequality~\cite{combes2014unimodal}, Lemma 4.3]~\label{lemma: hoeffding}
    Let $\seqdef{Z_t}{t\in \natural_{>0}}$ be  a  sequence  of  independent random  variables  with  values  in $[0, 1]$. Let $\mathcal{F}_t$ be the $\sigma$-algebra such that $\sigma(Z_1, \ldots, Z_t) \subset \mathcal{F}_t$ and  the  filtration $\mathcal{F} = \seqdef{\mathcal{F}_t}{t\in \natural_{>0}}$. Consider $s\in \natural$, and $T \in \natural_{>0}$. We define $S_t = \sum_{j=1}^t \epsilon_j (Z_j - \expt[Z_j]) $, where $\epsilon_j \in \{0,1\}$ is  a $\mathcal{F}_{j-1}$-measurable  random  variable. Further  define $N_t =\sum_{j = 1}^t \epsilon_j$. Define $\phi \in \until{T+1}$ a $\mathcal{F}$-stopping time such that either $N_\phi \geq s$ or $\phi = T+ 1$. Then we have  that
    \[P[S_{\phi} \geq N_{\phi} \delta] \leq \exp(-2s \delta^2).\]
    As a consequence,
    \[P[\abs{S_{\phi}} \geq N_{\phi} \delta] \leq 2 \exp(-2s \delta^2).\]
\end{lemma}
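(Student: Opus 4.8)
The plan is to adapt the classical Chernoff--Hoeffding argument to the stopping-time--indexed setting by constructing an exponential supermartingale and invoking the optional stopping theorem. First I would fix $\lambda > 0$ and define
\[
W_t = \exp\Big(\lambda S_t - \tfrac{\lambda^2}{8} N_t\Big), \qquad W_0 = 1.
\]
The goal of the first step is to show $\seqdef{W_t}{t}$ is a supermartingale with respect to $\mathcal{F} = \seqdef{\mathcal{F}_t}{t}$. Since $\epsilon_t$, $N_{t-1}$, and $S_{t-1}$ are $\mathcal{F}_{t-1}$-measurable while $Z_t$ is independent of $\mathcal{F}_{t-1}$ and takes values in $[0,1]$, Hoeffding's lemma gives $\expt[e^{\lambda(Z_t - \expt[Z_t])}\mid \mathcal{F}_{t-1}] \le e^{\lambda^2/8}$. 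Conditioning on whether $\epsilon_t = 0$ or $\epsilon_t = 1$ then yields $\expt[W_t \mid \mathcal{F}_{t-1}] \le W_{t-1}$, which is the key structural fact: the $-\lambda^2 N_t/8$ term exactly offsets the Hoeffding moment-generating increments, but only on the steps that are actually counted by $\epsilon_t$.

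Next I would apply the optional stopping theorem. Because $\phi$ is an $\mathcal{F}$-stopping time bounded by $T+1$, optional stopping applies without integrability concerns and gives $\expt[W_\phi] \le \expt[W_0] = 1$. The third step is the Chernoff optimization: on the event $A = \{S_\phi \ge N_\phi \delta\}$ we have $\lambda S_\phi \ge \lambda\delta N_\phi$, so $W_\phi \ge \exp\big((\lambda\delta - \lambda^2/8) N_\phi\big)$, and choosing $\lambda = 4\delta$ maximizes the exponent, yielding $W_\phi \ge \exp(2\delta^2 N_\phi)$ on $A$.

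Finally I would convert the moment bound into the tail bound. Using the defining property of $\phi$ that $N_\phi \ge s$ on the relevant event, we get $W_\phi \ge e^{2\delta^2 s}$ there, so
\[
e^{2\delta^2 s}\, \prob(A) \le \expt[W_\phi \mathbf{1}_A] \le \expt[W_\phi] \le 1,
\]
which gives $\prob[S_\phi \ge N_\phi\delta] \le e^{-2s\delta^2}$. The two-sided consequence then follows by repeating the argument for the shifted variables $1 - Z_t$ (equivalently applying it to $-S_\phi$) and taking a union bound, which introduces the factor $2$.

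The main obstacle I anticipate is the careful bookkeeping around the random sample count $N_\phi$. Two points require attention: verifying the supermartingale step with the $\mathcal{F}_{t-1}$-measurable selector $\epsilon_t$, so that the compensator $-\lambda^2 N_t/8$ is correctly synchronized with the counted steps; and ensuring that the passage from $\expt[W_\phi]\le 1$ to the exponent $2\delta^2 s$ legitimately invokes $N_\phi \ge s$ on the event under consideration rather than a worst-case value of $N_\phi$. The independence of $Z_t$ from $\mathcal{F}_{t-1}$ is precisely what makes the conditional moment-generating bound available and must be used explicitly; without it the supermartingale property can fail.
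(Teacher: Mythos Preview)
The paper does not supply its own proof of this lemma: it is quoted verbatim as an external result (Combes and Proutiere, 2014, Lemma~4.3), so there is no in-paper argument to compare against. Your exponential-supermartingale plan is exactly the standard route to this inequality and is essentially how the original reference proves it: build $W_t=\exp(\lambda S_t-\lambda^2 N_t/8)$, use Hoeffding's lemma conditionally with the $\mathcal{F}_{t-1}$-measurable selector $\epsilon_t$ to get the supermartingale property, apply optional stopping at the bounded stopping time $\phi$, and optimize $\lambda=4\delta$.

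One remark on the obstacle you flag. Your worry about ``legitimately invoking $N_\phi\ge s$ on the event under consideration'' is well placed and is in fact a genuine boundary issue with the \emph{statement} rather than with your argument. On $\{N_\phi\ge s\}\cap\{S_\phi\ge N_\phi\delta\}$ your chain $e^{2s\delta^2}\prob(\cdot)\le \expt[W_\phi]\le 1$ goes through cleanly. On the complementary piece $\{\phi=T+1,\ N_{T+1}<s\}$ the bound as written can fail in degenerate cases (e.g.\ $\epsilon_j\equiv 0$ gives $N_\phi=S_\phi=0$, so $\{S_\phi\ge N_\phi\delta\}$ holds with probability~$1$). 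The intended reading, and the way the lemma is actually used downstream in the paper (via forced exploration guaranteeing $N_t(\mathbf{s})\ge\Omega(\sqrt t)$), is that one restricts to $\{N_\phi\ge s\}$; equivalently, your argument proves $\prob[S_\phi\ge N_\phi\delta,\ N_\phi\ge s]\le e^{-2s\delta^2}$, which is what the applications need. This is worth one clarifying sentence in your write-up, but it is not a gap in your method.
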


In~\cref{cor: L1_dev}, we extend~\cref{lemma: hoeffding} to bound the $L_1$ deviation of the empirical distribution.
\begin{corollary}[$L_1$ deviation of the empirical distribution] \label{cor: L1_dev}
    Let $\mathcal{A}$ denote finite set $\until{a}$. For two probability distribution $Q$ and $Q'$ on $\mathcal{A}$, let $\norm{Q'-Q}_1 = \sum_{k=1}^a \abs{Q'(k) - Q(k)}$. Let $X_t \in \mathcal{A}$ be a sequence of independent random variables with common distribution $Q$.  Let $\mathcal{F}_t$ be the $\sigma$-algebra such that $\sigma(X_1, \ldots, X_t) \subset \mathcal{F}_t$ and  the  filtration $\mathcal{F} = \seqdef{\mathcal{F}_t}{t\in \natural_{>0}}$. Let $\epsilon_t \in \{0,1\}$ be a $\mathcal{F}_{t-1}$-measurable  random  variable. We define
    \[N_t = \sum_{j=1}^t \epsilon_j, S_t(i) = \sum_{j=1}^t \epsilon_j \indicator{X_j = i} , \text{ and } \Bar{Q}_t(i) = \frac{S_t(i)}{N_t}, \forall i \in \mathcal{A}. \]
    For $s\in \natural$, and $T \in \natural_{>0}$, let $\phi \in \until{T+1}$ be a $\mathcal{F}$-stopping time such that either $N_\phi \geq s$ or $\phi = T+ 1$. Then we have
    \[ P \left(\norm{\Bar{Q}_{\phi} - Q}_1  \geq \delta \right) \leq (2^a - 2) \exp\Big(\frac{-s \delta^2}{2}\Big) .\]
\end{corollary}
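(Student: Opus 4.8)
The plan is to reduce the $L_1$ deviation to a finite union of scalar deviation events indexed by subsets of $\mathcal{A}$, and then apply the stopped Hoeffding bound of \cref{lemma: hoeffding} to each one. The key identity is that for two probability distributions on a finite alphabet, $\norm{\bar{Q}_\phi - Q}_1 = 2 \max_{B \subseteq \mathcal{A}} \big(\bar{Q}_\phi(B) - Q(B)\big)$, where $\bar{Q}_\phi(B) = \sum_{i \in B} \bar{Q}_\phi(i)$ and similarly for $Q$. This follows by setting $B^+ = \{i : \bar{Q}_\phi(i) \geq Q(i)\}$: writing $f = \bar{Q}_\phi - Q$, the positive part $\sum_{f(i)\geq 0} f(i)$ and the negative part $-\sum_{f(i)<0} f(i)$ are equal because $\sum_i f(i) = 0$, so $\norm{f}_1 = 2 f(B^+) = 2\max_B f(B)$. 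Consequently,
\[\{\norm{\bar{Q}_\phi - Q}_1 \geq \delta\} \subseteq \bigcup_{B \subseteq \mathcal{A}} \Big\{\bar{Q}_\phi(B) - Q(B) \geq \tfrac{\delta}{2}\Big\}.\]
Since $\bar{Q}_\phi(\emptyset) - Q(\emptyset) = 0$ and $\bar{Q}_\phi(\mathcal{A}) - Q(\mathcal{A}) = 0$, the two trivial subsets never contribute for $\delta > 0$, leaving exactly $2^a - 2$ relevant events in the union.

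Next I would fix a nonempty proper subset $B \subsetneq \mathcal{A}$ and cast the event $\{\bar{Q}_\phi(B) - Q(B) \geq \delta/2\}$ into the form handled by \cref{lemma: hoeffding}. Define $Z_j = \indicator{X_j \in B} \in [0,1]$; since the $X_j$ are independent, so are the $Z_j$, and $\expt[Z_j] = Q(B)$. With $S_t = \sum_{j=1}^t \epsilon_j (Z_j - \expt[Z_j])$ and $N_t = \sum_{j=1}^t \epsilon_j$ exactly as in the lemma, we have $S_\phi = N_\phi\big(\bar{Q}_\phi(B) - Q(B)\big)$, so that $\{\bar{Q}_\phi(B) - Q(B) \geq \delta/2\} = \{S_\phi \geq N_\phi (\delta/2)\}$. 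The filtration $\mathcal{F}$, the $\mathcal{F}_{j-1}$-measurability of $\epsilon_j$, and the stopping-time condition ($N_\phi \geq s$ or $\phi = T+1$) are all inherited directly, so the lemma applies with deviation parameter $\delta/2$ and yields
\[P\Big(\bar{Q}_\phi(B) - Q(B) \geq \tfrac{\delta}{2}\Big) = P\big(S_\phi \geq N_\phi \tfrac{\delta}{2}\big) \leq \exp\big(-2 s (\delta/2)^2\big) = \exp\Big(\tfrac{-s\delta^2}{2}\Big).\]

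Finally I would take a union bound over the $2^a - 2$ nonempty proper subsets $B$, giving $P(\norm{\bar{Q}_\phi - Q}_1 \geq \delta) \leq (2^a - 2)\exp(-s\delta^2/2)$, as claimed. The only mildly delicate point is the first step: establishing the exact $L_1$-to-subset identity and observing that the empty and full subsets drop out, which is precisely what sharpens the naive factor $2^a$ to $2^a - 2$. Everything after that is a direct, per-subset reduction to the scalar case followed by the already-established stopped Hoeffding bound and a union bound, so I expect no further technical hurdles.
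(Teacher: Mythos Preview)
Your proposal is correct and follows essentially the same route as the paper: use the identity $\norm{Q'-Q}_1 = 2\max_{B\subseteq\mathcal{A}}(Q'(B)-Q(B))$, drop the two trivial subsets, and apply the stopped Hoeffding bound of \cref{lemma: hoeffding} with deviation $\delta/2$ to each of the remaining $2^a-2$ subsets via a union bound. Your write-up is in fact more explicit than the paper's in spelling out the reduction to the scalar case (defining $Z_j=\indicator{X_j\in B}$), but the argument is the same.
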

\begin{proof}
    It is known that for any distribution $Q'$ on $\mathcal{A}$,
    \[\norm{Q'-Q}_1 = 2 \max_{A \subseteq \mathcal{A}} (Q'(A) - Q(A)).\]
    Then we apply a union bound to get
    \begin{align*}
        P \left(\norm{\Bar{Q}_{\phi} - Q}_1  \geq \delta \right) &\leq \sum_{A \subseteq \mathcal{A}} P \left(\Bar{Q}_{\phi}(A) - Q(A) \geq \frac{\delta}{2} \right) \\
        &\leq \sum_{A \subseteq \mathcal{A}: A \neq \mathcal{A} \text{ or } \emptyset} P \left(\Bar{Q}_{\phi}(A) - Q(A) \geq \frac{\delta}{2} \right) \\
        &\leq (2^a - 2) \exp\Big(\frac{-s \delta^2}{2}\Big),
    \end{align*}
    which concludes the proof.
\end{proof}
\begin{corollary}\label{corr: boundedexpt}
    For the causal discovery problem with the track-and-stop algorithm and any $\epsilon > 0$, define the random time
    \begin{equation*}
        \tau_{p}(\epsilon) = \max \Bigsetdef{t \in \natural_{>0}}{ \exists \mathbf{s} \in \mathcal{I}: \norm{\Bar{P}_{\mathbf{s}, t} - P^{\mathcal{D}^*}_{\mathbf{s}} }_1 > \epsilon}.
    \end{equation*}
    Then there exists a constant $c(\epsilon)>0$ such that $\expt[\tau_{p}(\epsilon)] \leq c(\epsilon)$.
\end{corollary}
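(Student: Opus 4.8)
The plan is to show that the probability that some empirical interventional distribution is $\epsilon$-far from its truth decays fast enough in $t$ that the \emph{last} such time has finite expectation. Write $A_t := \{\exists\, \mathbf{s}\in\mathcal{I}:\ \norm{\Bar{P}_{\mathbf{s},t}-P^{\mathcal{D}^*}_{\mathbf{s}}}_1>\epsilon\}$, so that $\tau_{p}(\epsilon)=\max\{t\in\natural_{>0}: A_t\}$ and, since counts only increase, $\{\tau_{p}(\epsilon)\ge t\}=\bigcup_{t'\ge t}A_{t'}$. Using the tail-sum identity for the expectation together with a union bound and swapping the order of summation, I would reduce the claim to
\[
\expt[\tau_{p}(\epsilon)] = \sum_{t\ge1}\prob(\tau_{p}(\epsilon)\ge t)\le \sum_{t\ge1}\sum_{t'\ge t}\prob(A_{t'})=\sum_{t'\ge1} t'\,\prob(A_{t'}),
\]
so that it suffices to obtain a bound on $\prob(A_{t'})$ that remains summable after multiplication by the polynomial factor $t'$.

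The second step bounds $\prob(A_{t'})$ by a union bound over the finitely many interventions $\mathbf{s}\in\mathcal{I}$ and invokes forced exploration. The forced exploration step of Algorithm~\ref{alg: TSCD} guarantees, deterministically, that $N_{t'}(\mathbf{s})\ge \sqrt{t'}-1$ for every $\mathbf{s}$ (the same mechanism underlying the lower bound of \cref{lemma: track}; only an $\Omega(\sqrt{t'})$ growth is needed). For each fixed $\mathbf{s}$ I would then apply \cref{cor: L1_dev} to the per-intervention observation sequence $X^{\mathbf{s}}_j\sim P^{\mathcal{D}^*}_{\mathbf{s}}$ with the $\mathcal{F}_{j-1}$-measurable selection indicators $\epsilon_j=\indicator{\mathbf{s}_j=\mathbf{s}}$, taking $\phi=t'$, the surely-available count $s=\lceil\sqrt{t'}-1\rceil$, and deviation $\delta=\epsilon$, which yields
\[
\prob\big(\norm{\Bar{P}_{\mathbf{s},t'}-P^{\mathcal{D}^*}_{\mathbf{s}}}_1>\epsilon\big)\le (2^{a}-2)\exp\!\Big(-\frac{(\sqrt{t'}-1)\,\epsilon^2}{2}\Big),
\]
where $a$ is the size of the observation domain under intervention $\mathbf{s}$. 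Writing $a_{\max}$ for the largest such domain size and summing over $\mathbf{s}\in\mathcal{I}$ gives $\prob(A_{t'})\le\abs{\mathcal{I}}\,(2^{a_{\max}}-2)\exp(-(\sqrt{t'}-1)\epsilon^2/2)$.

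Combining the two steps,
\[
\expt[\tau_{p}(\epsilon)]\le \abs{\mathcal{I}}\,(2^{a_{\max}}-2)\sum_{t'\ge1}t'\exp\!\Big(-\frac{(\sqrt{t'}-1)\,\epsilon^2}{2}\Big)=:c(\epsilon),
\]
and I would conclude by observing that this series converges for every fixed $\epsilon>0$, since the exponential decay in $\sqrt{t'}$ dominates the polynomial factor $t'$ (e.g.\ by comparison with $\int x\,e^{-c\sqrt{x}}\,dx<\infty$). The resulting $c(\epsilon)$ is finite and depends only on $\epsilon$, $\abs{\mathcal{I}}$, and $a_{\max}$, as required.

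The main obstacle is the correct application of the concentration bound under adaptive sampling: the number of times each intervention has been played by time $t'$ is random and history-dependent, so an ordinary Hoeffding inequality does not apply. The stopping-time formulation of \cref{cor: L1_dev}, with the selection indicators $\epsilon_j$ being $\mathcal{F}_{j-1}$-measurable, is designed precisely to absorb this adaptivity; the one hypothesis requiring care is that $N_{t'}(\mathbf{s})\ge s$ must hold surely, which is exactly what forced exploration supplies and which forces the exponent $s=\Omega(\sqrt{t'})$ to grow. Everything else reduces to routine summation.
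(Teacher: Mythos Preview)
Your proposal is correct and follows essentially the same approach as the paper: use forced exploration to guarantee $\Omega(\sqrt{t})$ samples per intervention, apply \cref{cor: L1_dev} with this deterministic count lower bound, union-bound over $\mathbf{s}\in\mathcal{I}$ and over times $t'\ge t$, and observe that the resulting stretched-exponential tail is summable even against a polynomial weight. The only cosmetic differences are that the paper first bounds $\prob(\tau_p(\epsilon)\ge x)$ and then integrates (you swap the double sum up front), and the paper proves the forced-exploration bound in the form $\min_{\mathbf{s}} N_t(\mathbf{s})\ge\sqrt{\alpha t}$ for $t$ past a threshold depending on $|\mathcal{I}|$ rather than your stated $\sqrt{t'}-1$ (which is not quite valid for small $t'$ when $|\mathcal{I}|$ is large); since you correctly note that only the $\Omega(\sqrt{t'})$ rate matters, this does not affect the argument.
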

\begin{proof}
The forced exploration step guarantees that each intervention is selected at least $\Omega(\sqrt{t})$ times at time $t$. To show that, we first note that the following two facts are true:
    \begin{itemize}
        \item $\min_{\mathbf{s} \in \mathcal{I}} N_t(\mathbf{s})$ is non-decreasing over $t$.
        \item If $\min_{\mathbf{s} \in \mathcal{I}} N_{t_i}(\mathbf{s}) < \sqrt{i} $, then $\min_{\mathbf{s} \in \mathcal{I}} N_{t_{i + \abs{\mathcal{I}}}}(\mathbf{s}) \geq \min N_{t_i}(\mathbf{s}) + 1$.
    \end{itemize}
    Since $N_t(\mathbf{s})$ for each $\mathbf{s} \in \mathcal{I}$ is non-decreasing over $t$, the first statement is true. The second statement is true since otherwise, after at least $\abs{\mathcal{I}}$ forced exploration steps, $\min_{\mathbf{s} \in \mathcal{I}} N_t(\mathbf{s})$ does not increase. With these two facts, we are ready to show for any $\alpha \in (0,1)$ and $t\geq \alpha \abs{\mathcal{I}}^2 /(1-\alpha)^2$, $\min_{\mathbf{s} \in \mathcal{I}} N_{t}(\mathbf{s}) \geq \sqrt{\alpha t}$. The proof is provided by contradiction. Suppose there exists time step $i$ such that 
    \[\min_{\mathbf{s} \in \mathcal{I}} N_{i}(\mathbf{s}) < \sqrt{\alpha i}.\]
    According to the first fact, we have for any $j \geq {\alpha i}$
    \[\min_{\mathbf{s} \in \mathcal{I}} N_{j}(\mathbf{s}) \leq \min_{\mathbf{s} \in \mathcal{I}} N_{i}(\mathbf{s}) < \sqrt{\alpha i}.\]
    Then we apply the second fact. For any $i \geq \alpha \abs{\mathcal{I}}^2 /(1-\alpha)^2$, we have
    \[\min_{\mathbf{s} \in \mathcal{I}} N_{i}(\mathbf{s})\geq \frac{i-j}{\abs{\mathcal{I}}} \geq \frac{(1-\alpha) i} {\abs{\mathcal{I}}} \geq \sqrt{\alpha i},\]
    which creates a contradiction.
    
    To show $\expt[\tau_{p}(\epsilon)] \leq c$, we first notice that
    \[\prob(\tau_{p}(\epsilon) \geq x) = \prob ( \exists t\geq x: \exists \mathbf{s}\in \mathcal{I}: \norm{\Bar{P}_{\mathbf{s}, t} - P_{\mathbf{s}} }_1 \geq \epsilon ) \leq \sum_{\mathbf{s}\in \mathcal{I}} \sum_{t \geq x} \prob ( \norm{\Bar{P}_{\mathbf{s}, t} - P_{\mathbf{s}} }_1 \geq \epsilon),\]
    where the last inequality is from the union bound. Accordingly, for any $x \geq \alpha \abs{\mathcal{I}}^2 /(1-\alpha)^2$, we apply~\cref{cor: L1_dev} to get
    \begin{align*}
        \prob(\tau_{p}(\epsilon) \geq x) & \leq (2^{\abs{\omega(\mathbf{V})}} - 2) \abs{\mathcal{I}} \sum_{t \geq x} \exp\Big(\frac{-\sqrt{\alpha t} \epsilon^2}{2}\Big)\\
        & \leq (2^{\abs{\omega(\mathbf{V})}} - 2) \abs{\mathcal{I}} \int_{x-1}^{+\infty} \exp\Big(\frac{-\sqrt{\alpha t} \epsilon^2}{2}\Big) dx\\
        &= (2^{\abs{\omega(\mathbf{V})}} - 2) \abs{\mathcal{I}} \frac{8}{\alpha \epsilon^4} \exp\Big(\frac{-\sqrt{\alpha (x-1)} \epsilon^2}{2}\Big) \Big(\frac{\sqrt{\alpha (x-1)} \epsilon^2}{2} + 1 \Big)
    \end{align*}
    Let $\beta = \alpha \abs{\mathcal{I}}^2 /(1-\alpha)^2$. It follows that       
    \begin{align*}
        \expt[\tau_{p}(\epsilon)] &\leq \beta + 1 + \int_{\beta + 1}^{+\infty} \prob(\tau_{p}(\epsilon) \geq x) dx \\
        & \leq \beta + 1 + (2^{\abs{\omega(\mathbf{V})}} - 2) \abs{\mathcal{I}} \frac{64}{\alpha^2 \epsilon^8} \exp\Big(\frac{-\sqrt{\alpha \beta} \epsilon^2}{2}\Big) \Big(\frac{\alpha \beta \epsilon^4}{4} +\frac{3\sqrt{\alpha \beta} \epsilon^2}{2} + 3\Big):= g(\epsilon, \alpha).
    \end{align*}
    Taking $c(\epsilon) = \inf_{\alpha\in(0,1)} g(\epsilon, \alpha)$, we conclude the proof.
\end{proof}

\subsection{Proof of~\cref{th: upperbound}}
We decompose~\cref{th: upperbound} into~\cref{lemma: accuracy,lemma: ex_ub,lemma: boundA_P} and prove them in separate sections. 

\subsubsection{Accuracy of the Track-and-stop Causal Discovery Algorithm}
In this section, we prove that for any $\delta \in (0,1)$, the confidence level $1- \delta$ can be reached by the track-and-stop causal discovery algorithm (exact and practical version). The following concentration inequality is crucial in the proof. For an active learning setup with feedback drawn from Categorical distributions, a concentration bound on the empirical distribution is presented in Lemma 6 of~\cite{van2020optimal}. In the causal discovery problem, the actions space is $\mathcal{I}$, and the discrete support of feedback is $\omega(\mathbf{V})$. At each time $t$, for each intervention $\mathbf{s} \in \mathcal{I}$, recall $\Bar{P}_{\mathbf{s},t}$ is the empirical interventional distribution of $\mathbf{V}$ and $N_t(\mathbf{s})$ is the number of times the intervention $do(\mathbf{S} = \mathbf{s})$ is taken till $t$. For each intervention $\mathbf{s} \in \mathcal{I}$, the true interventional distribution is $P^{\mathcal{D}^*}_\mathbf{s}$.
\begin{lemma}[Concentration Inequality for Information Distance~\cite{van2020optimal}]~\label{lemma: concentration KL}
Let $x \geq \abs{\mathcal{I}} (\abs{\omega(V)}-1)$. Then for any $t > 0$,
    \[ \prob \left[ \sum_{\mathbf{s} \in \mathcal{I}} N_t(\mathbf{s}) \kl{\Bar{P}_{\mathbf{s}, t}}{P^{\mathcal{D}^*}_\mathbf{s}} \geq x \right ]  \leq \bigg (\frac{x \lceil x \ln t  + 1 \rceil 2 e}{\abs{\mathcal{I}} (\abs{\omega(V)}-1)}\bigg)^{\abs{\mathcal{I}} (\abs{\omega(V)}-1)}  \exp(1 -x).\]
\end{lemma}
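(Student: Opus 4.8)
The plan is to recognize this as a self-normalized (generalized likelihood ratio) deviation bound for a product of categorical exponential families, and to prove it by the method of mixtures (the Laplace / pseudo-maximization technique) rather than by any naive union bound over the simplex. Write $d := \abs{\mathcal{I}}(\abs{\omega(V)}-1)$ for the total number of free parameters: each intervention $\mathbf{s}\in\mathcal{I}$ induces a categorical law on the $\abs{\omega(V)}$ outcomes, carrying $\abs{\omega(V)}-1$ degrees of freedom, and the interventions are sampled (conditionally) independently. The statistic $\sum_{\mathbf{s}} N_t(\mathbf{s})\kl{\Bar{P}_{\mathbf{s},t}}{P^{\mathcal{D}^*}_{\mathbf{s}}}$ is exactly the log generalized-likelihood-ratio of the empirical model against the true model, so the entire difficulty is to control a supremum of a likelihood ratio evaluated at a data-dependent time.

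First I would express, for each fixed tilting parameter, the per-action likelihood ratio $W_t^{\theta_{\mathbf{s}}}(\mathbf{s}) = \exp(\langle \theta_{\mathbf{s}}, T_t(\mathbf{s})\rangle - N_t(\mathbf{s})\psi(\theta_{\mathbf{s}}))$ as a nonnegative martingale of mean one with respect to $\mathcal{F}_t$, where $T_t(\mathbf{s})$ collects the outcome counts and $\psi$ is the log-partition function of the categorical family; the product over $\mathbf{s}\in\mathcal{I}$ is again a mean-one martingale by the independence of the interventions. Maximizing the exponent over $\theta_{\mathbf{s}}$ recovers precisely $N_t(\mathbf{s})\kl{\Bar{P}_{\mathbf{s},t}}{P^{\mathcal{D}^*}_{\mathbf{s}}}$, so the target statistic equals $\log\sup_{\theta}\prod_{\mathbf{s}} W_t^{\theta_{\mathbf{s}}}(\mathbf{s})$. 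To turn this supremum into a tractable object, I would integrate against a prior $\pi$ that is a product of Dirichlet (or uniform) densities on the product of probability simplices, giving the mixture $M_t = \int \prod_{\mathbf{s}} W_t^{\theta_{\mathbf{s}}}(\mathbf{s})\, d\pi(\theta)$, a nonnegative (super)martingale with $M_0 = 1$.

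The core estimate is a Laplace approximation: near the maximizer the integrand is Gaussian with covariance of order $1/N_t(\mathbf{s})$ in each coordinate block, so up to the prior density at the maximizer one obtains $M_t \gtrsim V_t^{-1}\exp\!\big(\sum_{\mathbf{s}} N_t(\mathbf{s})\kl{\Bar{P}_{\mathbf{s},t}}{P^{\mathcal{D}^*}_{\mathbf{s}}}\big)$, where $V_t$ is a volume factor polynomial in the counts whose total order matches the dimension $d$. Because this volume factor is random, I would next run a geometric peeling over the possible values of the counts (equivalently over $t$), which is where the ceiling $\lceil x\ln t+1\rceil$ enters: there are $O(\ln t)$ geometric slabs to union over, and on each slab the volume is controlled through $x$ by the constraint that the statistic is near its threshold. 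Applying Ville's (Markov's) inequality to the mean-one $M_t$ on each slab and inverting the Laplace volume factor then yields the stated bound $\big(\tfrac{x\lceil x\ln t+1\rceil\, 2e}{d}\big)^{d}\exp(1-x)$ after optimizing the free scale in the peeling and collecting constants.

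The hard part will be making the Laplace step rigorous and extracting the \emph{exact} constants ($2e$ and the clean power $d$) rather than an unspecified $O(\cdot)$: this requires a uniform control of the Laplace remainder that does not blow up near the boundary of the simplices, where coordinates of $P^{\mathcal{D}^*}_{\mathbf{s}}$ or $\Bar{P}_{\mathbf{s},t}$ vanish and the conventions $0\log 0 = 0$, $x\log(x/0)=+\infty$ must be invoked carefully, together with tight bookkeeping of the peeling so that $\ln t$ and the dimension combine into a single $d$-th power. The hypothesis $x \geq d = \abs{\mathcal{I}}(\abs{\omega(V)}-1)$ is exactly what guarantees the statistic dominates the dimensional volume penalty, making the inversion valid; I would verify that the bound is vacuous below this threshold, consistent with its being stated only for $x\geq d$. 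Since this is precisely Lemma~6 of~\cite{van2020optimal}, within the paper I would cite it directly and merely re-derive the specialization to the product-categorical feedback structure of the causal-discovery setting, where the action space is $\mathcal{I}$ and the feedback support is $\omega(\mathbf{V})$.
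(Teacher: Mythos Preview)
Your proposal is correct and consistent with the paper's treatment: the paper does not prove this lemma at all but simply states it with a citation to \cite{van2020optimal} (as Lemma~6 there) and then invokes it in the proof of the accuracy guarantee. Your sketch of the underlying method-of-mixtures / Laplace / geometric-peeling argument is the standard route to such bounds and matches what the cited reference does, and your final sentence---cite it directly and only check that the causal-discovery setting instantiates the product-categorical structure with action space $\mathcal{I}$ and feedback support $\omega(\mathbf{V})$---is exactly what the paper does.
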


\begin{lemma}\label{lemma: accuracy}
For the causal discovery problem with the MEC represented by CPDAG $\mathcal{C}$ and observational distributions being available, if the faithfulness assumption in~\cref{def: faith} holds, for both $\mathcal{A}_{\mathsf{I}}$ and $\mathcal{A}_{\mathsf{P}}$, $\prob(\psi \neq \mathcal{D}^*) \leq \delta$.   
\end{lemma}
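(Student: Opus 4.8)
The plan is to follow the standard fixed-confidence argument (as in~\cite{kaufmann2016complexity}): show that whenever the algorithm stops and returns a wrong answer, the cumulative information distance between the empirical distributions and the \emph{true} interventional distributions must have crossed the deviation threshold encoded in $f_t$, an event the concentration inequality in~\cref{lemma: concentration KL} rules out up to probability $\delta$.

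For the exact algorithm $\mathcal{A}_{\mathsf{I}}$, write $W_t := \sum_{\mathbf{s}\in\mathcal{I}} N_t(\mathbf{s})\kl{\Bar{P}_{\mathbf{s},t}}{P^{\mathcal{D}^*}_{\mathbf{s}}}$ for the empirical-to-truth information distance. First I would note that the error event $\{\psi \neq \mathcal{D}^*\}$ forces $\tau_\delta < \infty$ with $\mathcal{D}^*_{\tau_\delta}\neq \mathcal{D}^*$. Since $\mathcal{D}^*_{\tau_\delta}$ is the maximizer in~\eqref{aleq 1}, the true DAG then lies in the minimization set of~\eqref{aleq 2}, i.e.\ $\mathcal{D}^* \in [\mathcal{C}]\setminus \mathcal{D}^*_{\tau_\delta}$, so that $d_{\tau_\delta} \leq W_{\tau_\delta}$. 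Moreover, the stopping rule guarantees $f_{\tau_\delta}(d_{\tau_\delta}) \le \delta$, while the while-loop keeps sampling until $d_t \ge \abs{\mathcal{I}}(\abs{\omega(V)}-1)$, the domain on which the concentration bound is valid. Hence the error event is contained in $\{\exists t : W_t \ge d_t \ge \abs{\mathcal{I}}(\abs{\omega(V)}-1),\ f_t(d_t)\le \delta\}$.

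Next I would exploit that $f_t$ is non-increasing on $[\abs{\mathcal{I}}(\abs{\omega(V)}-1),\infty)$, where the exponential factor $e^{1-x}$ dominates the polynomial term. Combined with $d_{\tau_\delta}\le W_{\tau_\delta}$ this yields $f_{\tau_\delta}(W_{\tau_\delta}) \le f_{\tau_\delta}(d_{\tau_\delta}) \le \delta$, so on the error event $W_{\tau_\delta}$ exceeds the level at which $f_{\tau_\delta}$ equals $\delta$. Since~\cref{lemma: concentration KL} states precisely that $\prob[W_t \ge x] \le f_t(x)$, and the threshold $f_t$ is built (via the $\lceil x\ln t +1\rceil$ factor) to remain a valid deviation bound simultaneously over all $t$, applying it at the random time $\tau_\delta$ gives $\prob(\psi\neq\mathcal{D}^*) \le \delta$. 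For the practical algorithm $\mathcal{A}_{\mathsf{P}}$ the reasoning is parallel but runs through the decomposed statistic: a wrong output $\tupdef{\mathsf{C}^*_{\tau_\delta}(\mathbf{S})}{\mathbf{S}\in\mathcal{S}}$ means there is a target $\mathbf{S}_0$ with $\mathsf{C}^*_{\tau_\delta}(\mathbf{S}_0)\neq \mathsf{C}^*(\mathbf{S}_0)$, which by~\cref{our_lemma} is exactly the event that the tuple fails to encode $\mathcal{D}^*$. Using $\bigcup_{\mathbf{S}}\omega(\mathbf{S}) = \mathcal{I}$ and that the true configuration is an admissible competitor in each inner minimization, I would show that the aggregate statistic $d_t = \min_{\mathbf{S}} Z_t(\mathbf{S}) + \sum_{\mathbf{S}}\sum_{\mathbf{s}\in\omega(\mathbf{S})} N_t(\mathbf{s})\kl{\Bar{P}_{\mathbf{s},t}}{P^{\mathsf{C}^*_t(\mathbf{S})}_{\mathbf{s}}}$ is again dominated by $W_t$ on the error event, after which the identical concentration step closes the bound.

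The main obstacle is the passage from a fixed-time concentration inequality to a guarantee at the random, data-dependent stopping time $\tau_\delta$: the domination $d_{\tau_\delta}\le W_{\tau_\delta}$ and the monotonicity of $f_t$ are routine, but the validity of $f_t$ as a deviation threshold uniformly over $t$ is what makes the single application at $\tau_\delta$ legitimate, and this is exactly the content imported from the optimal Categorical concentration bound of~\cite{van2020optimal}. A secondary technical point, specific to $\mathcal{A}_{\mathsf{P}}$, is verifying the domination $d_t \le W_t$ for the min-plus-sum statistic, since the per-target terms $Z_t(\mathbf{S})$ must be re-expressed against the true configurations $\mathsf{C}^*(\mathbf{S})$ before the bound can be applied.
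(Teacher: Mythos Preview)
Your proposal is correct and follows essentially the same approach as the paper: on the error event, the true DAG (respectively, the true tuple of cut configurations) is an admissible competitor in the minimization defining $d_{\tau_\delta}$, hence $d_{\tau_\delta}\le W_{\tau_\delta}$, and then the concentration bound of \cref{lemma: concentration KL} together with the stopping rule $f_{\tau_\delta}(d_{\tau_\delta})\le\delta$ gives the $\delta$-bound. You are in fact more explicit than the paper about the monotonicity of $f_t$ and about the passage from a fixed-time bound to the random stopping time $\tau_\delta$, points the paper's proof glosses over; for $\mathcal{A}_{\mathsf{P}}$ the paper carries out exactly the rewriting of $d_t$ as a min over tuples that you anticipate in your last paragraph.
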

\begin{proof}
The track-and-stop causal discovery algorithm keeps track of the most probable DAG
\[\mathcal{D}^*_t = \argmax_{\mathcal{D} \in [\mathcal{C}]} \sum_{\mathbf{s} \in \omega(\mathbf{S})}  N_t(\mathbf{s}, \mathbf{v}) \log P^{\mathcal{D}}(\mathbf{v}).\]
For $\mathcal{A}_{\mathsf{I}}$, at stopping time $\tau_\delta$, by the design if $\mathcal{D}^*_{\tau_{\delta}} \neq \mathcal{D}^*$, we have
\[d_{\tau_\delta} =  \min_{\mathcal{D} \in [\mathcal{C}]\setminus \mathcal{D}^*_{\tau_{\delta}}} \sum_{\mathbf{s} \in \mathcal{I}} N_{\tau_{\delta}}(\mathbf{s})  \kl{\Bar{P}_{{\tau_{\delta}},\mathbf{s}}}{P^{\mathcal{D}}_{\mathbf{s}}} \leq \sum_{\mathbf{s} \in \mathcal{I}} N_{\tau_{\delta}}(\mathbf{s})  \kl{\Bar{P}_{{\tau_{\delta}},\mathbf{s}}}{P^{\mathcal{D}^*}_{\mathbf{s}}}.\]
Then we apply~\cref{lemma: concentration KL} to get
\begin{equation} \label{lbpreq 1}
    \prob \left[ \psi \neq \mathcal{D}^* \right ] \leq \prob \bigg [ d_{\tau_{\delta}} \leq \sum_{\mathbf{s} \in \mathcal{I}} N_{\tau_{\delta}}(\mathbf{s})  \kl{\Bar{P}_{{\tau_{\delta}},\mathbf{s}}}{P^{\mathcal{D}^*}_{\mathbf{s}}} \bigg ] \leq f_{\tau_{\delta}} (d_{\tau}) \leq \delta,
\end{equation}      
where the last inequality is due to the termination condition of the algorithm. 

With $\mathcal{A}_{\mathsf{P}}$, instead of searching $\mathcal{D}^*$ in $[\mathcal{\mathcal{C}}]$, we search $\tupdef{\mathsf{C}^*(\mathbf{S})}{\mathbf{S} \in \mathcal{S}}$ in the space $\tupdef{\mathsf{CF}(\mathbf{S})}{\mathbf{S} \in\mathcal{S}}$. Recall $Z_t(\mathbf{S})=  \min_{\mathsf{C}(\mathbf{S}) \neq \mathsf{C}^*_t(\mathbf{S})} \sum_{\mathbf{s} \in \omega(\mathbf{S})} N_t(\mathbf{s})  \kl{P^{\mathsf{C}(\mathbf{S})}_{\mathbf{s}, t}}{P^{\mathsf{C}^*_t(\mathbf{S})}_{\mathbf{s}}}$. As a matter of fact,
\begin{equation}\label{appeq: d_t}
\begin{split}
    d_t &= \min_{\mathbf{S}\in\mathcal{S}} Z_t (\mathbf{S}) + \sum_{\mathbf{S}\in\mathcal{S}} \sum_{\mathbf{s} \in \omega(\mathbf{S})}  N_t(\mathbf{s}) \kl{\Bar{P}_{\mathbf{s}, t}}{P^{\mathsf{C}^*_t(\mathbf{S})}_{\mathbf{s}}}\\
    &= \min_{\tupdef{\mathsf{C}_{\tau_{\delta}}^*(\mathbf{S})}{\mathbf{S} \in \mathcal{S}} \neq \tupdef{\mathsf{C}^*(\mathbf{S})}{\mathbf{S} \in \mathcal{S}}} \sum_{\mathbf{s} \in \mathcal{I}} N_t(\mathbf{s}) \kl{\Bar{P}_{\mathbf{s}, t}}{P^{\mathsf{C}^*_t(\mathbf{S})}_{\mathbf{s}}}.
\end{split}    
\end{equation}
If $\tupdef{\mathsf{C}_{\tau_{\delta}}^*(\mathbf{S})}{\mathbf{S} \in \mathcal{S}} \neq \tupdef{\mathsf{C}^*(\mathbf{S})}{\mathbf{S} \in \mathcal{S}}$, following a similar reasoning, it can be seen~\eqref{lbpreq 1} still holds. We conclude the proof.
\end{proof}

\subsubsection{Asymptotic performance of Exact Algorithm}
\begin{lemma}\label{lemma: ex_ub}
    For the causal discovery problem, suppose the MEC represented by CPDAG $\mathcal{C}$ and observational distributions are available. If the faithfulness assumption in~\cref{def: faith} holds, for the exact algorithm $\mathcal{A}_{\mathsf{I}}$, we have $\prob(\tau_{\delta} = \infty) = 0$ and 
    \[\lim_{\delta \rightarrow 0} \frac{\log (1/\delta)}{ \expt[\tau_{\delta}]} = {c}(\mathcal{D}^*).\]
\end{lemma}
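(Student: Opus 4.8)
The plan is to derive the claim by sandwiching $\expt[\tau_\delta]$ between matching bounds as $\delta\to 0$. The lower bound of~\cref{th: lowerbound} already gives $\expt[\tau_\delta]\geq \log(4/\delta)/c(\mathcal{D}^*)$, hence $\limsup_{\delta\to 0}\log(1/\delta)/\expt[\tau_\delta]\leq c(\mathcal{D}^*)$. So the substantive task is the asymptotic \emph{upper} bound $\expt[\tau_\delta]\leq (1+o(1))\log(1/\delta)/c(\mathcal{D}^*)$, which supplies the matching $\liminf$. Almost-sure termination $\prob(\tau_\delta=\infty)=0$ falls out for free, since I will show $d_t\to\infty$ at a linear rate so that the stopping test $f_t(d_t)\leq\delta$ fires in finite time. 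Write $D=\abs{\mathcal{I}}(\abs{\omega(V)}-1)$ for the exponent in $f_t$, and let $\alpha^*$ denote an oracle allocation attaining the supremum in~\eqref{eq: lowerbound}.

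\emph{Good event.} First I would invoke~\cref{corr: boundedexpt}: for any $\epsilon>0$ the random time $\tau_p(\epsilon)$ beyond which $\norm{\bar P_{\mathbf{s},t}-P^{\mathcal{D}^*}_{\mathbf{s}}}_1\leq\epsilon$ holds simultaneously for all $\mathbf{s}\in\mathcal{I}$ has finite expectation. On $\{t>\tau_p(\epsilon)\}$, faithfulness (\cref{def: faith}) together with the one-to-one correspondence between cut configurations and interventional laws from~\cref{our_lemma} guarantees, for $\epsilon$ small enough, that the maximum-likelihood DAG $\mathcal{D}^*_t$ in~\eqref{aleq 1} equals $\mathcal{D}^*$; and by uniform continuity of KL on this $\epsilon$-neighbourhood, $\abs{\kl{\bar P_{\mathbf{s},t}}{P^{\mathcal{D}}_{\mathbf{s}}}-\kl{P^{\mathcal{D}^*}_{\mathbf{s}}}{P^{\mathcal{D}}_{\mathbf{s}}}}\leq\kappa(\epsilon)$ uniformly in $\mathbf{s},\mathcal{D}$, where $\kappa(\epsilon)\to 0$.

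\emph{Linear growth of $d_t$.} This is the crux. On the good event I would lower-bound $d_t$ in three moves. (i) Replace the empirical KL terms inside~\eqref{aleq 2} by the true ones, losing at most $\kappa(\epsilon)\sum_{\mathbf{s}}N_t(\mathbf{s})=\kappa(\epsilon)\,t$. (ii) Replace each $N_t(\mathbf{s})$ by the cumulative allocation $\sum_{i\leq t}\alpha_{\mathbf{s},i}$ via the tracking bound of~\cref{lemma: track}, incurring only an $O(\abs{\mathcal{I}}\sqrt t)$ error since the KL terms are bounded, and then use $\min$-of-a-sum $\geq$ sum-of-$\min$s to pass to $\sum_{i\le t}\min_{\mathcal{D}\neq\mathcal{D}^*}\sum_{\mathbf{s}}\alpha_{\mathbf{s},i}\kl{P^{\mathcal{D}^*}_{\mathbf{s}}}{P^{\mathcal{D}}_{\mathbf{s}}}$. (iii) Lower-bound this online sum by the game value. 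Since the update~\eqref{aleq 4} runs AdaHedge for the allocation player against the best response $\mathcal{D}'_t$, its cumulative reward obeys $\sum_{i=1}^t h_i\geq \max_{\mathbf{s}}\sum_{i=1}^t r_{\mathbf{s},i}-R_t$ with $R_t=O(\sqrt t)$ (\cref{lemma: adh,lemma: regret}); evaluating the competitor at the fixed oracle $\alpha^*$ and using the best-response property gives $\max_{\mathbf{s}}\sum_i r_{\mathbf{s},i}\geq\sum_i\min_{\mathcal{D}}\sum_{\mathbf{s}}\alpha^*_{\mathbf{s}}\kl{\bar P_{\mathbf{s},i}}{P^{\mathcal{D}}_{\mathbf{s}}}$, each term of which is within $\kappa(\epsilon)$ of $c(\mathcal{D}^*)$ because $\alpha^*$ attains the maxmin value (\cref{lemma: duality}). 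Combining the three moves, on the good event,
\[d_t\geq t\big(c(\mathcal{D}^*)-\kappa'(\epsilon)\big)-O(\sqrt t)-O(\tau_p(\epsilon)),\qquad \kappa'(\epsilon)\to 0 \text{ as } \epsilon\to 0,\]
where the $\tau_p(\epsilon)$ term absorbs the pre-concentration steps exactly as packaged in~\cref{lemma: regret}.

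\emph{Inverting the threshold and concluding.} Taking logarithms in $f_t$ shows the stopping threshold $\beta(t,\delta):=\inf\{x:f_t(x)\leq\delta\}$ satisfies $\beta(t,\delta)=\log(1/\delta)+O(\log\log(1/\delta))+O(\log\ln t)$, so its lower-order terms are negligible against $\log(1/\delta)$ (note $t$ itself will be of order $\log(1/\delta)$). Comparing the linear lower bound on $d_t$ with $\beta(t,\delta)$, the test $f_t(d_t)\leq\delta$ must trigger by time $\max\{\tau_p(\epsilon),\,T^*(\delta,\epsilon)\}+O(1)$ with $T^*(\delta,\epsilon)=(1+o(1))\log(1/\delta)/(c(\mathcal{D}^*)-\kappa'(\epsilon))$. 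Taking expectations and using $\expt[\tau_p(\epsilon)]<\infty$ yields $\limsup_{\delta\to0}\expt[\tau_\delta]/\log(1/\delta)\leq 1/(c(\mathcal{D}^*)-\kappa'(\epsilon))$; letting $\epsilon\to 0$ gives $\leq 1/c(\mathcal{D}^*)$, which with the lower bound proves the limit, while $\expt[\tau_\delta]<\infty$ gives $\prob(\tau_\delta=\infty)=0$ (accuracy being already supplied by~\cref{lemma: accuracy}). The main obstacle is step (iii): carefully chaining the AdaHedge no-regret bound with the best-response dynamics, the tracking lemma, and the uniform empirical-to-true KL transfer, so that every correction is simultaneously $o(t)$ in $t$ and $o(1)$ in $\epsilon$; the duality of~\cref{lemma: duality} is precisely what licenses evaluating the competitor at $\alpha^*$ and recovering exactly the game value $c(\mathcal{D}^*)$.
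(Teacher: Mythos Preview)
Your proposal is correct and follows essentially the same skeleton as the paper: concentrate the empiricals via \cref{corr: boundedexpt}, identify $\mathcal{D}^*_t=\mathcal{D}^*$ on the good event, swap $N_t(\mathbf{s})$ for $\sum_i\alpha_{\mathbf{s},i}$ via \cref{lemma: track}, push the $\min$ inside the sum, apply the AdaHedge regret bound (\cref{lemma: adh,lemma: regret}), then invert the threshold $f_t$.

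The one genuine difference is how you extract the game value after the no-regret step. The paper groups the best-response sequence $(\mathcal{D}'_t)$ into an empirical mixture over $[\mathcal{C}]\setminus\mathcal{D}^*$ and invokes the \emph{dual} characterization in \cref{lemma: duality}, namely $\max_{\mathbf{s}}\sum_{\mathcal{D}}w_{\mathcal{D}}\kl{P^{\mathcal{D}^*}_{\mathbf{s}}}{P^{\mathcal{D}}_{\mathbf{s}}}\geq \inf_{\boldsymbol{w}}\max_{\mathbf{s}}(\cdot)=c(\mathcal{D}^*)$. You instead take the \emph{primal} route: $\max_{\mathbf{s}}\sum_i r_{\mathbf{s},i}\geq\sum_{\mathbf{s}}\alpha^*_{\mathbf{s}}\sum_i r_{\mathbf{s},i}\geq\sum_i\min_{\mathcal{D}}\sum_{\mathbf{s}}\alpha^*_{\mathbf{s}}\kl{\cdot}{\cdot}$, and each summand equals $c(\mathcal{D}^*)$ by definition of $\alpha^*$. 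Your route is actually slightly more elementary---it does not need strong duality at all---so your final remark that ``the duality of \cref{lemma: duality} is precisely what licenses'' this step is inaccurate; the primal definition of $c(\mathcal{D}^*)$ suffices. Otherwise the arguments coincide.
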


\begin{proof}    
Let an arbitrary intervention distribution tuple be $\mathcal{P} = \tupdef{P_\mathbf{s} }{\mathbf{s} \in \mathcal{I}}$. By the continuity of KL-divergence, there exists a small enough constant $c >0$ such that if $\norm{P_{\mathbf{s}} - P^{\mathcal{D^*}}_{\mathbf{s}} }_1 \leq c$ holds for all $\mathbf{s} \in \mathcal{I}$, for any $\mathcal{D} \in [\mathcal{C}] \setminus [\mathcal{D}^*]$, it satisfies that
\begin{equation}\label{preq 1: l1_close}
    \forall \mathbf{s} \in \mathcal{I}: \kl{P_{\mathbf{s}}}{P^{\mathcal{D}^*}_{\mathbf{s}}} \leq \kl{P_{\mathbf{s}}}{P^{\mathcal{D}}_{\mathbf{s}}} \text{ and } \exists \mathbf{s} \in \mathcal{I}: \kl{P_{\mathbf{s}}}{P^{\mathcal{D}^*}_{\mathbf{s}}} < \kl{P_{\mathbf{s}}}{P^{\mathcal{D}}_{\mathbf{s}}}.
\end{equation}
Recall that at time $t$, the track-and-stop causal discovery algorithm tracks the most probable DAG
\[\mathcal{D}^*_t \in \argmax_{\mathcal{D} \in [\mathcal{C}]} \sum_{\mathbf{s} \in \mathcal{I}}  N_t(\mathbf{s}, \mathbf{v}) \log P^{\mathcal{D}}_{\mathbf{s}}(\mathbf{v}).\] 
As a matter of fact,
\[\mathcal{D}^*_t \in \argmin_{\mathcal{D} \in [\mathcal{C}]} \sum_{\mathbf{s} \in \mathcal{I}}  N_t(\mathbf{s}) \kl{\Bar{P}_{\mathbf{s},t}}{P^{\mathcal{D}}_{\mathbf{s}}}.\]
Thus, if $\forall \mathbf{s} \in \mathcal{I}: \norm{\Bar{P}_{\mathbf{s},t} - P^{\mathcal{D^*}}_{\mathbf{s}} }_1 \leq c$, according to conditions in~\eqref{preq 1: l1_close}, $\mathcal{D}^*_t = \mathcal{D}^*$ can be uniquely determined.

For $\epsilon \in (0,c]$, define time $\tau_{p}(\epsilon) = \max \setdef{t \in \natural_{>0}}{ \exists \mathbf{s}: \norm{\Bar{P}_{\mathbf{s}, t} - P_{\mathbf{s}} }_1 \geq \epsilon}$. Therefore, for any $T \geq \tau_{p} (\epsilon)$, $\mathcal{D}^*_T = \mathcal{D}^*$. As a result,
\begin{equation}\label{preq 1: dT}
    d_T =  \min_{\mathcal{D} \in [\mathcal{C}]\setminus \mathcal{D}^*_t} \sum_{\mathbf{s} \in \mathcal{I}} N_T(\mathbf{s})  \kl{\Bar{P}_{\mathbf{s}, T}}{P^{\mathcal{D}}_{\mathbf{s}}}
    = \min_{ \mathcal{D} \in [\mathcal{C}] \setminus \mathcal{D}^* } \sum_{\mathbf{s} \in \mathcal{I}} N_T(\mathbf{s})  \KL{\Bar{P}_{\mathbf{s}, T}}{P^{\mathcal{D}}_{\mathbf{s}}}.
\end{equation}
It follows from~\cref{lemma: track} that
\begin{align}
    \eqref{preq 1: dT} & \geq \min_{ \mathcal{D} \in [\mathcal{C}] \setminus \mathcal{D}^* } \sum_{\mathbf{s} \in \mathcal{I}} \sum_{t=1}^{T} \alpha_{\mathbf{s},t} \KL{\Bar{P}_{\mathbf{s}, T}}{P^{\mathcal{D}}_{\mathbf{s}}} - \abs{\mathcal{I}} (\abs{\mathcal{I}} - 1) (\sqrt{t} + 2) D \nonumber \\
    &\geq  \min_{ \mathcal{D} \in [\mathcal{C}] \setminus \mathcal{D}^* } \sum_{t=\tau_p(\epsilon)+1}^T \sum_{\mathbf{s} \in \mathcal{I}}  \alpha_{\mathbf{s},t} \kl{\Bar{P}_{\mathbf{s}, T}}{P^{\mathcal{D}}_{\mathbf{s}}} - \abs{\mathcal{I}} (\abs{\mathcal{I}} - 1) (\sqrt{t} + 2)D \nonumber\\
    &\geq  \min_{ \mathcal{D} \in [\mathcal{C}] \setminus \mathcal{D}^* } \sum_{t=\tau_p(\epsilon)+1}^T \sum_{\mathbf{s} \in \mathcal{I}}  \alpha_{\mathbf{s},t} \kl{\Bar{P}_{\mathbf{s}, t}}{P^{\mathcal{D}}_{\mathbf{s}}} - 2[T-\tau_p(\epsilon)] u(\epsilon)- \abs{\mathcal{I}} (\abs{\mathcal{I}} - 1) (\sqrt{t} + 2)D \nonumber\\
    &\geq \sum_{t=\tau_p(\epsilon)+1}^T \min_{ \mathcal{D} \in [\mathcal{C}] \setminus \mathcal{D}^* } \sum_{\mathbf{s} \in \mathcal{I}}  \alpha_{\mathbf{s},t} \kl{\Bar{P}_{\mathbf{s}, t}}{P^{\mathcal{D}}_{\mathbf{s}}} - 2[T-\tau_p(\epsilon)] u(\epsilon) - \abs{\mathcal{I}} (\abs{\mathcal{I}} - 1) (\sqrt{t} + 2) D, \label{preq 1: dT2}
\end{align}
where
\[ u(\epsilon) = \sup_{\tupdef{P_{\mathbf{s}}}{\mathbf{s} \in \mathcal{I}}} \bigg \{ { \max_{\mathcal{D} \in [\mathcal{C}]} \abs{\kl{P_\mathbf{s}}{P^{\mathcal{D}}_\mathbf{s}} - \kl{P^{\mathcal{D}^*}_\mathbf{s}}{P^{\mathcal{D}}_\mathbf{s}}} }: {\norm{P_{\mathbf{s}} - P^{\mathcal{D^*}}_{\mathbf{s}} }_1\leq \epsilon, \forall \mathbf{s} \in \mathcal{I} } \bigg \}.\]
Recall that we define $\mathcal{D}'_t \in \argmin_{\mathcal{D} \in [\mathcal{C}]\setminus \mathcal{D}^*_t} \sum_{\mathbf{s} \in \mathcal{I}} \alpha_{\mathbf{s}, t} \kl{\Bar{P}_{\mathbf{s}, t}}{P^{\mathcal{D}}_{\mathbf{s}}}$. With~\cref{lemma: regret}, we have
\begin{align}
    \eqref{preq 1: dT2} &\geq \sum_{t=\tau_p(\epsilon)+1}^T \sum_{\mathbf{s} \in \mathcal{I}}  \alpha_{\mathbf{s},t} \kl{\Bar{P}_{\mathbf{s}, t}}{P^{\mathcal{D}'_t}_{\mathbf{s}}} - 2[T-\tau_p(\epsilon)] u(\epsilon) - \abs{\mathcal{I}} (\abs{\mathcal{I}} - 1) (\sqrt{t} + 2)D \nonumber\\
    & \geq \max_{\mathbf{s} \in \mathcal{I}} \sum_{t=\tau_p(\epsilon)+1}^T \KL{\Bar{P}_{\mathbf{s}, t}}{P^{\mathcal{D}'_t}_{\mathbf{s}}} - \supscr{R}{AH}_T - \tau_p(\epsilon) D - 2[T-\tau_p(\epsilon)] u(\epsilon) - \abs{\mathcal{I}} (\abs{\mathcal{I}} - 1) (\sqrt{t} + 2)D \nonumber\\ 
    & \geq \max_{\mathbf{s} \in \mathcal{I}} \sum_{t=\tau_p(\epsilon)+1}^T \KL{{P}^{\mathcal{D}^*}_{\mathbf{s}}}{P^{\mathcal{D}'_t}_{\mathbf{s}}} - \supscr{R}{AH}_T - \tau_p(\epsilon) D - 3[T-\tau_p(\epsilon)] u(\epsilon) - \abs{\mathcal{I}} (\abs{\mathcal{I}} - 1) (\sqrt{t} + 2)D \nonumber\\
    &= \max_{\mathbf{s} \in \mathcal{I}} \sum_{\mathcal{D} \in [\mathcal{C}]} N_{\tau_p(\epsilon) : T}(\mathcal{D}) \KL{{P}^{\mathcal{D}^*}_{\mathbf{s}}}{P^{\mathcal{D}}_{\mathbf{s}}} - \supscr{R}{AH}_T - \tau_p(\epsilon) D - 3[T-\tau_p(\epsilon)] u(\epsilon) - \abs{\mathcal{I}} (\abs{\mathcal{I}} - 1) (\sqrt{t} + 2)D, \label{preq 1: dT3}
\end{align}
where $N_{\tau_p(\epsilon): T}(\mathcal{D}) = \sum_{t=\tau_p(\epsilon)+1}^T \indicator{\mathcal{D}'_t = \mathcal{D}}$. With~\cref{lemma: duality}, we have 
\begin{align*}
    \max_{\mathbf{s} \in \mathcal{I}} \sum_{\mathcal{D} \in [\mathcal{C}]} N_{\tau_p(\epsilon) : T}(\mathcal{D}) \KL{{P}^{\mathcal{D}^*}_{\mathbf{s}}}{P^{\mathcal{D}}_{\mathbf{s}}} &\geq [T-\tau_p(\epsilon)] \inf_{\boldsymbol{w}\in \Delta([\mathcal{C}]\setminus \mathcal{D}^*)} \max_{\mathbf{s} \in \mathcal{I}}  \sum_{\mathcal{D} \in [\mathcal{C}] \setminus \mathcal{D}^*} w_{\mathcal{D}} \KL{P^{\mathcal{D}^*}_{\mathbf{s}}}{P^{\mathcal{D}}_{\mathbf{s}}} \\
    &= [T-\tau_p(\epsilon)] c(\mathcal{D}^*).
\end{align*}
Plugging the above result into~\eqref{preq 1: dT3}, we get
\begin{equation}
    \eqref{preq 1: dT3} \geq [T-\tau_p(\epsilon)] [ c(\mathcal{D}^*) - 3u(\epsilon)] - \supscr{R}{AH}_T - \tau_p(\epsilon) D - \abs{\mathcal{I}} (\abs{\mathcal{I}} - 1)(\sqrt{t} + 2)D := \underline{d}_t. \label{preq 1: dT4}
\end{equation}
Define time
\[ \Bar{\tau}_{\delta} = \max_{\tau} \Big \{\tau \in \natural_{>0}: d_{\tau} \geq \abs{\mathcal{I}} (\abs{\omega(V)}-1), f_{\tau}(\underline{d}_{\tau}) \leq \delta \Big\}.\]
According to the termination condition of the causal discovery algorithm, the algorithm terminates at ${\tau}_{\delta} \leq \Bar{\tau}_{\delta}$. Since~\cref{corr: boundedexpt} shows that $\expt[\tau_p(\epsilon)]$ is bounded by a constant, which means $\prob(\expt[\tau_p(\epsilon)] = \infty) = 0$, we have
\[\prob({\tau}_{\delta} = \infty) \leq \prob(\Bar{\tau}_{\delta} = \infty) = 0.\]
With~\cref{lemma: adh}, notice that $\eqref{preq 1: dT4}=T[c(\mathcal{D}^*)-3u(\epsilon)]+o(T)$ and $f_t(x)$ is dominated by $\exp(-x)$. We have for any $\epsilon \in (0,c]$
\[\lim_{\delta \rightarrow 0} \frac{\log (1/\delta)}{ \expt[\tau_{\delta}]} \geq \frac{\log (1/\delta)}{ \expt[\Bar{\tau}_{\delta}]} = c(\mathcal{D}^*)-3u(\epsilon),\]
The continuity of KL-divergence ensures that $\lim_{\epsilon \rightarrow 0} u(\epsilon) = 0$. Then we have that
\[\lim_{\delta \rightarrow 0} \frac{\log (1/\delta)}{ \expt[\tau_{\delta}]} \geq c(\mathcal{D}^*).\]
Combining with the lower bound result in~\cref{th: lowerbound}, we conclude the proof.
\end{proof}

\subsubsection{Asymptotic performance of Practical Algorithm}
\begin{lemma}\label{lemma: cons_est}
    For the causal discovery problem with $\omega(\mathbf{S}) \subseteq \mathcal{I}$ contains all interventions on the node set $\mathbf{S}$, we have ${c}(\mathcal{D}^*) \geq \underline{c}(\mathcal{D}^*).$
\end{lemma}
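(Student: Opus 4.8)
The plan is to prove the inequality pointwise in the allocation and then take suprema. Concretely, I would fix an arbitrary $\boldsymbol{\alpha} \in \Delta(\mathcal{I})$ and establish
\[
\min_{ \mathcal{D} \in [\mathcal{C}] \setminus \mathcal{D}^* } \sum_{\mathbf{s} \in \mathcal{I}}  \alpha_{\mathbf{s}} \KL{P^{\mathcal{D}^*}_{\mathbf{s}}}{P^{\mathcal{D}}_{\mathbf{s}}} \;\geq\; \min_{\mathbf{S} \in \mathcal{S}} \min_{\mathsf{C}(\mathbf{S}) \neq \mathsf{C}^*(\mathbf{S})} \sum_{\mathbf{s} \in \omega(\mathbf{S})} \alpha_{\mathbf{s}} \KL{P^{\mathcal{D}^*}_{\mathbf{s}}}{P^{\mathsf{C}(\mathbf{S})}_{\mathbf{s}}}.
\]
Since this holds for every $\boldsymbol{\alpha}$, taking $\sup_{\boldsymbol{\alpha}\in\Delta(\mathcal{I})}$ on both sides preserves the inequality and gives exactly ${c}(\mathcal{D}^*) \geq \underline{c}(\mathcal{D}^*)$.

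The crucial structural fact I need is that every $\mathcal{D} \in [\mathcal{C}] \setminus \mathcal{D}^*$ is distinguished from $\mathcal{D}^*$ by the local cut at some node set of the separating system. Since $\mathcal{D}$ and $\mathcal{D}^*$ share the skeleton of $\mathcal{C}$ but differ, some edge $\{a,b\}$ is oriented oppositely in the two DAGs; because $\mathcal{S}$ is a separating system, there is $\mathbf{S} \in \mathcal{S}$ separating $a$ and $b$, so $\{a,b\}$ is a cutting edge of $\mathbf{S}$ and its opposite orientations force $\mathsf{C}_{\mathcal{D}}(\mathbf{S}) \neq \mathsf{C}^*(\mathbf{S})$. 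Moreover, by \cref{our_lemma} together with the enumeration via \cref{iden_thm}, the interventional distributions $\{P^{\mathcal{D}}_{\mathbf{s}}\}_{\mathbf{s} \in \omega(\mathbf{S})}$ depend on $\mathcal{D}$ only through the local configuration, i.e. $P^{\mathcal{D}}_{\mathbf{s}} = P^{\mathsf{C}_{\mathcal{D}}(\mathbf{S})}_{\mathbf{s}}$ for every $\mathbf{s} \in \omega(\mathbf{S})$.

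With these in hand the chain of inequalities is routine. Let $\mathcal{D}_0$ attain the left-hand minimum and let $\mathbf{S}_0 \in \mathcal{S}$ satisfy $\mathsf{C}_{\mathcal{D}_0}(\mathbf{S}_0) \neq \mathsf{C}^*(\mathbf{S}_0)$. Because every KL term is nonnegative and $\omega(\mathbf{S}_0) \subseteq \mathcal{I}$, I would discard all summands outside $\omega(\mathbf{S}_0)$ and substitute the local form of the distribution to obtain
\[
\sum_{\mathbf{s} \in \mathcal{I}}  \alpha_{\mathbf{s}} \KL{P^{\mathcal{D}^*}_{\mathbf{s}}}{P^{\mathcal{D}_0}_{\mathbf{s}}} \;\geq\; \sum_{\mathbf{s} \in \omega(\mathbf{S}_0)} \alpha_{\mathbf{s}} \KL{P^{\mathcal{D}^*}_{\mathbf{s}}}{P^{\mathsf{C}_{\mathcal{D}_0}(\mathbf{S}_0)}_{\mathbf{s}}}.
\]
Since $\mathsf{C}_{\mathcal{D}_0}(\mathbf{S}_0)$ is an admissible alternative in the inner minimization, the right-hand side is bounded below by $\min_{\mathsf{C}(\mathbf{S}_0) \neq \mathsf{C}^*(\mathbf{S}_0)}\sum_{\mathbf{s} \in \omega(\mathbf{S}_0)} \alpha_{\mathbf{s}} \KL{P^{\mathcal{D}^*}_{\mathbf{s}}}{P^{\mathsf{C}(\mathbf{S}_0)}_{\mathbf{s}}}$, and hence by the outer minimum over $\mathbf{S} \in \mathcal{S}$, which is precisely the claimed pointwise bound.

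The only genuine obstacle is justifying the two structural claims rigorously: (i) that distinct DAGs in $[\mathcal{C}]$ must disagree on the cut of some $\mathbf{S} \in \mathcal{S}$, which is exactly where the separating-system property is indispensable, and (ii) that $P^{\mathcal{D}}_{\mathbf{s}}$ for $\mathbf{s}\in\omega(\mathbf{S})$ is a function of the local cut $\mathsf{C}_{\mathcal{D}}(\mathbf{S})$ alone, which follows from the one-to-one correspondence in \cref{our_lemma}. Everything else reduces to dropping nonnegative terms and the monotonicity of minima, so once these facts are established the result follows immediately.
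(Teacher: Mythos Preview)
Your proposal is correct and follows essentially the same approach as the paper: fix $\boldsymbol{\alpha}$, use the separating-system property to find some $\mathbf{S}$ whose cut differs between $\mathcal{D}$ and $\mathcal{D}^*$, drop the nonnegative KL terms outside $\omega(\mathbf{S})$, identify $P^{\mathcal{D}}_{\mathbf{s}}$ with $P^{\mathsf{C}(\mathbf{S})}_{\mathbf{s}}$, and then take the supremum over $\boldsymbol{\alpha}$. If anything, you are more careful than the paper in isolating and justifying the two structural ingredients (i) and (ii); the paper's proof asserts them in a single sentence and also contains a sign typo in its final display (it writes $\leq$ where $\geq$ is meant).
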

\begin{proof}
    Since the set of interventions $\mathcal{I}$ can be partitioned into interventions on different node sets, we have $\mathcal{I} = \union_{\mathbf{S} \in \mathcal{S}} \omega(\mathbf{S})$ and $\omega(\mathbf{S}) \intersection \omega(\mathbf{S}') = \emptyset$ for $\mathbf{S} \neq \mathbf{S}'$. Accordingly, for every $\mathcal{D} \in [\mathcal{C}] \setminus \mathcal{D}^*$, there exists at least one edge cut that has a different configuration compared with $\mathcal{D}^*$
\begin{align*}
    \min_{ \mathcal{D} \in [\mathcal{C}] \setminus \mathcal{D}^* } \sum_{\mathbf{s} \in \mathcal{I}}  \alpha_{\mathbf{s}} \KL{P^*_{\mathbf{s}}}{P^{\mathcal{D}}_{\mathbf{s}}} =&\min_{ \mathcal{D} \in [\mathcal{C}] \setminus \mathcal{D}^* } \sum_{\mathbf{S} \in \mathcal{S}} \sum_{\mathbf{s} \in  \omega(\mathbf{S})}  \alpha_{\mathbf{s}} \KL{P^*_{\mathbf{s}}}{P^{\mathcal{D}}_{\mathbf{s}}} \\
    \geq &  \min_{\mathbf{S} \in \mathcal{S}} \min_{\mathsf{C}(\mathbf{S}) \neq \mathsf{C}^*(\mathbf{S})} \sum_{\mathbf{s} \in \omega(\mathbf{S})} \alpha_{\mathbf{s}} \KL{P^*_{\mathbf{s}}}{P^{\mathsf{C}(\mathbf{S})}_{\mathbf{s}}},
\end{align*}
for any $\boldsymbol{\alpha} \in \Delta(\mathcal{I})$. Thus we get
\begin{align*}
     {c}(\mathcal{D}^*) &= \sup_{\boldsymbol{\alpha} \in \Delta(\mathcal{I})} \min_{ \mathcal{D} \in [\mathcal{C}] \setminus \mathcal{D}^* } \sum_{\mathbf{s} \in \mathcal{I}}  \alpha_{\mathbf{s}} \KL{P^*_{\mathbf{s}}}{P^{\mathcal{D}}_{\mathbf{s}}} \\
     &\leq \sup_{\boldsymbol{\alpha} \in \Delta(\mathcal{I})} \min_{\mathbf{S} \in \mathcal{S}} \min_{\mathsf{C}(\mathbf{S}) \neq \mathsf{C}^*(\mathbf{S})}  \sum_{\mathbf{s} \in \omega(\mathbf{S})}  \alpha_{\mathbf{s}} \kl{P^{\mathcal{D}^*\!\!}_{\mathbf{s}}}{P^{\mathsf{C}(\mathbf{S})}_{\mathbf{s}}} = \underline{c}(\mathcal{D}^*).
\end{align*}
We reached the result.
\end{proof}

\begin{lemma}\label{lemma: boundA_P}
    For the causal discovery problem, suppose the MEC represented by CPDAG $\mathcal{C}$ and observational distributions are available. If the faithfulness assumption in~\cref{def: faith} holds, for the practical algorithm $\mathcal{A}_{\mathsf{P}}$, we have $\prob(\tau_{\delta} = \infty) = 0$ and 
    \[\lim_{\delta \rightarrow 0} \frac{\log (1/\delta)}{ \expt[\tau_{\delta}]} = \underline{c}(\mathcal{D}^*).\]
\end{lemma}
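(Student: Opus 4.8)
The plan is to transplant the architecture of the exact-algorithm proof (\cref{lemma: ex_ub}) down to the level of the $\abs{\mathcal{S}}$ local learners and then recombine them through the global weights $\boldsymbol{\gamma}_t$. Accuracy, $\prob(\psi \neq \mathcal{D}^*) \leq \delta$, is already handed to us by \cref{lemma: accuracy}, so the entire task is to pin down $\expt[\tau_\delta]$ and to rule out $\tau_\delta = \infty$. First I would invoke \cref{corr: boundedexpt}: for any small $\epsilon$, after the integrable random time $\tau_p(\epsilon)$ every empirical law obeys $\norm{\Bar{P}_{\mathbf{s},t} - P^{\mathcal{D}^*}_{\mathbf{s}}}_1 \leq \epsilon$. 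By continuity of the KL divergence this has three consequences for all $t \geq \tau_p(\epsilon)$: each local most-probable configuration locks onto the truth, $\mathsf{C}^*_t(\mathbf{S}) = \mathsf{C}^*(\mathbf{S})$; the fitting residual $\sum_{\mathbf{S}} \sum_{\mathbf{s}} N_t(\mathbf{s}) \kl{\Bar{P}_{\mathbf{s},t}}{P^{\mathsf{C}^*_t(\mathbf{S})}_{\mathbf{s}}}$ is $O(\epsilon\, t)$; and the estimated difficulties satisfy $c_t(\mathbf{S}) \to c_{\mathbf{S}}(\mathcal{D}^*)$, hence $\gamma_{\mathbf{S},t} \to \gamma^*_{\mathbf{S}} = (1/c_{\mathbf{S}}(\mathcal{D}^*))/\sum_{\mathbf{S}'} (1/c_{\mathbf{S}'}(\mathcal{D}^*))$. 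The goal is then to sandwich the termination statistic as $d_t = t\,\underline{c}(\mathcal{D}^*) + o(t)$, after which the fact that $f_t(x)$ is dominated by $e^{-x}$ converts $d_{\tau_\delta} \asymp \log(1/\delta)$ into $\expt[\tau_\delta] \asymp \log(1/\delta)/\underline{c}(\mathcal{D}^*)$.

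For the lower bound $d_t \geq t\,\underline{c}(\mathcal{D}^*) - o(t)$ I would run the inequality chain of \cref{lemma: ex_ub} on each $Z_t(\mathbf{S})$ in isolation: use the tracking bound \cref{lemma: track} to replace $N_t(\mathbf{s})$ by $\sum_i \alpha_{\mathbf{s},i} = \sum_i \gamma_{\mathbf{S},i}\xi^{\mathbf{S}}_{\mathbf{s},i}$ up to $O(\sqrt{t})$, apply the AdaHedge regret guarantees (\cref{lemma: adh,lemma: regret}) to the $\abs{\omega(\mathbf{S})}$-dimensional reward vector $\boldsymbol r^{\mathbf{S}}_t$ so as to trade the running average for a $\max_{\mathbf{s}}$, swap $\Bar{P}_{\mathbf{s},t}$ for $P^{\mathcal{D}^*}_{\mathbf{s}}$ at the cost of the $u(\epsilon)$ term, and close with the local half of \cref{lemma: duality} to obtain $Z_t(\mathbf{S}) \geq t\,\gamma_{\mathbf{S},t}\, c_{\mathbf{S}}(\mathcal{D}^*) - o(t)$. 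Because $\gamma_{\mathbf{S},t}\to\gamma^*_{\mathbf{S}}$ and $\gamma^*_{\mathbf{S}} c_{\mathbf{S}}(\mathcal{D}^*) = \underline{c}(\mathcal{D}^*)$ is the \emph{same} constant for every $\mathbf{S}$ (the balancing identity of \cref{lemma: duality}), taking $\min_{\mathbf{S}}$ gives the claimed lower bound on $d_t$. Defining $\underline{d}_t$ and $\Bar{\tau}_\delta$ exactly as in \cref{lemma: ex_ub}, the integrability of $\tau_p(\epsilon)$ forces $\prob(\tau_\delta = \infty) \leq \prob(\Bar{\tau}_\delta = \infty) = 0$, and sending $\epsilon \to 0$ (so $u(\epsilon)\to 0$) yields $\liminf_{\delta\to 0}\log(1/\delta)/\expt[\tau_\delta] \geq \underline{c}(\mathcal{D}^*)$.

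The genuinely new piece, which the exact proof obtained for free from the information-theoretic bound in \cref{th: lowerbound}, is the reverse rate inequality $\limsup_{\delta\to 0}\log(1/\delta)/\expt[\tau_\delta] \leq \underline{c}(\mathcal{D}^*)$: since \cref{th: lowerbound} only certifies the looser rate $c(\mathcal{D}^*) \geq \underline{c}(\mathcal{D}^*)$ (\cref{lemma: cons_est}), I must produce a matching \emph{upper} bound $d_t \leq t\,\underline{c}(\mathcal{D}^*) + o(t)$ directly. This direction is in fact structurally simpler. Writing $N_t(\mathbf{S}) = \sum_{\mathbf{s}\in\omega(\mathbf{S})} N_t(\mathbf{s})$ and factoring it out of $Z_t(\mathbf{S})$ leaves an empirical-allocation-weighted inner minimum over alternative cuts; since $c_{\mathbf{S}}(\mathcal{D}^*)$ is by definition the supremum over \emph{all} local allocations of exactly that minimum, the realized empirical allocation can do no better, so $Z_t(\mathbf{S}) \leq N_t(\mathbf{S})[c_{\mathbf{S}}(\mathcal{D}^*) + o(1)]$ once $\Bar{P}_{\mathbf{s},t}\to P^{\mathcal{D}^*}_{\mathbf{s}}$. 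The upper tracking bound of \cref{lemma: track} gives $N_t(\mathbf{S}) \leq \sum_i \gamma_{\mathbf{S},i} + O(\sqrt{t})$, and with $\gamma_{\mathbf{S},t}\to\gamma^*_{\mathbf{S}}$ this gives $Z_t(\mathbf{S}) \leq t\,\gamma^*_{\mathbf{S}} c_{\mathbf{S}}(\mathcal{D}^*) + o(t) = t\,\underline{c}(\mathcal{D}^*) + o(t)$ for every $\mathbf{S}$, hence $d_t \leq t\,\underline{c}(\mathcal{D}^*) + o(t)$, completing the sandwich and the rate equality.

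The step I expect to be the main obstacle is justifying $c_t(\mathbf{S}) \to c_{\mathbf{S}}(\mathcal{D}^*)$ and the resulting convergence $\gamma_{\mathbf{S},t}\to\gamma^*_{\mathbf{S}}$, because the global weights are computed from the \emph{realized}, noisy per-round local rewards rather than from oracle difficulties. This requires showing that each local online max-min game — learner playing AdaHedge against an adversary that best-responds with $\mathsf{C}'_t(\mathbf{S})$ — has its time-averaged value converge to the population game value $c_{\mathbf{S}}(\mathcal{D}^*)$, and that this convergence survives being fed into the outer reweighting without any node set being asymptotically starved (so that the common rate $\underline{c}(\mathcal{D}^*)$ is attained simultaneously at the $\min_{\mathbf{S}}$). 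Coupling the inner AdaHedge dynamics with the outer balancing is what makes the two-sided control of $d_t$ delicate; everything downstream (the $f_t(x)\asymp e^{-x}$ bookkeeping translating $d_t \asymp t\,\underline{c}(\mathcal{D}^*)$ into $\expt[\tau_\delta]\asymp \log(1/\delta)/\underline{c}(\mathcal{D}^*)$) is routine.
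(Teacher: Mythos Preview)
Your lower-bound direction (the $\liminf \geq \underline{c}(\mathcal{D}^*)$ half and $\prob(\tau_\delta = \infty) = 0$) matches the paper's architecture step for step: first show $c_t(\mathbf{S}) \to c_{\mathbf{S}}(\mathcal{D}^*)$ via the local AdaHedge regret and the local duality in \cref{lemma: duality}, then introduce an integrable random time $\tau_{p,\gamma}(\epsilon)$ after which $\gamma_{\mathbf{S},t}$ is $\epsilon$-close to $\gamma^*_{\mathbf{S}}$, then run the chain tracking $\to$ $\alpha_{\mathbf{s},t} = \gamma_{\mathbf{S},t}\xi^{\mathbf{S}}_{\mathbf{s},t}$ $\to$ local AdaHedge $\to$ balancing identity $\gamma^*_{\mathbf{S}} c_{\mathbf{S}}(\mathcal{D}^*) = \underline{c}(\mathcal{D}^*)$ to get $d_T \geq T[\underline{c}(\mathcal{D}^*) - O(u(\epsilon)+\epsilon D)] - o(T)$. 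The obstacle you flag (stabilizing $c_t(\mathbf{S})$ so that the outer weights converge and no node set is starved) is exactly the step the paper isolates first, before running the main chain on $d_T$.

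You go beyond the paper's sketch on one point. The paper only establishes $\liminf_{\delta\to 0}\log(1/\delta)/\expt[\tau_\delta] \geq \underline{c}(\mathcal{D}^*)$ and stops; the stated equality requires the reverse inequality, and as you correctly observe it cannot be pulled from \cref{th: lowerbound}, which only caps the rate at the looser $c(\mathcal{D}^*)$. Your direct upper bound on $d_t$ --- $Z_t(\mathbf{S}) \leq N_t(\mathbf{S})[c_{\mathbf{S}}(\mathcal{D}^*)+o(1)]$ because the realized empirical allocation $N_t(\mathbf{s})/N_t(\mathbf{S})$ cannot exceed the supremum defining $c_{\mathbf{S}}(\mathcal{D}^*)$, followed by the upper half of \cref{lemma: track} and $\gamma_{\mathbf{S},t}\to\gamma^*_{\mathbf{S}}$ --- is a valid and clean way to close that missing direction.
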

\begin{proof}[Sketch of Proof]
With the practical algorithm $\mathcal{A}_{\mathsf{P}}$, instead of searching $\mathcal{D}^*$ in $[\mathcal{\mathcal{C}}]$, we search $\tupdef{\mathsf{C}^*(\mathbf{S})}{\mathbf{S} \in \mathcal{S}}$ in the space $\tupdef{\mathsf{CF}(\mathbf{S})}{\mathbf{S} \in\mathcal{S}}$. Recall 
\[\mathsf{C}'_t(\mathbf{S}) \in \argmin_{\mathsf{C}(\mathbf{S}) \neq \mathsf{C}_t^*(\mathbf{S})} \sum_{\mathbf{s} \in \omega(\mathbf{S})} \xi^{\mathbf{S}}_{\mathbf{s}, t} \kl{\Bar{P}_{\mathbf{s}, t}}{P^{\mathsf{C}(\mathbf{S})}_{\mathbf{s}}},\] 
and $\tau_{p}(\epsilon) = \max \setdef{t \in \natural_{>0}}{ \exists \mathbf{s}: \norm{\Bar{P}_{\mathbf{s}, t} - P_{\mathbf{s}} }_1 \geq \epsilon}$. For each $\mathbf{S} \in \mathcal{S}$, we have
    \begin{equation}\label{preq3: c_t}
        \begin{split}
            T c_T (\mathbf{S}) =&\sum_{t= 1}^T \sum_{\mathbf{s} \in \omega(\mathbf{S})} \xi^{\mathbf{S}}_{\mathbf{s}, i} \kl{\Bar{P}_{\mathbf{s}, t}}{P^{\mathsf{C}'_t(\mathbf{S})}_{\mathbf{s}}} \\
            \geq& \sum_{t=\tau_p(\epsilon)+1}^T \sum_{\mathbf{s} \in \omega(\mathbf{S})} \xi^{\mathbf{S}}_{\mathbf{s}, i} \kl{\Bar{P}_{\mathbf{s}, t}}{P^{\mathsf{C}'_t(\mathbf{S})}_{\mathbf{s}}} - \tau_p(\epsilon) D \\
            \geq& \max_{\mathbf{s} \in \omega(\mathbf{S})} \sum_{t=\tau_p(\epsilon)+1}^T \KL{\Bar{P}_{\mathbf{s}, t}}{P^{\mathsf{C}'_t(\mathbf{S})}_{\mathbf{s}}} - \supscr{R}{AH}_T - \tau_p(\epsilon) D \\
            \geq& \max_{\mathbf{s} \in \omega(\mathbf{S})} \sum_{t=\tau_p(\epsilon)+1}^T \KL{P^{\mathcal{D}^*}_{\mathbf{s}}}{P^{\mathsf{C}'_t(\mathbf{S})}_{\mathbf{s}}} - \supscr{R}{AH}_T - \tau_p(\epsilon) D - [T-\tau_p(\epsilon)] u(\epsilon),
        \end{split}
    \end{equation}

where we apply~\cref{lemma: adh} in the second inequality. Let $N_{\tau_p(\epsilon): T}(\mathsf{C}(\mathbf{S})) = \sum_{t=\tau_p(\epsilon)+1}^T \indicator{\mathsf{C}'_t(\mathbf{S}) = \mathsf{C}(\mathbf{S})}$. We apply the second inequality in~\cref{lemma: duality} to get 
\begin{align}
    &\max_{\mathbf{s} \in \omega(\mathbf{S})} \sum_{t=\tau_p(\epsilon)+1}^T \KL{P^{\mathcal{D}^*}_{\mathbf{s}}}{P^{\mathsf{C}'_t(\mathbf{S})}_{\mathbf{s}}}  \nonumber\\
    =& \max_{\mathbf{s} \in \omega(\mathbf{S})} \sum_{\mathsf{C}(\mathbf{S}) \in \mathsf{CF}(\mathbf{S})} N_{\tau_p(\epsilon): T}(\mathsf{C}(\mathbf{S})) \KL{P^{\mathcal{D}^*}_{\mathbf{s}}}{P^{\mathsf{C}(\mathbf{S})}_{\mathbf{s}}} \nonumber\\
    \geq& [T-\tau_p(\epsilon)] \inf_{\boldsymbol{\zeta}^{\mathbf{S}} \in \Delta(\mathsf{CF}(\mathbf{S})\setminus \mathsf{C}^*(\mathbf{S}))} \max_{\mathbf{s} \in \omega(\mathbf{S})} \sum_{\mathsf{C}(\mathbf{S}) \in \mathsf{CF}(\mathbf{S}) \setminus \mathsf{C}^*(\mathbf{S})} \zeta^{\mathbf{S}}_{\mathsf{C}(\mathbf{S})} \kl{P^{\mathsf{C}^*(\mathbf{S})}_{\mathbf{s}}}{P^{\mathsf{C}(\mathbf{S})}_{\mathbf{s}, t}} \nonumber\\
    =&[T-\tau_p(\epsilon)] [ c_{\mathbf{S}}(\mathcal{D}^*) - u(\epsilon)], \nonumber
\end{align}
where $c_{\mathbf{S}}(\mathcal{D}^*)$ is defined in~\eqref{eqdef: local}. Plugging the above result into~\eqref{preq3: c_t}, we get
\begin{equation}\label{preq3: fn}
    T c_T (\mathbf{S}) \geq  [T-\tau_p(\epsilon)] [ c_{\mathbf{S}}(\mathcal{D}^*) - u(\epsilon)]  - \supscr{R}{AH}_T - \tau_p(\epsilon) D.
\end{equation}
Since $\supscr{R}{AH}_T \leq \sqrt{D T \ln K} + D\left(\frac{4}{3} \ln K + 2\right)$, \eqref{preq3: fn} indicates $c_t(\mathbf{S}) / t \rightarrow c_{\mathbf{S}}(\mathcal{D}^*) - u(\epsilon)$ as $t \rightarrow \infty$. Furthermore, since $ \gamma_{\mathbf{S},t} \propto {1}/{c_t (\mathbf{S})} $, we can define a stopping time
\begin{equation} \label{appdef: taupgam}
    \tau_{p,\gamma}(\epsilon) := \max \Bigsetdef{t \geq \tau_p(\epsilon)}{ \sum_{\mathbf{S} \in \mathcal{S}} \abs{\gamma_{\mathbf{S},t} - \gamma^*_{\mathbf{S}}}  \geq \epsilon}.
\end{equation}
With $\expt[\tau_{P}(\epsilon)] \leq c(\epsilon)$ according to~\cref{corr: boundedexpt}, we have $\expt[\tau_{P,\gamma}(\epsilon)] \leq c'(\epsilon)$ for some $c'(\epsilon)\leq \infty$.

Similar to the proof of~\cref{lemma: ex_ub}, by the continuity of KL-divergence, there exists a small enough constant $c >0$ such that if $\forall \mathbf{s} \in \mathcal{I}: \norm{\Bar{P}_{\mathbf{s},t} - P^{\mathcal{D^*}}_{\mathbf{s}} }_1 \leq c$ holds for all $\mathbf{s} \in \mathcal{I}$, each $\mathsf{C}^*_t(\mathbf{S}) = \mathsf{C}^*(\mathbf{S})$ for all $\mathbf{S} \in \mathcal{S}$ can be uniquely determined. 
 Therefore, for any $\epsilon \in (0,c]$, , $\forall \mathbf{S} \in \mathcal{S}: \mathsf{C}^*_T(\mathbf{S}) = \mathsf{C}^*(\mathbf{S})$, if $T \geq \tau_{p} (\epsilon)$. It follows from~\eqref{appeq: d_t} that for $T \geq \tau_{p} (\epsilon)$,
\begin{align}
    d_T &= \min_{\tupdef{\mathsf{C}(\mathbf{S})}{\mathbf{S} \in \mathcal{S}} \neq \tupdef{\mathsf{C}_t^*(\mathbf{S})}{\mathbf{S} \in \mathcal{S}}} \sum_{\mathbf{s} \in \mathcal{I}} N_t(\mathbf{s}) \kl{\Bar{P}_{\mathbf{s}, t}}{P^{\mathsf{C}(\mathbf{S})}_{\mathbf{s}}} \nonumber\\
    &= \min_{\tupdef{\mathsf{C}(\mathbf{S})}{\mathbf{S} \in \mathcal{S}} \neq \tupdef{\mathsf{C}^*(\mathbf{S})}{\mathbf{S} \in \mathcal{S}}} \sum_{\mathbf{s} \in \mathcal{I}} N_t(\mathbf{s}) \kl{\Bar{P}_{\mathbf{s}, t}}{P^{\mathsf{C}(\mathbf{S})}_{\mathbf{s}}} \nonumber\\
     &\geq \min_{\mathbf{S} \in \mathcal{S}} \min_{\mathsf{C}(\mathbf{S}) \neq \mathsf{C}^*(\mathbf{S})} \sum_{\mathbf{s} \in \omega(\mathbf{S})}  N_T(\mathbf{s})  \KL{\Bar{P}_{\mathbf{s}, T}}{P^{\mathsf{C}(\mathbf{S})}_{\mathbf{s}}} \nonumber\\
    & \geq \min_{\mathbf{S} \in \mathcal{S}}  \min_{\mathsf{C}(\mathbf{S}) \neq \mathsf{C}^*(\mathbf{S})} \Bigg [ \sum_{\mathbf{s} \in \omega(\mathbf{S})}  \sum_{t=1}^{T} \alpha_{\mathbf{s},t} \KL{\Bar{P}_{\mathbf{s}, T}}{P^{\mathsf{C}(\mathbf{S}}_{\mathbf{s}}} - \abs{\omega(\mathbf{S})} (\abs{\mathcal{I}} - 1) (\sqrt{t} + 2) D \Bigg ] \nonumber\\
    &\geq \min_{\mathbf{S} \in \mathcal{S}}  \min_{\mathsf{C}(\mathbf{S}) \neq \mathsf{C}^*(\mathbf{S})} \sum_{t=\tau_{p,\gamma}(\epsilon)+1}^T  \sum_{\mathbf{s} \in \omega(\mathbf{S})} \alpha_{\mathbf{s},t} \kl{\Bar{P}_{\mathbf{s}, T}}{P^{\mathsf{C}(\mathbf{S}}_{\mathbf{s}}} - \abs{\mathcal{I}} (\abs{\mathcal{I}} - 1) (\sqrt{t} + 2) D \label{preq3: dt1}
\end{align}
where we apply~\cref{lemma: track} in the second inequality. Since $\alpha_{\mathbf{s}} = \gamma_{\mathbf{S},t} \xi^{\mathbf{S}}_{\mathbf{s},t}$
\begin{align}
    &\min_{\mathbf{S} \in \mathcal{S}}  \min_{\mathsf{C}(\mathbf{S}) \neq \mathsf{C}^*(\mathbf{S})} \sum_{t=\tau_{p,\gamma}(\epsilon)+1}^T  \sum_{\mathbf{s} \in \omega(\mathbf{S})} \alpha_{\mathbf{s},t} \kl{\Bar{P}_{\mathbf{s}, T}}{P^{\mathsf{C}(\mathbf{S}}_{\mathbf{s}}} \nonumber\\
    = & \min_{\mathbf{S} \in \mathcal{S}}  \min_{\mathsf{C}(\mathbf{S}) \neq \mathsf{C}^*(\mathbf{S})} \sum_{t=\tau_{p,\gamma}(\epsilon)+1}^T \gamma_{\mathbf{S},t} \sum_{\mathbf{s} \in \omega(\mathbf{S})} \xi^{\mathbf{S}}_{\mathbf{s},t} \kl{\Bar{P}_{\mathbf{s}, T}}{P^{\mathsf{C}(\mathbf{S}}_{\mathbf{s}}} \nonumber\\
    \geq & \min_{\mathbf{S} \in \mathcal{S}}  \gamma_{\mathbf{S}}^* \min_{\mathsf{C}(\mathbf{S}) \neq \mathsf{C}^*(\mathbf{S})} \sum_{t=\tau_{p,\gamma}(\epsilon)+1}^T  \sum_{\mathbf{s} \in \omega(\mathbf{S})} \xi^{\mathbf{S}}_{\mathbf{s},t} \kl{\Bar{P}_{\mathbf{s}, T}}{P^{\mathsf{C}(\mathbf{S}}_{\mathbf{s}}} - [T-\tau_{p,\gamma}(\epsilon)]\epsilon D \nonumber \\
    \geq & \min_{\mathbf{S} \in \mathcal{S}}  \gamma_{\mathbf{S}}^* \min_{\mathsf{C}(\mathbf{S}) \neq \mathsf{C}^*(\mathbf{S})} \sum_{t=\tau_{p,\gamma}(\epsilon)+1}^T  \sum_{\mathbf{s} \in \omega(\mathbf{S})} \xi^{\mathbf{S}}_{\mathbf{s},t} \kl{\Bar{P}_{\mathbf{s}, t}}{P^{\mathsf{C}(\mathbf{S}}_{\mathbf{s}}} -2[T-\tau_{p,\gamma}(\epsilon)] u(\epsilon) - [T-\tau_{p,\gamma}(\epsilon)]\epsilon D \label{preq3: dt}
\end{align}
where the first inequality is due to definition~\eqref{appdef: taupgam}. With~\eqref{preq3: fn}, we have
\begin{align}
     \min_{\mathsf{C}(\mathbf{S}) \neq \mathsf{C}^*(\mathbf{S})} \sum_{t=\tau_{p,\gamma}(\epsilon)+1}^T  \sum_{\mathbf{s} \in \omega(\mathbf{S})} \xi^{\mathbf{S}}_{\mathbf{s},t} \kl{\Bar{P}_{\mathbf{s}, T}}{P^{\mathsf{C}(\mathbf{S}}_{\mathbf{s}}} &\geq T c_T(\mathbf{S})  -  \tau_{p,\gamma}(\epsilon) D \nonumber\\
    &\geq [T-\tau_{p,\gamma}(\epsilon)] [ c_{\mathbf{S}}(\mathcal{D}^*) - u(\epsilon)]  - \supscr{R}{AH}_T - \tau_p(\epsilon) D  -  \tau_{p,\gamma}(\epsilon) D \label{preq3: dt2}
\end{align}
Putting together~\eqref{preq3: dt1},~\eqref{preq3: dt} and~\eqref{preq3: dt2}, we apply the third equality in~\cref{lemma: duality} to get
\[d_T \geq [T-\tau_{p,\gamma}(\epsilon)] [ \underline{c}(\mathcal{D}^*) - 3 u(\epsilon) -\epsilon D]  - \supscr{R}{AH}_T - \tau_{p,\gamma}(\epsilon) D  -  \tau_{p,\gamma}(\epsilon) D - \abs{\mathcal{I}} (\abs{\mathcal{I}} - 1) (\sqrt{t} + 2) D:= \underline{d}_t.\]
The remaining proof is similar to that of~\cref{lemma: ex_ub}. Define time
\[ \Bar{\tau}_{\delta} = \max_{\tau} \Big \{\tau \in \natural_{>0}: d_{\tau} \geq \abs{\mathcal{I}} (\abs{\omega(V)}-1), f_{\tau}(\underline{d}_{\tau}) \leq \delta \Big\}.\]
According to the termination condition of the causal discovery algorithm, the algorithm terminates at ${\tau}_{\delta} \leq \Bar{\tau}_{\delta}$. Since~\cref{corr: boundedexpt} shows that $\expt[\tau_p(\epsilon)]$ is bounded, so is $\expt[\tau_{P,\gamma}(\epsilon)]$. Accordingly, $\prob(\expt[\tau_p(\epsilon)] = \infty) = 0$, and we have
\[\prob({\tau}_{\delta} = \infty) \leq \prob(\Bar{\tau}_{\delta} = \infty) = 0.\]
With~\cref{lemma: adh}, notice that $\eqref{preq 1: dT4}=T[\underline{c}(\mathcal{D}^*)-3u(\epsilon) - \epsilon D]+o(T)$ and $f_t(x)$ is dominated by $\exp(-x)$. For any $\epsilon \in (0,c]$, it satisfies that
\[\lim_{\delta \rightarrow 0} \frac{\log (1/\delta)}{ \expt[\tau_{\delta}]} \geq \frac{\log (1/\delta)}{ \expt[\Bar{\tau}_{\delta}]} = \underline{c}(\mathcal{D}^*)-3u(\epsilon) -\epsilon D,\]
The continuity of KL-divergence ensures that $\lim_{\epsilon \rightarrow 0} u(\epsilon) = 0$. Then we have that
\[\lim_{\delta \rightarrow 0} \frac{\log (1/\delta)}{ \expt[\tau_{\delta}]} \geq \underline{c}(\mathcal{D}^*).\]
We conclude the proof.

\end{proof}

\subsection{Additional Experiment with limited observational data}

Although our setup requires access to the true observational distribution and focuses on minimizing the number of interventional samples to learn the true Directed Acyclic Graph (DAG) from the Markov equivalence class or CPDAG. In cases where observational data is limited—i.e., we don't have access to the true interventional distribution—the CPDAG can't be learned accurately and might differ from the true CPDAG. In this scenario, we might not be able to learn the true DAG using any existing causal discovery algorithm, but we can still run and test our algorithm and compare it with existing baselines. We test our algorithm on randomly sampled DAGs with $10$ nodes and a density parameter $\rho = 0.2$ starting with incorrect CPDAGs. All the discovery algorithms are tested on 50 randomly generated DAGs, similar to the experiments section in the main paper. The results are plotted in Figure \ref{add_exp}.

\begin{figure}[H]
\centering
\includegraphics[width=8.2cm,height = 4.5cm]{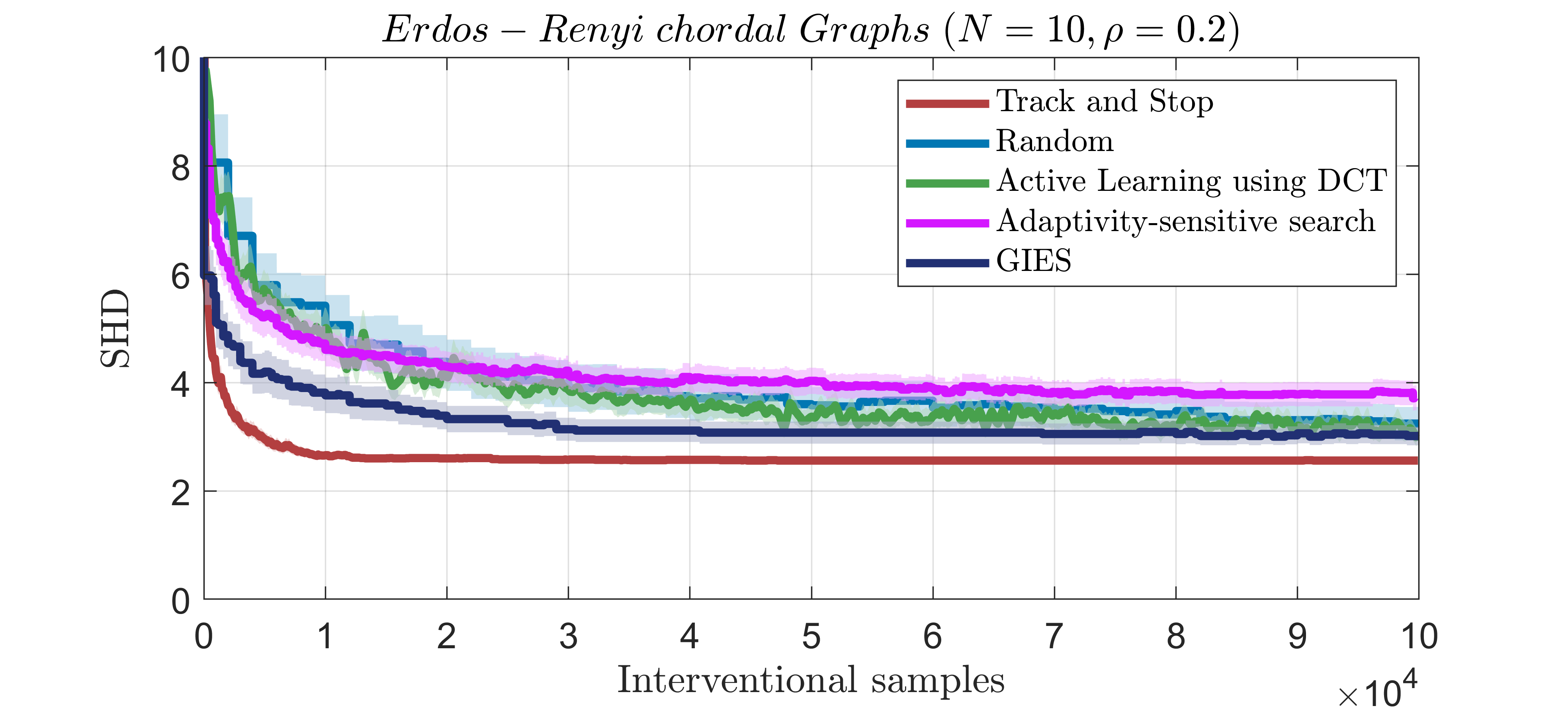}
\caption{SHD versus interventional samples for the discovery algorithms for Erdös-Rényi random chordal graphs starting with the incorrect CPDAG.}
\vspace{-1.5em}
\label{add_exp}
\end{figure}

Since we start from a incorrect CPDAG other than the true CPDAG due to limited observational data, the algorithms don't settle to zero SHD. Note that our track-and-stop causal discovery algorithm converges faster in terms of interventional samples compared to other baseline algorithms. In short, the track-and-stop causal discovery algorithm can still be employed to draw some reasonable conclusions about the data generation process with limited observational data.

\end{document}